\documentclass{article} 
\usepackage{iclr2025_conference,times}


\usepackage{hyperref}
\usepackage{url}
\usepackage{amsmath,amsthm,verbatim,amssymb,amsfonts,amscd, graphicx, enumitem}
\usepackage{graphics}
\usepackage{centernot}
\theoremstyle{plain}
\newtheorem{theorem}{Theorem}
\newtheorem{corollary}{Corollary}
\newtheorem{lemma}{Lemma}
\newtheorem*{remark}{Remark}
\newtheorem{proposition}{Proposition}

\newtheorem{definition}{Definition}
\newtheorem{assumption}{Assumption}
\newtheorem{property}{Property}

\usepackage{wrapfig}

\setlength{\fboxsep}{0pt}
\setlength{\fboxrule}{.1pt}

\usepackage[bottom]{footmisc}
\usepackage{caption}
\usepackage{subcaption}
\usepackage{helvet}  
\usepackage{courier}  
\usepackage{url}  
\usepackage{graphicx}  
\usepackage{multirow}
\usepackage{amsthm}
\usepackage{color}
\usepackage{MnSymbol}
\usepackage{makecell}
\usepackage{arydshln}
\usepackage{amsmath}

\newcommand{\E}{\mathbb{E}}

\newcommand{\update}[1]{{#1}}

\title{Remove Symmetries to Control Model Expressivity and Improve Optimization}


\author{Liu Ziyin$^{1,2,*}$, Yizhou Xu$^{3,*}$, Isaac Chuang$^{1,4}$\\
$^1$\textit{Research Laboratory of Electronics, Massachusetts Institute of Technology}\\
$^2$\textit{Physics \& Informatics Laboratories, NTT Research}\\
$^3$\textit{Computer and Communication Sciences, \'Ecole Polytechnique F\'ed\'erale de Lausanne}\\
$^4$\textit{Department of Physics, Massachusetts Institute of Technology}
}


%

\iclrfinalcopy 
\begin{document}

\maketitle
\vspace{-3mm}
\begin{abstract}
 When symmetry is present in the loss function, the model is likely to be trapped in a low-capacity state that is sometimes known as a ``collapse." Being trapped in these low-capacity states can be a major obstacle to training across many scenarios where deep learning technology is applied. We first prove two concrete mechanisms through which symmetries lead to reduced capacities and ignored features during training and inference. We then propose a simple and theoretically justified algorithm, \textit{syre}, to remove almost all symmetry-induced low-capacity states in neural networks. When this type of entrapment is especially a concern, removing symmetries with the proposed method is shown to correlate well with improved optimization or performance. A remarkable merit of the proposed method is that it is model-agnostic and does not require any knowledge of the symmetry. 
\end{abstract}

\def\thefootnote{*}\footnotetext{Equal contribution.}\def\thefootnote{\arabic{footnote}}

\vspace{-2mm}
\section{Introduction}
\vspace{-1mm}
The unprecedented scale and complexity of modern neural networks, which contain a vast number of neurons and connections, inherently introduce a high degree of redundancy in model parameters. This complexity in the architecture and the design of loss functions often implies that the loss functions are invariant to various hidden and nonlinear transformations of the model parameters. These invariant transformations, or ``symmetries,'' in the loss function have been extensively documented in the literature, with common examples including permutation, rescaling, scale, and rotation symmetries \citep{simsek2021geometry, entezari2021role, Dinh_SharpMinima, neyshabur2014search, tibshirani2021equivalences,zhao2023improving,godfrey2022symmetries, ziyin2024parameter}.

A unifying perspective to understand how these symmetries affect learning is the framework of reflection symmetries \citep{ziyin2024symmetry}.\footnote{Essentially, this is because the common symmetries in deep learning all contain $\mathbb{Z}_2$ as a subgroup.} A per-sample loss function $\ell$ possesses a $P$-reflection symmetry if for all $\theta$, and all data points $x$,
\vspace{-2mm}
\begin{equation}
\vspace{-1mm}
    \ell((I-2P)\theta,x) = \ell(\theta, x).
\end{equation}
The solutions where $\theta = (I-P)\theta$ are the symmetric solutions with respect to $P$. It has been shown that almost all models contain quite a few reflection symmetries. Permutation, rescaling, scale, and rotation symmetries all imply the existence of one or multiple reflection symmetries in the loss.

Recent literature has shown that symmetries in the loss function of neural networks often lead to the formation of low-capacity saddle points within the loss landscape \citep{fukumizu1996regularity, li2019symmetry}. These saddle points are located at the symmetric solutions and often possess a lower capacity than the minimizers of the loss. When a model encounters these saddle points during training, the model parameters are not only slow to escape them but also attracted to these solutions because these the gradient noise also vanish close to these saddles \citep{chen2023stochastic}. Essentially, the model's learning process stagnates, and it fails to achieve optimal performance due to reduced capacity. However, while many works have characterized the dynamical properties of training algorithms close to symmetric solutions, no methods are known to enable full escape from them.

Because these low-capacity saddles are created by symmetries, we propose a method to explicitly remove these symmetries from the loss functions of neural networks. The method we propose is theoretically justified and only takes one line of code to implement. By removing these symmetries, our method allows neural networks to explore a more diverse set of parameter spaces and access more expressive solutions. The main contributions of this work are:
\begin{enumerate}[noitemsep,topsep=0pt, parsep=0pt,partopsep=0pt, leftmargin=13pt]
    \item We show how discrete symmetries in the model can severely limit the expressivity of neural networks in the form commonly known as ``collapses" (Section~\ref{sec: impairs expressivity});
    \item We propose a simple method that provably removes almost all symmetries in neural networks, without having any knowledge of the symmetry (Section~\ref{sec: main theory});
    \item We apply the method to solve a broad range of practical problems where symmetry-impaired training can be a major concern (Section~\ref{sec: experiment}).
\end{enumerate}
We introduce the notations and problem setting for our work in the next section. Closely related works are discussed in Section~\ref{sec: related work}. All proofs and experimental details are deferred to the appendix.

\vspace{-2mm}
\section{Discrete Symmetries in Neural Networks}
\vspace{-1mm}

\paragraph{Notation.} For a matrix $A$, we use $A^+$ to denote the pseudo-inverse of $A$. For groups $U$ and $G$, $U\lhd G$ denotes that $U$ is a subgroup of $G$. \update{For a vector $\theta$ and matrix $D$, $\|\theta\|^2_D := \theta^T D \theta$ is the norm of $\theta$ with respect to $D$}. $\odot$ denotes the element-wise product between vectors.

Let $f(\theta,x)$ be a function of the model parameters $\theta$ and input data point $x$. For example, $f$ could either be a sample-wise loss function $\ell$ or the model itself. Whenever $f$ satisfies the following condition, we say that $f$ has the $P$-reflection symmetry (general symmetry groups are dealt with in Theorem~\ref{theo: general group} in Section~\ref{sec: main theory}).
\begin{definition}
    Let $P$ be a projection matrix and $\theta'$ be a point. $f$ is said to have the $(\theta', P)$-reflection symmetry if for all $x$ and $\theta$, (1) $f(\theta +\theta', x)= f((I-2P)\theta + \theta',x)$, and (2) $P \theta' = \theta'$.
\end{definition}
The second condition is due to the fact that there is a redundancy in the choice of $\theta'$ when $\theta' \neq 0$. Requiring $P \theta' =\theta'$ removes this redundancy and makes the choice of $\theta'$ unique. Since every projection matrix can be written as a product of a (full-rank or low-rank) matrix $O$ with orthonormal columns, one can write $P=OO^T$ and refer to this symmetry as an $O$ symmetry. In common deep learning scenarios, it is almost always the case that $\theta'=0$ (for example, this holds for the common cases of rescaling symmetries, (double) rotation symmetries, and permutation symmetries, see Theorem 2-4 of \citet{ziyin2024symmetry}). A consequence of $\theta'=0$ is that the symmetric projection $P\theta$ of any $\theta$ always has a smaller norm than $\theta$: thus, a symmetric solution is coupled to the solutions of weight decay, which also favors small-norm solutions. As an example of reflection symmetry, consider a simple tanh network $f(\theta, x) = \theta_1 \tanh( \theta_2 x)$. The model output is invariant to a simultaneous sign flip of $\theta_1$ and $\theta_2$. This corresponds to a reflection symmetry whose projection matrix is the identity $P=((1, 0), (0, 1))$. The symmetric solutions correspond to the trivial state where $\theta_1 = \theta_2 = 0$.

To be more general, we allow $\theta'$ to be nonzero to generalize the theory and method to generic hyperplanes since the purpose of the method is to remove all reflection symmetries that may be hidden or difficult to enumerate. Let us first establish some basic properties of a loss function with $P$-reflection, to gain some intuition (again, proofs are in the appendix).
\begin{proposition}\label{prop: equivalent symmetries}
    Let $f$ have the $(\theta', P)$-symmetry and let $f'(\theta) = f(\theta + \theta^\dagger)$. Then, (1) for any $\theta^\dagger$ such that $P\theta^\dagger = \theta'$, $f(\theta + \theta^\dagger) =f((I-2P)\theta + \theta^\dagger)$, and (2) $f'$ has the $(\theta' - \theta^\dagger, P)$ symmetry.
\end{proposition}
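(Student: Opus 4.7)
The plan is to treat both parts as direct algebraic consequences of the defining relations $f(\eta + \theta') = f((I-2P)\eta + \theta')$ and $P\theta' = \theta'$, using only the projection identities $(I-2P)\theta' = \theta' - 2P\theta' = -\theta'$ and, for part~(1), $(I-2P)\theta^\dagger = \theta^\dagger - 2P\theta^\dagger = \theta^\dagger - 2\theta'$ (this last one uses the hypothesis $P\theta^\dagger = \theta'$).

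For part~(1), I would invoke the defining symmetry of $f$ at the shifted point $\eta := \theta + \theta^\dagger - \theta'$, yielding $f(\theta + \theta^\dagger) = f((I-2P)(\theta + \theta^\dagger - \theta') + \theta')$. Distributing $(I-2P)$ and substituting the two identities above collapses the right-hand side to $(I-2P)\theta + \theta^\dagger$, which is the desired conclusion. Geometrically, a $(\theta', P)$-symmetry is reflection across the affine set $\{\eta : P\eta = \theta'\}$; the hypothesis $P\theta^\dagger = \theta'$ says $\theta^\dagger$ lies in that fixed set, so one may re-center the reflection at $\theta^\dagger$. For part~(2), the cleanest route is to substitute the definition of $f'$ and apply the symmetry of $f$ exactly once:
\[
f'(\theta + (\theta' - \theta^\dagger)) = f(\theta + \theta') = f((I-2P)\theta + \theta') = f'((I-2P)\theta + (\theta' - \theta^\dagger)),
\]
which is the reflection identity for $f'$ with center $\theta' - \theta^\dagger$, valid for any $\theta^\dagger$ and requiring no extra hypothesis.

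The main obstacle is the normalization clause in the definition of reflection symmetry: to meet $P(\theta' - \theta^\dagger) = \theta' - \theta^\dagger$ strictly one would need $P\theta^\dagger = \theta^\dagger$, which is not automatic. The natural fix is to note that the $\ker P$ component of $\theta^\dagger$ contributes only a constant translation that commutes with the reflection in the relative coordinate, so replacing $\theta^\dagger$ by $P\theta^\dagger$ produces a strictly normalized center $\theta' - P\theta^\dagger$ without changing $f'$'s symmetry structure. This is a bookkeeping adjustment rather than a conceptual difficulty, so the overall proof is essentially a short computation driven by the two projection identities.
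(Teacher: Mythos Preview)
Your proposal is correct and follows essentially the same route as the paper: for part~(1) the paper also applies the defining symmetry at the shifted argument and reduces via projection identities (it uses the decomposition $\theta^\dagger = \theta' + (I-P)\theta^\dagger$ together with $(I-2P)(I-P)=I-P$, which is algebraically equivalent to your direct computation of $(I-2P)\theta^\dagger$ and $(I-2P)\theta'$), and for part~(2) your three-line chain is verbatim the paper's argument. Your observation about the normalization clause $P(\theta'-\theta^\dagger)=\theta'-\theta^\dagger$ is a genuine subtlety that the paper's proof also does not address, so you are not missing anything the paper supplies.
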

\vspace{-2mm}
Therefore, requiring $P\theta' = \theta'$ is without loss of generality: it only reduces different manifestations of the symmetry to a unique one and simply shifting the function will not remove any symmetry. This proposition emphasizes the subtle difficulty in removing a symmetry.

\vspace{-2mm}
\section{Related Works}\label{sec: related work}
\vspace{-1mm}

One closely related work is that of \citet{ziyin2024symmetry}, which shows that every reflection symmetry in the model leads to a low-capacity solution that is favored when weight decay is used. This is because the minimizer of the weight decay is coupled with stationary points of the reflection symmetries -- the projection of any parameter to a symmetric subspace always decreases the norm of the parameter, and is thus energetically preferred by weight decay. Our work develops a method to decouple symmetries and weight decay, thus avoiding collapsing into low-capacity states during training. Besides weight decay, an alternative mechanism for this type of capacity loss is gradient-noise induced collapse, which happens when the learning rate - batchsize ratio is high \citep{chen2023stochastic}. 

Contemporarily, \citet{lim2024empirical}  empirically explores how removing symmetries can benefit neural network training and suggests a heuristic for removing symmetries by hold a fraction of the weights unchanged during training. However, the proposed method is only proved to work for explicit permutation symmetries in fully connected layers. This is particularly a problem because most of the symmetries in nonlinear systems are unknown and hidden \citep{cariglia2014hidden, frolov2017black}. In sharp contrast, the technique proposed in our work is both architecture-independent and symmetry-agnostic and provably removes all known and unknown discrete symmetries in the loss.

\vspace{-2mm}
\section{Symmetry Impairs Model Capacity}\label{sec: impairs expressivity}
\vspace{-1mm}

We first show that reflection symmetry directly affects the model capacity. For simplicity, we let $\theta'=0$ for all symmetries. Let $f(x,\theta)\in \mathbb{R}$ be a Taylor-expandable model that contains a $P$-reflection symmetry. Let $\Delta = \theta-\theta_0$. Then, close to any symmetric point $\theta_0$ (any $\theta_0$ for which $P\theta_0 = 0$), for all $x$, \citet{ziyin2024symmetry} showed that 
\vspace{-2mm}
\begin{align}
    f(x,\theta) - f(x,\theta_0)
    =& \underbrace{\nabla_\theta f(x,\theta_0) (I- P)\Delta  + O(\|P\Delta\|)^2}_{\text{Symmetry subspace}} + \underbrace{\frac{1}{2} \Delta^T P H(x) P \Delta  + O( \|\Delta\|^3)}_{\text{Symmetry-broken subspace}},%
    \label{eq:f(theta)-f(theta0)}
    \vspace{-2mm}
\end{align}
where $H(x)$ is the Hessian matrix of $f$. An important feature is that the symmetry subspace is a generic expansion where both odd and even terms are present, and the first order term does not vanish in general. In contrast, in the symmetry-broken subspace, all odd-order terms in the expansion vanish, and the leading order term is the second order. This implies that close to a symmetric solution, escaping from it will be slow, and if at the symmetric solution, it is impossible for gradient descent to leave it. The effect of this entrapment can be quantified by the following two propositions.
\begin{proposition}\label{prop: symmetry removes feature}
    (Symmetry removes feature.) Let $f$ have the $P$-symmetry, and $\theta$ be intialized at $\theta_0$ such that $P\theta_0 =0$. Then, the kernalized model, $g(x,\theta) = \lim_{\lambda\to 0} (\lambda^{-1} f(x, \lambda\theta+\theta_0) - f(x, \theta_0))$, converges to
    \begin{equation}
        \theta^* = A^+ \sum_x \nabla_\theta f(x,\theta_0)^T y(x) 
        \label{eq:LeastSquare}
        \vspace{-1mm}
    \end{equation}
    under GD for a sufficiently small learning rate. Here $A:=(I-P)\sum_x \nabla_\theta f(x,\theta_0)^T\nabla_\theta f(x,\theta_0)(I-P)$ and $A^+$ denotes the Moore–Penrose inverse of $A$.
\end{proposition}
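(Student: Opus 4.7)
The plan is to reduce the kernelized gradient flow to a linear least-squares problem, using the $P$-reflection symmetry to pin down the structure of the Jacobian at the symmetric point $\theta_0$.

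First, I would extract the algebraic constraint that the symmetry places on the Jacobian. Differentiating the identity $f(x,\theta) = f(x,(I-2P)\theta)$ in $\theta$ gives $\nabla_\theta f(x,\theta) = \nabla_\theta f(x,(I-2P)\theta)(I-2P)$; evaluating at $\theta_0$ and using $(I-2P)\theta_0 = \theta_0$ (which follows from $P\theta_0 = 0$) yields $\nabla_\theta f(x,\theta_0)P = 0$, equivalently $P\nabla_\theta f(x,\theta_0)^T = 0$. So every Jacobian row at $\theta_0$ lives in $\mathrm{range}(I-P)$, which is already the key structural feature behind (\ref{eq:f(theta)-f(theta0)}).

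Next, I would unfold the kernelized loss and its gradient. Since $g(x,\theta) = \nabla_\theta f(x,\theta_0)\theta$ is linear in $\theta$, the MSE $L(\theta) = \tfrac12 \sum_x (g(x,\theta) - y(x))^2$ has gradient $\nabla L(\theta) = J\theta - b$ with $J := \sum_x \nabla_\theta f(x,\theta_0)^T \nabla_\theta f(x,\theta_0)$ and $b := \sum_x \nabla_\theta f(x,\theta_0)^T y(x)$. The previous step forces $PJ = JP = 0$ and $Pb = 0$, so $J = (I-P)J(I-P) = A$. Stacking the Jacobians into a matrix $G$ so that $J = G^T G$ and $b = G^T y$, one also gets $b \in \mathrm{range}(G^T) = \mathrm{range}(G^T G) = \mathrm{range}(A)$, which ensures that $A^+b$ is a genuine solution of $A\theta = b$ rather than a least-squares surrogate.

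Finally, I would analyze the recursion $\theta_{t+1} = \theta_t - \eta(A\theta_t - b)$ started at the kernelized-variable representative of $\theta_0$, namely the origin. Because $PA = AP = 0$ and $Pb = 0$, the $P$-component of $\theta_t$ is never updated and stays at zero, while the $(I-P)$-component runs ordinary GD on a positive-semidefinite quadratic; for any $\eta < 2/\lambda_{\max}(A)$ the residual $A\theta_t - b$ decays geometrically to zero, and the unique element of $\mathrm{range}(A)$ with zero residual is $A^+b$. The main obstacle, in my view, is not any single computation but clean bookkeeping: clarifying that ``initialized at $\theta_0$'' in the original model corresponds to zero initialization in the kernelized variable, and verifying $b \in \mathrm{range}(A)$ so that the Moore--Penrose inverse selects the right attractor; once these are in place, the rest reduces to textbook linear-quadratic analysis.
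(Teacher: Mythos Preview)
Your proposal is correct and follows essentially the same route as the paper: linearize at $\theta_0$, recognize the resulting squared-loss problem as $\nabla L(\theta)=A\theta-b$, and argue that GD from the origin converges to $A^+b$. The paper invokes Eq.~\eqref{eq:f(theta)-f(theta0)} to get $g(x,\theta)=\nabla_\theta f(x,\theta_0)(I-P)\theta$ and then writes the limit as the Neumann-type sum $\sum_{k}(I-2\eta A)^k\,2\eta b$, whereas you derive $\nabla_\theta f(x,\theta_0)P=0$ directly and add the explicit check $b\in\mathrm{range}(A)$; these are cosmetic differences, and if anything your bookkeeping (zero initialization in the kernelized variable, $b\in\mathrm{range}(A)$ so the iterates remain in $\mathrm{range}(A)$) fills in points the paper leaves implicit.
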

\vspace{-2mm}

\begin{wrapfigure}{r!}{0.4\linewidth}
\vspace{-2em}
\centering
    \includegraphics[width=0.45\linewidth]{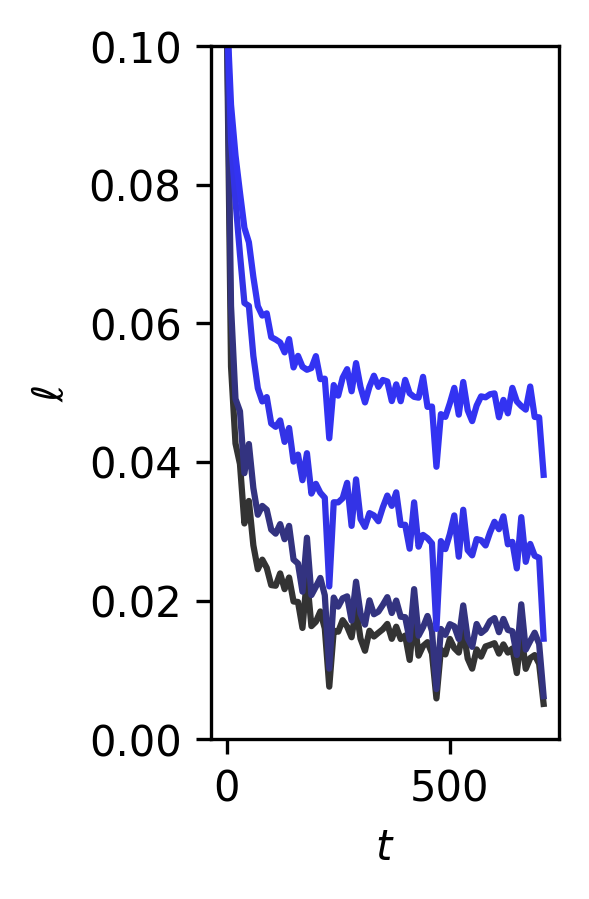}
    \includegraphics[width=0.45\linewidth]{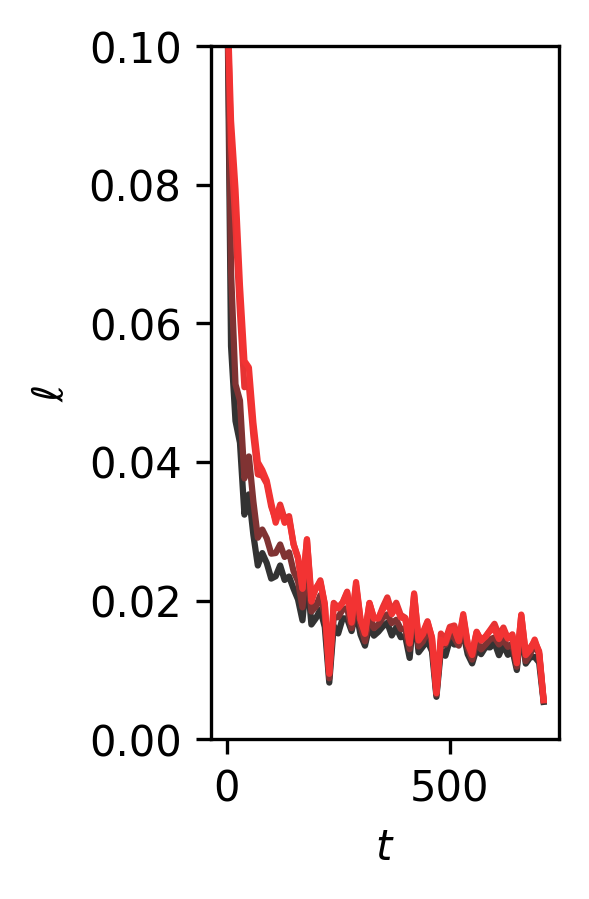}
    \vspace{-1em}
    \caption{\small Training loss $\ell$ as a function of the iteration $t$ for a fully connected network on MNIST. Starting from a low-capacity state, vanilla neural networks are trapped under SGD or Adam training (\textbf{left}). Blacker lines correspond to higher-capacity initializations, where more neurons are away from the permutation-symmetric state. When the symmetries are removed, the capacity of the initialization no longer affects the solution found at the end of the training (\textbf{right}).}
    \label{fig: illustration}
    \vspace{-0em}
\end{wrapfigure}
This means that in the kernel regime\footnote{Technically, this is the lazy training limit \citep{chizat2018lazy}.}, being at a symmetric solution implies that the feature kernel features are being masked by the projection matrix:
\begin{equation}
    \nabla_\theta f(x,\theta_0)  \to (I-P)\nabla_\theta f(x,\theta_0),
\end{equation}
and learning can only happen given these masks. This implies that the model is not using the full feature space that is available to it.

\begin{proposition}\label{prop: symmetry reduces dimension}
    (Symmetry reduces parameter dimension.) Let $f$ have the $P$-symmetry, and $\theta \in\mathbb{R}^d$ be intialized at $\theta_0$ such that $P\theta_0 =0$. Then, for all time steps $t$ under GD or SGD, there exists a model $f'(x,\theta')$ and sequence of parameters $\theta'_t$ such that for all $x$,
    \begin{equation}
        f'(x,\theta_t') = f(x, \theta_t),
    \end{equation}
    where ${\rm dim}(\theta') = d - {\rm rank}(P)$.
\end{proposition}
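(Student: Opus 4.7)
The plan is to exploit the fact that reflection symmetry forces the gradient to be orthogonal to the range of $P$ at any symmetric point, and then argue that the whole trajectory stays in the symmetric subspace $\ker P$, which has dimension $d - \mathrm{rank}(P)$. Restricting to this subspace gives the lower-dimensional model $f'$.

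First I would differentiate the symmetry identity $f((I-2P)\theta,x) = f(\theta,x)$ with respect to $\theta$ to obtain
\begin{equation}
(I-2P)\,\nabla_\theta f((I-2P)\theta,x) \;=\; \nabla_\theta f(\theta,x),
\end{equation}
using that $P$ (and hence $I-2P$) is symmetric. Evaluating at any $\theta$ satisfying $P\theta=0$ gives $(I-2P)\theta=\theta$ and so $(I-2P)\nabla_\theta f(\theta,x) = \nabla_\theta f(\theta,x)$, which is equivalent to $P\,\nabla_\theta f(\theta,x) = 0$. Thus, at every symmetric point, the per-sample gradient lies entirely in $\ker P$.

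Next I would prove by induction that $P\theta_t = 0$ for all $t$ along (S)GD. The base case is the hypothesis $P\theta_0=0$. For the inductive step, the (stochastic) update is $\theta_{t+1} = \theta_t - \eta\, g_t$, where $g_t$ is a (possibly mini-batched) sum of per-sample gradients $\nabla_\theta f(x,\theta_t)$; by the previous step, $P g_t = 0$, and the induction hypothesis gives $P\theta_{t+1} = P\theta_t - \eta P g_t = 0$. Hence the entire trajectory is confined to the linear subspace $V := \ker P \subset \mathbb{R}^d$, whose dimension is exactly $d - \mathrm{rank}(P)$.

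Finally, to produce the reduced model, pick any matrix $U \in \mathbb{R}^{d \times (d-\mathrm{rank}(P))}$ whose columns form an orthonormal basis of $V$, so that $UU^T = I - P$ and $U^T U = I$. Since $\theta_t \in V$, we have $\theta_t = UU^T\theta_t$. Define
\begin{equation}
\theta'_t := U^T \theta_t \in \mathbb{R}^{d - \mathrm{rank}(P)}, \qquad f'(x,\theta') := f(x, U\theta'),
\end{equation}
which immediately yields $f'(x,\theta'_t) = f(x, UU^T\theta_t) = f(x,\theta_t)$ for all $x$ and $t$. (As a bonus, a direct computation shows $\theta'_t$ itself follows (S)GD on $f'$, since $\nabla_{\theta'} f'(x,\theta') = U^T \nabla_\theta f(x,U\theta')$.) The main obstacle is really just the first step: showing $P\nabla_\theta f = 0$ at symmetric points, which rests on cleanly differentiating through the symmetry relation; once that is in hand, the invariance of $\ker P$ under (S)GD and the definition of $f'$ via the isometry $U$ are mechanical.
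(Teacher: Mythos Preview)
Your proof is correct and follows essentially the same route as the paper: show $P\nabla_\theta f(\theta,x)=0$ at any symmetric point, run induction to keep the (S)GD trajectory inside $\ker P$, and then parametrize $\ker P$ by a basis to define the reduced model $f'$. The only cosmetic difference is that the paper cites an external result (Theorem~4 of \citet{ziyin2024symmetry}) for the gradient orthogonality, whereas you derive it directly by differentiating the symmetry identity---your derivation is clean and self-contained.
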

\vspace{-1mm}
The existence of this type of dimension reduction when symmetry is present has also been noticed by previous works in case of permutation symmetry \citep{simsek2021geometry}. Intuitively, the above two results follow from the fact that the symmetric subspaces of reflection symmetries (and any general discrete symmetries) are a linear subspace, and so gradient descent (and thus gradient flow) cannot take the model away from it, despite the discretization error. This means that, essentially, the model is identical to a model with a strictly smaller dimension throughout its training, no matter whether the training is through GD or SGD. When there are multiple symmetries, there is a compounding effect. 
See Figure~\ref{fig: illustration} for an illustration. We initialize a two-layer ReLU neural network on a low-capacity state where a fraction of the hidden neurons are identical (corresponding to the symmetric states of the permutation symmetry) and train with and without removing the symmetries. We see that when the symmetries are removed (with the method proposed in the next section), the model is no longer stuck at these neuron-collapsed solutions. Also, see Section~\ref{app sec: ntk rank} for the evolution of NTK and gradient covariance during training.

\vspace{-2mm}
\section{Removing Symmetry with Static Bias}\label{sec: main theory}
\vspace{-1mm}

Next, we prove that a simple algorithm that involves almost no modification to any deep learning training pipeline can remove almost all such symmetries from the loss without creating new ones. From this section onward, we will consider the case where the function under consideration is the loss function (a per-batch loss or its expectation): $f=\ell$.

\vspace{-2mm}
\subsection{Countable Symmetries}
\vspace{-1mm}
We seek an algorithm that eliminates the reflection symmetries from the loss function $\ell$. \update{This subsection will show that when the number of reflection symmetries in the loss function is countable, one can completely remove them using a simple technique}. The symmetries are required to have the following property and the loss function is assumed to obey assumption~\ref{assumption}.

\begin{property}\label{property: finite set of symmetry} (Enumeratability)
    There exists a countable set of pairs of projection matrices and biases $ S= \{(\theta^\dagger_i, P_i)\}_i^N$ such that $\ell(\theta,x)$ has the $\theta_i^\dagger$-centric $P_i$-reflection symmetry for all $i$. In addition, $\ell$ does not have any $(\theta^\dagger, P)$ symmetry for $(\theta, P)\notin S$.
\end{property}

\begin{assumption}
    There only exists countably many pairs $(c_0,\tilde{\theta})$ such that $g(x) = \ell(\theta,x) - c_0\theta$ contains a $\tilde{\theta}$-centric $P$ symmetry, where we require $Pc_0=c_0$ and $P\tilde{\theta}=\tilde{\theta}$.
    \label{assumption}
\end{assumption}
\vspace{-2mm}
This assumption is satisfied by common neural networks with standard activations. The main purpose of this assumption is to rule out the pathological of a linear or quadratic deterministic objective, which never appears in practice or for which symmetry is not a concern.\footnote{An example that violates this assumption is when $\ell(\theta,x) = c_0^T \theta$. $\ell(\theta,x) - c_0^T \theta$ has infinitely many reflection symmetries everywhere and for every data point because it is a constant function.}

For symmetry removal, we propose to utilize the following alternative loss function. Let $\theta_0$ be drawn from a Gaussian distribution with variance $\sigma_0$ and $\ell$ be the original loss function:
\begin{equation}\label{eq: syre}
    \ell_{\rm r}(\theta, x) = \ell(\theta + \theta_0) + \gamma \|\theta\|^2.
\end{equation}
$\gamma$ is nothing but the standard weight decay. We will see that using a static bias along with weight decay is essential for the method to work. We find that with unit probability, the loss function $\ell_{\rm r}$ contains no reflection symmetry:
\begin{theorem}\label{theo: simple symmetry}
    Let $\ell$ satisfy Property~\ref{property: finite set of symmetry} and Assumption \ref{assumption}. Then, with probability $1$ over the sampling of $\theta_0$, there exists no projection matrix $P$ and reflection point $\theta'$ such that $\ell_{\rm r}$ has the $(\theta',P)$-symmetry.
\end{theorem}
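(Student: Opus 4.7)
The plan is to reduce the existence of a $(\theta', P)$-symmetry of $\ell_{\rm r}$ to a hypothesis about symmetries of a linearly-tilted version of the original loss $\ell$, then apply Assumption~\ref{assumption}, and conclude with a measure-zero argument over the Gaussian draw of $\theta_0$.

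First I would carry out the algebraic reduction. Assume $\ell_{\rm r}$ has the $(\theta', P)$-symmetry, i.e.\ $\ell_{\rm r}(\theta+\theta', x) = \ell_{\rm r}((I-2P)\theta+\theta', x)$ for all $\theta, x$, with $P\theta' = \theta'$. Using that $(I-2P)$ is orthogonal and $P\theta' = \theta'$, a short computation gives $\|(I-2P)\theta + \theta'\|^2 - \|\theta+\theta'\|^2 = -4\theta^T\theta'$, so the weight-decay term contributes a linear tilt in $\theta$ and the symmetry condition rearranges to
\[
\ell(\theta + \theta' + \theta_0, x) - \ell((I-2P)\theta + \theta' + \theta_0, x) \;=\; -4\gamma\,\theta^T\theta'.
\]
Changing variables via $v = \theta + \theta' + \theta_0$ and introducing $\tilde\theta := P\theta_0 + \theta'$ (which satisfies $P\tilde\theta = \tilde\theta$) together with $c_0 := -2\gamma\theta'$ (which satisfies $Pc_0 = c_0$), the right-hand argument becomes $(I-2P)v + 2\tilde\theta$ and the identity is exactly the statement that $g(v,x) := \ell(v,x) - c_0^T v$ satisfies $g(v,x) = g((I-2P)v + 2\tilde\theta, x)$, i.e.\ $g$ carries the $(\tilde\theta, P)$-symmetry.

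Next I would invoke Assumption~\ref{assumption}: the set of triples $(c_0, \tilde\theta, P)$ with $Pc_0 = c_0$, $P\tilde\theta = \tilde\theta$, and for which $\ell(\theta,x) - c_0^T\theta$ admits the $(\tilde\theta, P)$-symmetry is at most countable. For each such triple (with $P\ne 0$, since $P=0$ gives only the trivial identity), the reduction forces $\theta' = -c_0/(2\gamma)$, which is pinned down by $c_0$ alone, and then forces the affine constraint $P\theta_0 = \tilde\theta - \theta'$. The solution set of this constraint is a proper affine subspace of $\mathbb{R}^d$ of codimension $\mathrm{rank}(P) \ge 1$, hence has Gaussian measure zero. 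Taking a countable union over all candidate triples, the set of ``bad'' $\theta_0$ for which $\ell_{\rm r}$ admits any $(\theta',P)$-symmetry is itself of measure zero, which is the claim.

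The main obstacle is the change-of-variables step: one has to pick the right linear tilt $c_0 = -2\gamma\theta'$ so that the weight-decay penalty is exactly absorbed into a linear perturbation of $\ell$, and then verify that the additive constants produced on the tilted side (namely $-2c_0^T \tilde\theta$) automatically cancel those coming from $\|\theta+\theta'\|^2$ and from the Gaussian shift $\theta_0^T\theta'$, without generating any extra constraint on $\theta_0$ beyond $P\theta_0 = \tilde\theta - \theta'$. Once this equivalence is clean, the remainder is a standard ``countable union of proper affine subspaces is null under a Gaussian measure'' argument driven by Assumption~\ref{assumption}.
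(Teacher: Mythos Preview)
Your reduction and measure-zero argument match the paper's approach: absorb the weight-decay discrepancy into a linear tilt $c_0^T\theta$, identify the resulting $(\tilde\theta,P)$-symmetry of $\ell - c_0^T\theta$, and then invoke Assumption~\ref{assumption} so that the bad $\theta_0$'s lie in a countable union of proper affine subspaces. The paper's version differs only cosmetically, splitting into the cases $\theta'=0$ (handled directly via Property~\ref{property: finite set of symmetry}) and $\theta'\neq 0$ (handled via an auxiliary decomposition lemma equivalent to your explicit change of variables); one small caution is that Assumption~\ref{assumption} as written bounds pairs $(c_0,\tilde\theta)$ rather than triples including $P$, though the paper's own proof treats this point with the same informality.
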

\vspace{-2mm}
The core mechanism of this theorem is decoupling of the solutions of the symmetries from the solutions of the weight decay. With weight decay, a solution with a small norm is favored, whereas with a random bias, the symmetric solutions are shifted by a small constant and no longer overlap with solutions that have a small norm. Equivalently, this method can be implemented as a random bias in the $L_2$ regularization: $ \ell_{\rm r}(\theta, x) = \ell(\theta) + \gamma \|\theta+ \theta_0\|^2$. This is a result of the fact that simple translation does not change the nature of the function. There are two remarkable parts of the theorem: (1) it not only removes all existing symmetries but also guarantees that there are no remaining ones; (2) the proof works as long as the variance of $\theta_0$ is nonzero. This hints at the possibility of using a small and random $\theta_0$, which removes all symmetries in principle and also does not essentially affect the solutions of the original objective. In this work, we will refer to the method in Eq.~\eqref{eq: syre} as \textit{syre}, an abbreviation for ``symmetry removal." 

\vspace{-2mm}
\subsection{Uncountably Many Symmetries}
\vspace{-2mm}
In deep learning, it is possible for the model to simultaneously contain infinitely many reflection symmetries. This happens, for example, when the model parameters have the rotation symmetry or the double rotation symmetry (common in self-supervised learning problems \cite{ziyin2023what} or transformers). It turns out that adding a static bias and weight decay is not sufficient to remove all symmetries, \update{but we will show that a simple modification would suffice}. 

We propose to train on the following alternative loss instead, where $ar$ stands for ``advanced removal":
\begin{equation}\label{eq: ar}
    \ell_{\rm ar}(\theta) = \ell(\theta + \theta_0) + \gamma \|\theta\|_D^2,
\end{equation}
where $D$ is a positive diagonal matrix in which all diagonal elements of $D$ are different. The simplest way to achieve such a $D$ is to set $D_{ii}\sim {\rm Uniform}(1-\epsilon, 1+\epsilon)$, where $\epsilon$ is a small quantity.

\begin{theorem}\label{theo: infinite symmetry}
    Any $(\theta',P)$-symmetry that $\ell_{\rm ar}$ satisfies obeys: (1) $P\theta_0 = \theta'$ (2) and $PD=DP$.
\end{theorem}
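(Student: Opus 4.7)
My plan is to unpack the hypothesised symmetry $\ell_{\rm ar}(\theta+\theta') = \ell_{\rm ar}((I-2P)\theta+\theta')$ by substituting the definition $\ell_{\rm ar}(\theta) = \ell(\theta+\theta_0)+\gamma\theta^T D\theta$ and separating the $\ell$-dependent part from the quadratic regulariser. Expanding $\|\cdot\|_D^2$ on both sides and cancelling the common $\theta'^T D\theta'$ gives
\[
\ell(\theta+\theta'+\theta_0) - \ell((I-2P)\theta+\theta'+\theta_0) = \gamma \theta^T M \theta - 4\gamma\theta^T PD\theta',
\]
where $M := (I-2P)D(I-2P) - D = -2PD - 2DP + 4PDP$. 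A short projector manipulation (multiplying the identity $2PDP = PD + DP$ by $P$ on either side and using $P^2 = P$) shows that $M = 0$ is equivalent to $PD = DP$.

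Next I would reparametrise the variable as $\psi := \theta+\theta'+\theta_0$ and define $\tilde\theta := \theta' + P\theta_0$; the latter automatically satisfies $P\tilde\theta = \tilde\theta$, and the right-hand side reflection of $\psi$ is $(I-2P)\psi + 2\tilde\theta$. The symmetry equation then becomes the statement that $\ell$ differs between $\psi$ and its $\tilde\theta$-reflection by a quadratic-plus-linear polynomial in $\psi$ with leading quadratic coefficient $\gamma M$. By Assumption~1 the admissible pairs $(c_0,\tilde\theta)$ for which $\ell - c_0^T\cdot$ has a $\tilde\theta$-centric $P$-symmetry are countable; a non-zero purely-quadratic correction cannot be absorbed by a linear subtraction for an uncountable family of $\theta'$, which contradicts the spirit of Assumption~1. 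This forces $\gamma M = 0$, hence $PD = DP$, proving condition~(2).

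With $M = 0$ in hand, the equation collapses to $\ell(\psi+\tilde\theta) - \ell((I-2P)\psi+\tilde\theta) = 2 c_0^T P\psi$ with $c_0 := -2\gamma D\theta'$, so the pair $(c_0,\tilde\theta) = (-2\gamma D\theta',\, \theta'+P\theta_0)$ must lie in the countable admissible set provided by Assumption~1. Since $\theta' \mapsto (-2\gamma D\theta',\, \theta' + P\theta_0)$ is an injective affine map of $\mathrm{Im}(P)$ (using invertibility of $D$), only countably many $\theta'$ are viable; combined with $\theta_0$ being drawn from a continuous (Gaussian) distribution, almost surely the only value of $\theta'$ that can keep $(c_0,\tilde\theta)$ inside the countable set is the canonical alignment $\theta' = P\theta_0$, for which $\tilde\theta = 2P\theta_0$ and $c_0 = -2\gamma DP\theta_0$. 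This establishes condition~(1).

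The main obstacle is the last step: promoting the countable restriction on $(c_0,\tilde\theta)$ into the pointwise identification $\theta' = P\theta_0$ rather than merely "some countable family of $\theta'$." The clean argument needs to combine Assumption~1 with the genericity of the random bias $\theta_0$, ruling out all coincidental intersections of the affine line $\{(-2\gamma D\theta',\, \theta' + P\theta_0):\theta' \in \mathrm{Im}(P)\}$ with the admissible set. Condition~(2), in contrast, is a transparent algebraic consequence of matching the quadratic-in-$\theta$ coefficients via the projector identity $(I-2P)D(I-2P) = D \iff PD = DP$.
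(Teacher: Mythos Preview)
Your expansion and the projector identity $M=0 \Leftrightarrow PD=DP$ are correct. The genuine gap is in how you establish $M=0$: Assumption~1 only constrains \emph{linear} corrections $c_0^T\theta$, so a nonzero quadratic residual $\gamma\theta^T M\theta$ lies outside its hypotheses, and ``contradicts the spirit of Assumption~1'' is not an argument. The paper does not try to kill the quadratic piece in isolation. It splits on whether the \emph{full} polynomial $4\gamma\,\theta^T PD[(I-P)\theta+\theta']$ vanishes identically (case~1) or not (case~2, dispatched by reference to the Theorem~1 mechanism). Both conclusions of the theorem are extracted from case~1, where matching the quadratic and linear coefficients in $\theta$ yields $PD(I-P)=0$ and $PD\theta'=0$ simultaneously---the first is exactly $PD=DP$, so condition~(2) is pure coefficient-matching, with no appeal to Assumption~1 needed.

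Your attack on condition~(1) also misses the decisive step. Once $PD=DP$ and $PD\theta'=0$ hold, the defining constraint $P\theta'=\theta'$ gives $D\theta'=DP\theta'=PD\theta'=0$, and since $D$ is positive (hence invertible), $\theta'=0$. With the polynomial correction now identically zero, $\ell$ itself satisfies $\ell((I-2P)\theta+\theta'+\theta_0)=\ell(\theta+\theta'+\theta_0)$ for all $\theta$, i.e.\ $\ell$ has a reflection symmetry with canonical centre $P(\theta'+\theta_0)=P\theta_0$; this is the content of condition~(1) in the paper's argument. Your plan---intersecting the affine image $\theta'\mapsto(-2\gamma D\theta',\,\theta'+P\theta_0)$ with the countable admissible set and invoking genericity of $\theta_0$---does not isolate $\theta'=P\theta_0$: that reasoning would rule out \emph{every} $\theta'$ almost surely rather than single one out, and it never uses the invertibility of $D$, which is precisely what collapses $\theta'$ to $0$ and identifies the centre with $P\theta_0$.
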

\vspace{-2mm}
Conditions (1) and (2) implies that there are at most finitely many $(\theta',P)$-symmetry $\ell_{\rm ar}$ can have. When there does not exist any symmetry that satisfies this condition, we have removed all the symmetries. In the worst case where $\ell$ is a constant function, there are $2^N$ symmetries where $N$ is the number of reflection symmetries. If we further assume that every $P$ is associated with at most finitely many $\theta'$, then we, again, remove all symmetries with probability $1$. The easiest way to determine this $D$ matrix is through sampling from a uniform distribution with a variance $\sigma_D \ll 1$.




\vspace{-2mm}
\subsection{Strength of Symmetry Removal}
\vspace{-1mm}
While any level of $\theta_0$ and $\sigma_D$ are sufficient to remove the symmetries, one might want to quantify the degree to which the symmetries are broken. This is especially relevant when the model is located close to a symmetric solution and requires a large gradient to escape from it. Also, a related question that may arise in practice is how large one should choose $\sigma_0$ and $\sigma_D$, which are the variances of $\theta_0$ and $D$. The following theorem gives a quantitative characterization of the degree of symmetry breaking.\footnote{Because $\theta_0$ and $D$ are randomly sampled, the big-$O$ notation $x\in O(z)$ is used to mean that as the scaling parameter tends to infinity, $\exists c_0$ such that $\Pr(\|x\| < c_0 z ) \to 1$. }

\begin{theorem}\label{theo: sensitivity}
    Let the original loss satisfy a $(\theta^*,P)$-symmetry, where $\theta^* \in \mathbb{R}^d$. Then, for any local minimum $\theta \in \Theta(1)$, if $\sigma_D = o(\sigma_0)$ and $P\theta \neq 0$,
    \begin{equation}
        \Delta :=\frac{1}{\|P\theta\|}\left[{\ell_{\rm ar}(\theta + \theta^*) - \ell_{\rm ar}((I-2P)\theta +\theta^*)}\right] = \Omega(\gamma \sigma_0).
    \end{equation}
\end{theorem}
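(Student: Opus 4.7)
The plan is to decompose the numerator as
\[\ell_{\rm ar}(\theta+\theta^*) - \ell_{\rm ar}((I-2P)\theta+\theta^*) = \Delta_\ell + \Delta_{\rm reg},\]
where $\Delta_\ell := \ell(\theta+\theta^*+\theta_0) - \ell((I-2P)\theta+\theta^*+\theta_0)$ isolates the symmetry-breaking contribution of the static bias and $\Delta_{\rm reg} := \gamma[\|\theta+\theta^*\|_D^2 - \|(I-2P)\theta+\theta^*\|_D^2]$ isolates that of the anisotropic weight decay. The $(\theta^*,P)$-symmetry of $\ell$ forces $\Delta_\ell \equiv 0$ when $\theta_0=0$, so $\Delta_\ell$ captures exactly the effect of the random bias, while $\Delta_{\rm reg}$ captures that of the random metric $D$.

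To handle $\Delta_\ell$, I Taylor-expand in $\theta_0$ around both $\theta+\theta^*$ and $(I-2P)\theta+\theta^*$. The zeroth-order pieces cancel by symmetry, and differentiating the symmetry identity yields $\nabla\ell((I-2P)\eta+\theta^*) = (I-2P)\nabla\ell(\eta+\theta^*)$, so the linear-in-$\theta_0$ pieces combine into
\[\Delta_\ell = 2\,\nabla\ell(\theta+\theta^*)^T P\theta_0 + O(\sigma_0^2).\]
Focusing on the principal case $\theta^*=0$ (the standard deep-learning setting, to which the general case reduces by Proposition~\ref{prop: equivalent symmetries}), the stationarity condition $\nabla\ell_{\rm ar}(\theta)=0$ reads $\nabla\ell(\theta+\theta_0) = -2\gamma D\theta$. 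A further Taylor step in $\theta_0$ combined with $D=I+O(\sigma_D)$ gives $\nabla\ell(\theta) = -2\gamma\theta + O(\gamma\sigma_D + \sigma_0)$, so
\[\Delta_\ell = -4\gamma (P\theta)^T \theta_0 + \text{lower-order terms}.\]
Since $\theta_0 \sim \mathcal{N}(0,\sigma_0^2 I)$, the leading term is a centered Gaussian of standard deviation $2\gamma\sigma_0\|P\theta\|$, and under the paper's probabilistic $\Omega$-convention this yields $|\Delta_\ell| = \Omega(\gamma\sigma_0\|P\theta\|)$.

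For $\Delta_{\rm reg}$, direct expansion with $\theta_\parallel := P\theta$ and $\theta_\perp := (I-P)\theta$ yields $\Delta_{\rm reg}/\gamma = 4(\theta_\perp+\theta^*)^T D\theta_\parallel$. With $\theta^*=0$, only $4\theta_\perp^T D\theta_\parallel$ survives; the leading $D=I$ piece vanishes because $(I-P)P=0$, leaving only the correction from $\tilde D := D - I = O(\sigma_D)$, so $\Delta_{\rm reg} = O(\gamma\sigma_D \|P\theta\|)$. The hypothesis $\sigma_D = o(\sigma_0)$ then makes $\Delta_{\rm reg}$ strictly lower order than $\Delta_\ell$. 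Assembling both pieces and dividing by $\|P\theta\|$ gives $\Delta = \Omega(\gamma\sigma_0) + o(\gamma\sigma_0) = \Omega(\gamma\sigma_0)$.

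The main subtlety I anticipate is the bookkeeping of Taylor remainders — I must verify that all remainders of order $\sigma_0^2$, $\sigma_0\sigma_D$, and higher stay strictly smaller than $\gamma\sigma_0\|P\theta\|$, which implicitly requires $\sigma_0,\sigma_D \ll \gamma$. A secondary subtlety is the extension to $\theta^* \neq 0$: there a deterministic piece $4\gamma\theta^{*T}\theta_\parallel$ appears in $\Delta_{\rm reg}$ that could in principle cancel the random Gaussian term in $\Delta_\ell$, but such cancellation happens only on a measure-zero set in $\theta_0$; moreover in that regime this deterministic piece typically dominates and delivers an even stronger $\Omega(\gamma) \ge \Omega(\gamma\sigma_0)$ lower bound, so the stated conclusion is preserved.
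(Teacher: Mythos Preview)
Your decomposition is correct and reaches the right conclusion, but the paper takes a cleaner and more direct route. Rather than splitting into $\Delta_\ell$ and $\Delta_{\rm reg}$ and Taylor-expanding in $\theta_0$, the paper first passes to the equivalent formulation $\tilde\ell_{\rm ar}(\theta)=\ell(\theta)+\gamma\|\theta-\theta_0\|_D^2$ (the remark after Theorem~\ref{theo: simple symmetry}). In that parameterization the $\ell$-terms cancel \emph{exactly} by the $(\theta^*,P)$-symmetry, so the entire difference sits in the quadratic regularizer and can be computed in closed form:
\[
\tilde\ell_{\rm ar}(\theta+\theta^*)-\tilde\ell_{\rm ar}((I-2P)\theta+\theta^*)=4\gamma\,z_\perp^T D\bigl(z_\parallel+\theta^*-\theta_0\bigr),
\]
with $z_\perp=P\theta$ and $z_\parallel=(I-P)\theta$. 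The $z_\perp^T D\theta_0$ term is Gaussian with standard deviation $\sim\sigma_0\|P\theta\|$, yielding the $\Omega(\gamma\sigma_0)$ bound immediately. Two consequences of this difference: (i) the paper's argument needs neither a Taylor expansion nor the first-order stationarity condition---only the norm assumption $\theta\in\Theta(1)$ is used---whereas your route hinges on both; (ii) because there is no expansion, the paper's proof does not generate the $O(\sigma_0^2)$ remainder that forces your implicit side-condition $\sigma_0\ll\gamma$. Your approach does have the pedagogical merit of exposing how the stationarity of $\ell_{\rm ar}$ ties $\nabla\ell$ to $-2\gamma\theta$, and your handling of the $\theta^*\neq0$ case is essentially consistent with the paper's (the $z_\perp^T D\theta^*$ term is deterministic and generically $\Theta(\gamma)$, hence dominant).
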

\vspace{-2mm}

This theorem essentially shows a ``super-Lipschitz" property of the difference between the loss function values between parameters and their reflections. This means that with a random bias, the symmetry will be quite strongly removed, as long as we ensure $\sigma_D \ll \sigma_0$, which is certainly ensured when $\sigma_D = 0$. As a corollary, it also shows that after applying a static bias, no symmetric solution where $P\theta =0$ can still be a stationary point because as $P\theta \to 0$, the quantity $\Delta$ converges to the projection of the gradient onto symmetry breaking subspace.
\begin{corollary}
    For any $\theta$ such that $P\theta=0$, $P \nabla_\theta \ell_{\rm ar} = \Omega(\gamma\sigma_0)$.
\end{corollary}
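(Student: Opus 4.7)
The plan is to obtain the corollary as the $\|P\theta\|\to 0$ limit of Theorem~\ref{theo: sensitivity}. Taking $\theta^*=0$ (as in the simplifying convention used earlier in this section), fix $\theta$ with $P\theta=0$ and pick any unit vector $v$ in the range of $P$, so that $Pv = v$. For small $\epsilon>0$, set $\theta_\epsilon := \theta + \epsilon v$; then $P\theta_\epsilon = \epsilon v$, $\|P\theta_\epsilon\| = \epsilon$, and $(I-2P)\theta_\epsilon = \theta - \epsilon v$. Applying Theorem~\ref{theo: sensitivity} to $\theta_\epsilon$ gives $\epsilon^{-1}[\ell_{\rm ar}(\theta + \epsilon v) - \ell_{\rm ar}(\theta - \epsilon v)] = \Omega(\gamma\sigma_0)$.

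Next, a second-order Taylor expansion of $\ell_{\rm ar}$ around $\theta$ gives
\begin{equation}
    \ell_{\rm ar}(\theta + \epsilon v) - \ell_{\rm ar}(\theta - \epsilon v) = 2\epsilon\,v^T \nabla\ell_{\rm ar}(\theta) + O(\epsilon^3),
\end{equation}
where the odd term survives and the quadratic Hessian contribution cancels by the symmetry between $\pm v$. Dividing by $\epsilon$ and sending $\epsilon\to 0$ yields $2\, v^T \nabla\ell_{\rm ar}(\theta) = \Omega(\gamma\sigma_0)$ for every unit vector $v$ with $Pv=v$. Since $v = Pv$ implies $v^T \nabla\ell_{\rm ar}(\theta) = v^T P\nabla\ell_{\rm ar}(\theta)$, choosing $v$ parallel to $P\nabla\ell_{\rm ar}(\theta)$ (which must be nonzero, else the previous bound is violated) extracts $2\|P\nabla\ell_{\rm ar}(\theta)\| = \Omega(\gamma\sigma_0)$, the desired conclusion.

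The main subtlety is checking that Theorem~\ref{theo: sensitivity} remains applicable along the perturbed sequence $\theta_\epsilon$ and that the Taylor remainder is uniform enough to survive the limit. The scale condition $\|\theta_\epsilon\|=\Theta(1)$ is preserved by the small perturbation for small $\epsilon$, so the only delicate point is the theorem's local-minimum hypothesis: one should read the corollary as applying to symmetric points arising as accumulation points of such local minima, which is exactly the setting relevant for ruling out stationarity at $P\theta=0$. Once this technicality is accommodated, the core mechanism is transparent: the even cancellation in $\ell_{\rm ar}(\theta + \epsilon v) - \ell_{\rm ar}(\theta - \epsilon v)$ turns the finite-difference ratio of Theorem~\ref{theo: sensitivity} into a directional derivative, and maximizing over directions $v$ in the range of $P$ extracts the norm of the projected gradient.
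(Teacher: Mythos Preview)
Your proposal is correct and follows exactly the approach the paper itself indicates: the paper's justification is the single remark that ``as $P\theta \to 0$, the quantity $\Delta$ converges to the projection of the gradient onto the symmetry-breaking subspace,'' and you have simply made that limit explicit via the symmetric finite difference and Taylor expansion. One small clean-up: the bound $v^T\nabla\ell_{\rm ar}(\theta)=\Omega(\gamma\sigma_0)$ holds for any \emph{fixed} unit $v$ in the range of $P$ (with high probability over $\theta_0$), not simultaneously for all such $v$, so your final step of selecting $v$ parallel to $P\nabla\ell_{\rm ar}(\theta)$ is unnecessary---a single fixed direction already lower-bounds $\|P\nabla\ell_{\rm ar}(\theta)\|$.
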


Now, the more advanced question is the case when there are multiple reflection symmetries, and one wants to significantly remove every one of them.
\begin{theorem}
\label{theo:N symmetries}
    Let $\ell$ contain $N$ reflection symmetries: $\{(P_i, \theta^*_i)\}_{i=1}^N$. Let 
    \begin{equation}
        \Delta_i := \frac{{\ell_{\rm ar}(\theta + \theta_i^*) - \ell_{\rm ar}((I-2P_i)\theta +\theta_i^*)}}{\|P_i\theta\|}.
    \end{equation}
    Then, for any local minimum $\theta \in \Theta(1)$, letting $\gamma \sigma_0  = \Omega \left(\frac{2\epsilon N}{1-\delta} \right)$ guarantees that $\Pr(\min_i |\Delta_i| > \epsilon) >\delta$ for any $\epsilon$ and $\delta < 1$.
\end{theorem}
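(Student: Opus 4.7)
The strategy is to reduce the multi-symmetry statement to $N$ applications of Theorem~\ref{theo: sensitivity}, combined through a union bound backed by a Gaussian anti-concentration argument. The key observation is that, at a local minimum $\theta \in \Theta(1)$, each $\Delta_i$ depends \emph{linearly} on the Gaussian bias $\theta_0$ at leading order, with a coefficient of order $\gamma$. Hence each $\Delta_i$ is itself an (approximately) Gaussian random variable whose standard deviation scales like $\gamma \sigma_0$.

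\paragraph{Step 1: Linearization of each $\Delta_i$.} For each $i$, I would replay the computation underlying Theorem~\ref{theo: sensitivity}: using the $(\theta_i^*, P_i)$-symmetry of $\ell$, the contribution of the original loss to $\Delta_i$ reduces to $\ell(\phi_i + (I-2P_i)\theta_0) - \ell(\phi_i + \theta_0)$ for $\phi_i := (I-2P_i)\theta + \theta_i^*$. Since $\theta$ is a local minimum of $\ell_{\rm ar}$, the first-order term in $\theta_0$ from this loss piece either vanishes or is absorbed into the weight-decay gradient, and the dominant asymmetric contribution comes from the $\gamma\|\cdot\|_D^2$ regularizer, giving a term of the form $\gamma\, u_i(\theta)^T P_i \theta_0$ for some deterministic vector $u_i(\theta)$ determined by $\theta$, $\theta_i^*$, $D$, and $P_i$. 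Dividing by $\|P_i\theta\|$ and discarding $o(\gamma\sigma_0)$ corrections (valid since $\sigma_D = o(\sigma_0)$ and $\theta \in \Theta(1)$), I obtain
\begin{equation}
    \Delta_i = \gamma\, \tilde{u}_i(\theta)^T P_i \theta_0 + o(\gamma \sigma_0),
\end{equation}
where $\|\tilde{u}_i(\theta)\| = \Theta(1)$ by the same reasoning that makes Theorem~\ref{theo: sensitivity} yield a $\Omega(\gamma\sigma_0)$ lower bound for a single symmetry.

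\paragraph{Step 2: Gaussian anti-concentration for one $i$.} Since $\theta_0 \sim \mathcal{N}(0,\sigma_0^2 I)$, the scalar $\gamma\, \tilde{u}_i(\theta)^T P_i \theta_0$ is Gaussian with standard deviation $\gamma \sigma_0 \|P_i \tilde{u}_i(\theta)\| = \Theta(\gamma \sigma_0)$. Standard anti-concentration of a univariate Gaussian gives an absolute constant $C$ for which $\Pr(|\Delta_i| \le \epsilon) \le C \epsilon / (\gamma \sigma_0)$ whenever $\epsilon \le \gamma\sigma_0$.

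\paragraph{Step 3: Union bound and choice of scaling.} Applying a union bound across the $N$ symmetries (independence is not needed),
\begin{equation}
    \Pr\!\left(\min_i |\Delta_i| \le \epsilon\right) \le \sum_{i=1}^N \Pr(|\Delta_i|\le \epsilon) \le \frac{CN\epsilon}{\gamma\sigma_0}.
\end{equation}
Requiring the right-hand side to be at most $1-\delta$ yields $\gamma\sigma_0 \ge CN\epsilon/(1-\delta)$, matching the stated hypothesis $\gamma\sigma_0 = \Omega(2\epsilon N/(1-\delta))$ up to the absolute constant absorbed into $\Omega(\cdot)$ (the factor $2$ is exactly the constant that appears in the anti-concentration bound for a centered Gaussian restricted to an interval of half-length $\epsilon$).

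\paragraph{Main obstacle.} The delicate step is Step~1: justifying that $\|\tilde{u}_i(\theta)\|$ is bounded away from $0$ uniformly in $i$, so that no individual $\Delta_i$ degenerates and erodes the anti-concentration bound. This needs the local minimum assumption to kill the first-order loss gradient, the condition $\sigma_D = o(\sigma_0)$ so the $D$-induced asymmetry does not cancel the $\theta_0$-induced one, and the condition $\theta \in \Theta(1)$ so that the weight decay contribution $\gamma\, \theta^T D P_i \theta_0$ is $\Theta(\gamma \sigma_0)$ rather than vanishing. Once this uniform lower bound on $\|\tilde{u}_i\|$ is in place, Steps~2--3 are routine.
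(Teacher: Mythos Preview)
Your proposal is correct and follows essentially the same route as the paper: linearize each $\Delta_i$ (via the Theorem~\ref{theo: sensitivity} computation) to exhibit it as an approximately Gaussian scalar with standard deviation $\Theta(\gamma\sigma_0)$, apply one-dimensional Gaussian anti-concentration, and then combine across the $N$ symmetries. The paper phrases the combination step as the Fr\'echet inequality $\Pr(\bigcap_i A_i)\ge \sum_i\Pr(A_i)-N+1$ rather than a union bound on the complements, but these are the same bound, and the paper obtains the loss-term cancellation in Step~1 directly from the reflection symmetry of $\ell$ rather than via the local-minimum condition you invoke.
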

\vspace{-2mm}
In the theorem, the probability is taken over the random sampling of the static bias. 
Namely, the achievable strengths of symmetry-breaking scales inversely linearly in $N$, the size of the minimal set of the entire group generated by $N$ reflections. 
In general, without further assumptions there is no way to improve this scaling because, for example, the smallest of $N$ independent bounded variables roughly scales as $1/N$ towards its lower boundary.
\vspace{-2mm}
\paragraph{General Groups.} Lastly, one can generalize the theory to prove that the proposed method removes symmetries from a generic group. Let $G$ be the linear representation of a generic finite group, possibly with many nontrivial subgroups. If the loss function $\ell$ is invariant under transformation by the group $G$, then
\begin{equation}
    \forall g \in G,\ \ell(\theta) = \ell(g\theta).
    \label{eq:group symmetry}
\end{equation}
Because $G$ is finite, it follows that the representations $g$ must be full-rank and unipotent.

The following theorem shows that at every symmetric solution, there exists an escape direction with a strong gradient that pulls the parameters away from every subgroup of the related symmetry. In other words, it is no longer possible to be trapped in a symmetric solution. While general groups appear less common, they exist in a lot of application scenarios with equivariant networks \citep{cohen2016group}, where it is common to incorporate group structures into the model.

Letting $U$ be a subgroup of $G$, we denote with the overbar the following matrix:
\begin{equation}
    \overline{U} = \frac{1}{|U|}\sum_{u\in U}  u.
\end{equation}
Note that $\overline{U}$ is a projection matrix: $\overline{U}\, \overline{U} = \overline{U}$. This means that $I-\overline{U}$ is also a projection matrix. Importantly, $\overline{U}$ projects a vector into the symmetric subspace. For any $u\in U$, $u \overline{U} = \overline{U}$. 
Likewise, $I-\overline{U}$ projects any vector into the symmetry-broken subspace, a well-known result in the theory of finite groups \citep{gorenstein2007finite}. We denote by
\begin{equation}
    \Delta_V = \| (I - \overline{V}) \nabla_\theta \ell \|
\end{equation}
the strength of the symmetry removal for the subgroup $V$.

\begin{theorem}\label{theo: general group}
Let $\Gamma(G)$ denote the smallest minimal generating set for the group $G$. $Z$ denotes the number of minimal subgroups of $G$. Let $\ell$ be invariant under the group transformation $G$ and let $\theta$ be in the invariant subspace of a subgroup $U \lhd G$. Then, for every subgroup $V \lhd U \lhd G$,
\vspace{-2mm}
\begin{enumerate}[noitemsep,topsep=0pt, parsep=0pt,partopsep=0pt,leftmargin=13pt]
    \item $\Delta_V = \Omega(\gamma \sigma_0 {\rm rank}(I-\overline{V}))$;
    \item $\min_{V\lhd U}\Delta_V = \Omega(\gamma \sigma_0 {\rm rank}(I-\overline{V})Z^{-1})$;
    \item if $G$ is abelian, $\min_{V\lhd U}\Delta_V = \Omega(\gamma \sigma_0 {\rm rank}(I-\overline{V})|\Gamma(U)|^{-1})$;\footnote{This result can be generalized to the case where all projectors $\bar{V}$ commute with each other, even if $G$ is nonabelian. Namely, this is a consequence of the properties of the representations of $G$ and of $G$ itself.}
    \item additionally, for any $\epsilon>0$ and $\delta <1$, $\Pr( \min_{V\lhd U} \Delta_V  > \epsilon) >\delta$, if $\gamma \sigma_0  = \Omega \left(\frac{2\epsilon |\Gamma(U)|}{1-\delta} \right)$.
\end{enumerate}
\end{theorem}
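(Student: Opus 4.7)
The plan is to first compute $(I - \overline{V})\nabla_\theta \ell_{\mathrm{ar}}(\theta)$ explicitly and show that the only surviving contribution is the random static-bias term, and then to derive each of the four bounds from this core estimate via increasingly refined combinatorial and probabilistic arguments. I would start from $\nabla_\theta \ell_{\mathrm{ar}}(\theta) = \nabla_\theta \ell(\theta + \theta_0) + 2\gamma D\theta$ and, using the shift-of-variables noted after Eq.~\eqref{eq: syre}, pass to the equivalent form $\nabla_\theta \ell(\theta) + 2\gamma D(\theta + \theta_0)$, which places the randomness inside the weight-decay term and is the cleanest version to project.

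The key equivariance step is the standard observation that differentiating the $G$-invariance $\ell(v\eta) = \ell(\eta)$ in $\eta$ for any $v \in V$ and evaluating at a $V$-invariant point yields $v^T \nabla_\theta \ell(\eta) = \nabla_\theta \ell(\eta)$. Choosing unitary representations of $G$ so that $\overline{V}^T = \overline{V}$ and averaging over $v \in V$ gives $\overline{V}\,\nabla_\theta\ell(\theta) = \nabla_\theta\ell(\theta)$, hence $(I - \overline{V})\nabla_\theta \ell(\theta) = 0$ at every $\theta \in \mathrm{Fix}(U) \subseteq \mathrm{Fix}(V)$. All that remains under the projection is therefore $(I - \overline{V})\nabla_\theta \ell_{\mathrm{ar}}(\theta) = 2\gamma (I - \overline{V})D(\theta + \theta_0)$. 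Under the hypothesis $\sigma_D = o(\sigma_0)$ inherited from Theorem~\ref{theo: sensitivity}, $D$ acts as $I$ to leading order so $(I - \overline{V})D\theta = 0$, and the leading contribution reduces to $2\gamma (I - \overline{V})\theta_0$. A standard Gaussian concentration estimate on $(I - \overline{V})\theta_0$ then produces the bound in part~(1), with the stated dependence on $\gamma\sigma_0$ and the rank of $I - \overline{V}$.

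For part~(2), I would note that every $V \lhd U$ contains some minimal subgroup $V_{\min}$, and that $V_{\min} \lhd V$ gives the PSD-domination $I - \overline{V_{\min}} \preceq I - \overline{V}$, so $\Delta_{V_{\min}} \le \Delta_V$; hence $\min_{V \lhd U} \Delta_V$ is attained on the $Z$ minimal subgroups, and the order-statistics argument already developed in Theorem~\ref{theo:N symmetries} contributes the extra factor $Z^{-1}$. For part~(3), in the abelian case the family $\{\overline{V}\}_{V \lhd G}$ is mutually commuting and hence simultaneously diagonalizable; using the lattice identity $\overline{V_1 \cap V_2} = \overline{V_1}\,\overline{V_2}$ for commuting projectors, every subgroup $V \lhd U$ factors through the cyclic subgroups $\langle g\rangle$ for $g \in \Gamma(U)$, so only $|\Gamma(U)|$ independent Gaussian linear functionals need to be controlled simultaneously, replacing $Z^{-1}$ by $|\Gamma(U)|^{-1}$. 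Part~(4) is then immediate: demanding $|\Gamma(U)|$ lower-bound events to hold jointly at confidence $\delta$ forces $\gamma\sigma_0 = \Omega\!\left(2\epsilon |\Gamma(U)|/(1-\delta)\right)$ by the same Markov/union bound used in Theorem~\ref{theo:N symmetries}.

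The main obstacle I expect is part~(3): translating the structural statement ``$G$ is abelian'' into the operational statement that the infimum over a potentially exponentially large subgroup lattice is controlled by only $|\Gamma(U)|$ independent Gaussian coordinates requires carefully invoking $\overline{V_1 \cap V_2} = \overline{V_1}\,\overline{V_2}$ for commuting projectors and showing that every minimal-subgroup projection can be reconstructed from the generating ones without degrading the concentration. Everything else—the equivariance, the decoupling of $\nabla\ell$ from the projected gradient, and the Gaussian concentration on $(I - \overline{V})\theta_0$—is essentially routine once the earlier machinery (Proposition~\ref{prop: equivalent symmetries}, Theorem~\ref{theo: sensitivity}, and Theorem~\ref{theo:N symmetries}) is applied.
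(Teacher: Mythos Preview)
Your overall route mirrors the paper's closely. The equivariance computation yielding $(I-\overline{V})\nabla_\theta\ell(\theta)=0$ on $\mathrm{Fix}(U)$, the reduction of the surviving gradient to $2\gamma(I-\overline{V})\theta_0$ (up to $o(\gamma\sigma_0)$), the PSD ordering $I-\overline{V_{\min}}\preceq I-\overline{V}$ that pushes the minimum onto the $Z$ minimal subgroups in part~(2), and the Fr\'echet-type bound in part~(4) are exactly what the paper does.

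The one genuine issue is in part~(3). The identity you state, $\overline{V_1\cap V_2}=\overline{V_1}\,\overline{V_2}$, is false: for commuting averaging projectors one has $\overline{V_1}\,\overline{V_2}=\overline{\langle V_1,V_2\rangle}$, since the product projects onto $\mathrm{Fix}(V_1)\cap\mathrm{Fix}(V_2)$, which is the fixed space of the \emph{generated} group, not of the intersection (take $V_1\cap V_2=\{e\}$, giving $\overline{V_1\cap V_2}=I$, to see the failure). Because the correct product identity runs \emph{up} the subgroup lattice while you need to push \emph{down} to the generators, it does not deliver the reduction you sketch, and ``factors through'' remains undefined. The paper instead argues directly on images: if a nonzero $n\in\mathrm{im}(I-\overline{V})$ satisfied $n\in\mathrm{im}(\overline{z_j})$ for every generator $z_j\in\Gamma(U)$, then $z_jn=n$ for all $j$, hence $vn=n$ for every $v\in V\subset U=\langle z_1,\dots,z_m\rangle$, contradicting $n\in\mathrm{im}(I-\overline{V})$. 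This yields the containment of $\mathrm{im}(I-\overline{V})$ in $\bigcup_j\mathrm{im}(I-\overline{z_j})$, which in your simultaneous-diagonalization picture is simply the coordinate-support inclusion $S_V^c\subseteq\bigcup_j S_{z_j}^c$. The paper then also invokes the fundamental theorem of finite abelian groups to count subgroups of the cyclic factors and sharpen to $\sum_i k_i\le\log|U|$; you omit this refinement. Once you replace the lattice identity by the image-containment argument, your diagonalization framing goes through and matches the paper.
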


\begin{wrapfigure}{r}{0.3\linewidth}
    \centering
    \vspace{-2em}
    \includegraphics[width=\linewidth]{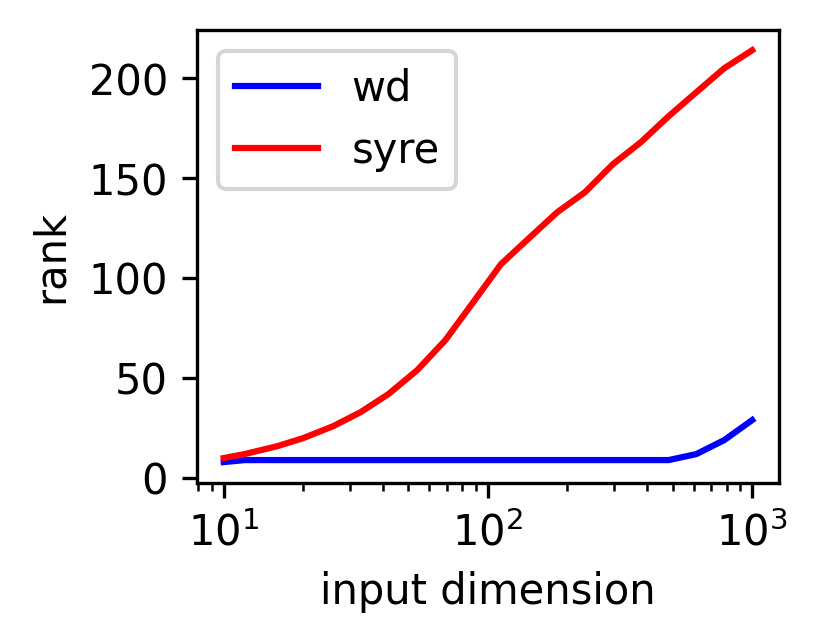}
    \vspace{-2em}
    \caption{\small With the proposed method, the rank of the feature covariance matrix increases with the input dimension. The same may happen under \update{vanilla weight decay}, but the effect is not strong. The covariance is computed from the first-layer output with batch size $1000$, and eigenvalues larger than $10^{-4}$ are truncated to zero.}
    \label{fig:input_dim}
    \vspace{-2em}
\end{wrapfigure}

Item (2) of the theorem is essentially due to the fact that removing symmetries from a larger group can be reduced to removing them from one of its subgroups. Conceptually, this result has the same root as the classical theory of combinatorial designs, where averaging over a subset of groups has the same effect as averaging over the whole group \citep{lindner2017design}. In general, $1\leq Z \leq |G|$ (and sometimes $\ll |G|$), and so this scaling is not bad. For the part (3) of the theorem, the term $|\Gamma(U)|$ is especially meaningful. It is well-known that $|\Gamma(U)| \leq \log |U|$, and so the worst-case symmetry-breaking strength is only of order $1/\log|U|$, which is far slower than what one would expect. In fact, for a finite group with size $N$, the number of subgroups can grow as fast as $N^{\log N}$ \citep{borovik1996maximal}, and thus, one might naively think that the minimal breaking strength decreases as $N^{-\log N}$. This theorem shows that the proposed method is highly effective at breaking the symmetries in the loss function or the model.

A numerical example is shown in Figure \ref{fig:input_dim}, which validates a major prediction of the theorem: a symmetry is easier to remove if it is high-dimensional. We train a two-layer ReLU net in a teacher-student scenario and change the input dimension. This experiment holds the number of (permutation) symmetries fixed and directly controls ${\rm rank}(I-\overline{V})$. As the input dimension increases, the symmetry of the learned model becomes lower. In comparison, without a static bias, having a high dimension is not so helpful. 

\vspace{-1mm}
\subsection{Hyperparameter and Implementation Remark}
\vspace{-1mm}
As discussed, there are two ways to implement the method (Eq.~\eqref{eq: syre} or Eq.~\eqref{eq: ar}). In our experiments, we stick to the definition of Eq.~\eqref{eq: syre}, where the model parameters are biased, and weight decay is the same as the standard implementation. For the choice of hyperparameters, we always set $\sigma_D = 0$ as we find only introducing $\sigma_0$ to be sufficient for most tasks. Experiments with standard training settings (see the next section for the Resnet18 experiment on CIFAR-10) show that choosing $\sigma_0$ to be at least an order of magnitude smaller than the standard initialization scale (usually of order $1/\sqrt{d}$ for a width of $d$) works the best. We thus recommend a default value of $\sigma_0$ to be $0.01/\sqrt{d}$, where $1/d$ is the common initialization variance. For the rest of the paper, we state $\sigma_0$ in relative units of $\sqrt{d}^{-1}$ for this reason. That being said, we stress that $\sigma_0$ is a hyperparameter worth tuning, as it directly controls the tradeoff between optimization and symmetry removal.
\vspace{-2mm}
\section{Experiment}\label{sec: experiment}
\vspace{-2mm}
First, we show that the proposed method is compatible with standard training methods. We then apply the method to a few settings where symmetry is known to be a major problem in training. We see that \update{removing symmetries with the proposed method is well correlated with improved model performance for these problems.} Additional experiments are presented in Appendix \ref{app:exp}, including the time and memory usage of \textit{syre} and its application to transformers and improving mode connectivity.\footnote{An implementation of \textit{syre} can be found at \href{https://github.com/xu-yz19/syre/}{https://github.com/xu-yz19/syre/}.}
\vspace{-2mm} 
\subsection{Compatibility with Standard Training}
\vspace{-1mm}
\paragraph{Ridge linear regression.} Let us first consider the classical problem of linear regression with $d$-dimensional data, where one wants to find $\min_w \sum_i(w^Tx_i -y_i)^2$. Here, the use of weight decay has a well-known effect of preventing the divergence of generalization loss at a critical dataset size $N=d$ \citep{krogh1992simple, hastie2019surprises}. This is due to the fact that the Hessian matrix of the loss becomes singular exactly at $N = d$ (at infinite $N$ and $d$). The use of weight decay shifts all the eigenvalues of the Hessian by $\gamma$ and removes this singularity. In this case, one can show that the proposed method is essentially identical to the simple ridge regression. The ridge solution is $w^* = \E[\gamma I + A]^{-1} \E[xy]$, where $A=\E[xx^T]$, and the solution to the biased model is 
\begin{equation}
    w^* = \E[\gamma I + A]^{-1} (\E[xy] +\gamma \theta_0).
\end{equation}
The difference is negligible with the original solution if either $\gamma$ and $\theta_0$ are small. See Figure~\ref{fig:resnet}-left.
\begin{figure}
    \centering
    \vspace{-1em}
    \includegraphics[width=0.27\linewidth]{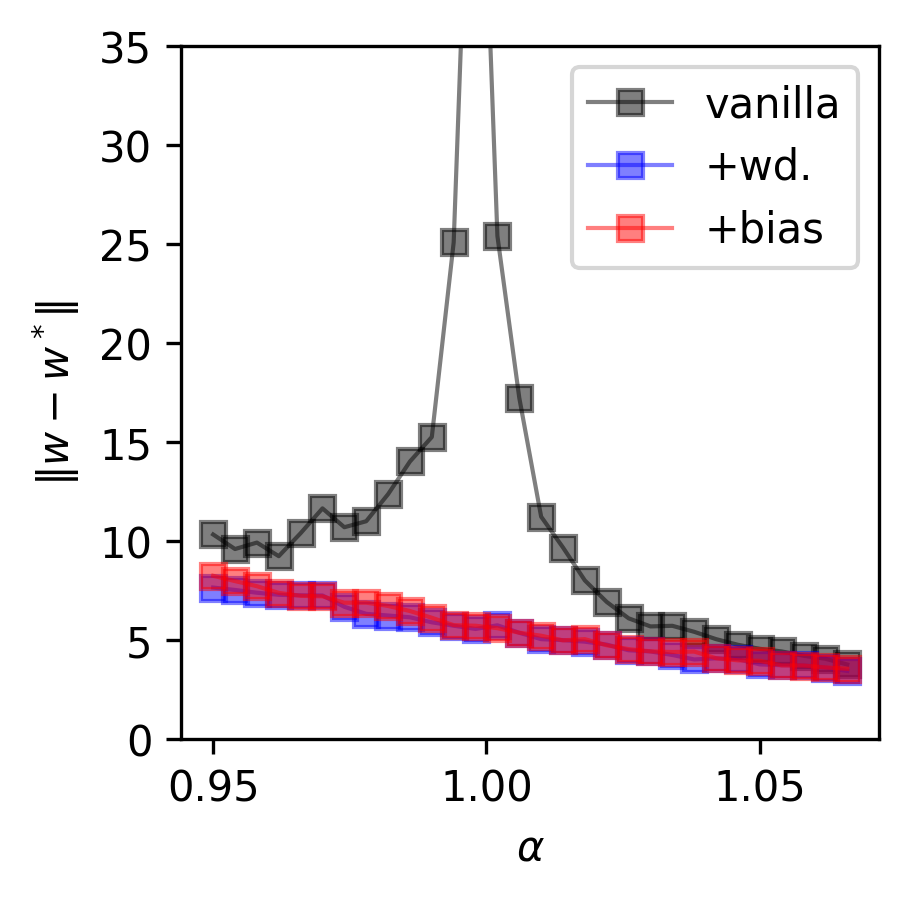}
    {\includegraphics[width=0.41\linewidth]{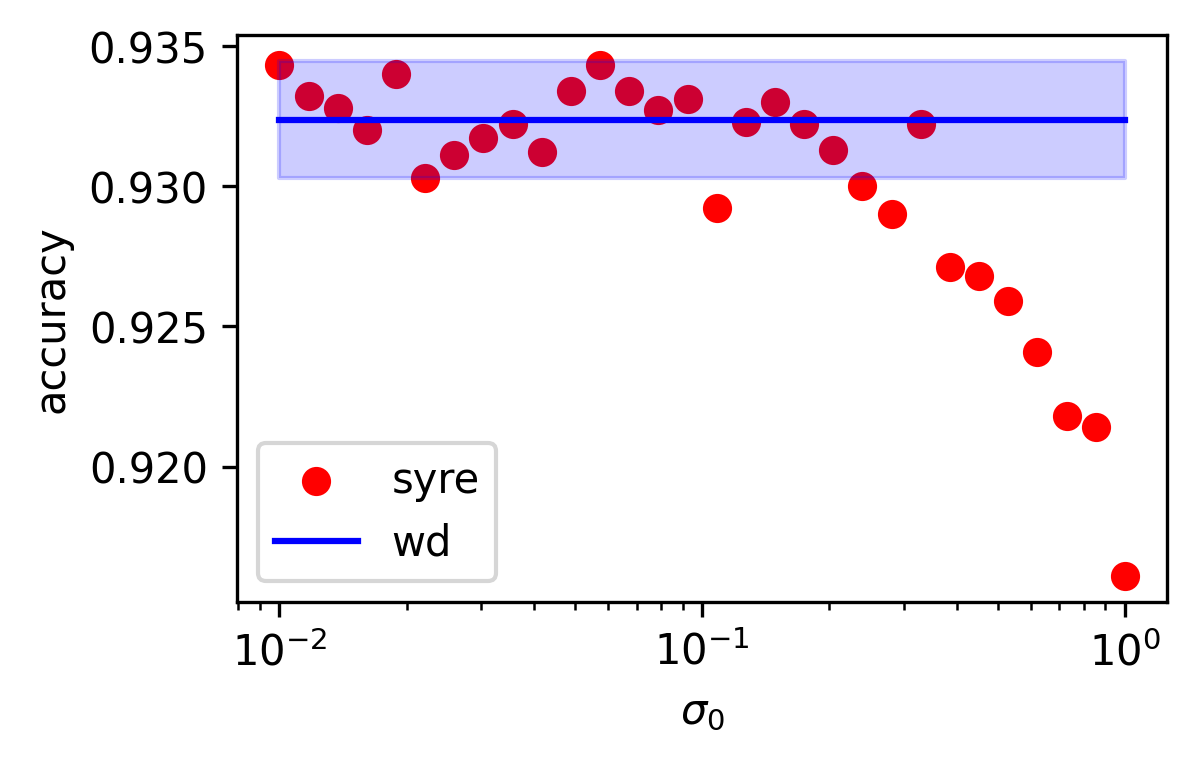}}
    \includegraphics[width=0.29\linewidth]{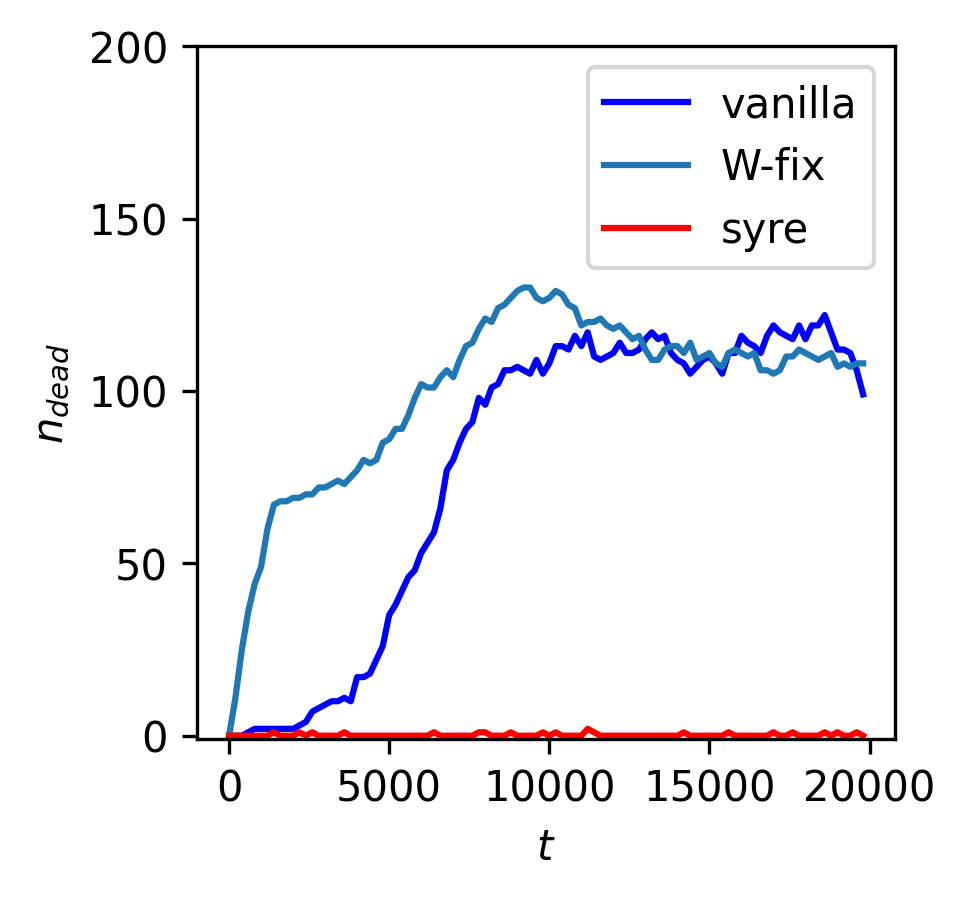}
    \vspace{-1em}
    \caption{\small Training with syre in standard settings. The result shows that biasing the models by a small static bias does not change the performance of standard training settings. \textbf{Left}:  Application of the method to a linear regression problem. Here, $\alpha^{-1} = d/N$ is the degree of parameterization. A well-known use of weight decay is to prevent double descent when $\alpha =1$. Here, we see that the proposed method works as well as vanilla weight decay. Because there is no reflection symmetry in the problem, the proposed method should approximate vanilla weight decay. \textbf{Mid}: Test accuracy of Resnet18 on the CIFAR-10 datasets. The blue line denotes the performance of Resnet, and the shadowed area denotes its standard deviation estimated over $10$ trials. For $\sigma_0<0.2$, there is no significant difference between the performance of the vanilla Resnet and \textit{syre} Resnet. \textbf{Right}: linear regression with a redundant parametrization \citep{poon2021smooth}. The loss function takes the form $\ell(u,w)=\|(u\odot w)^Tx -y \|^2$. Due to symmetry, the point $(u_i, w_i)=0$ is a low-capacity state where the $i$-th neuron is ``dead". Training with style, the model stayed away from any trapping low-capacity state during training. In comparison, training with vanilla SGD or a heuristic for fixing the weights does not fix the problem of collapsing to a low-capacity state.}
    \label{fig:resnet}
\vspace{-2mm}
\end{figure}
\vspace{-4mm}
\paragraph{Reparametrized Linear Regression.} A minimal model with emergent interest in the problem of compressing neural networks is the reparametrized version of linear regression \citep{poon2021smooth, ziyin2023sparsity}, the loss function of which is $\ell(u,w) = \|(u\odot w)^Tx -y \|^2$, where we let $u,\ w,\ x\in\mathbb{R}^{200}$ and $y\in \mathbb{R}$. Due to the rescaling symmetry between every parameter $u_i$ and $w_i$, the solutions where $u_i=w_i=0$ is a low-capacity state where the $i$-th neuron is ``dead." For this problem, we compare the training with standard SGD and \textit{syre}. We also compare with a heuristic method (\textbf{W-fix}), where a fraction $\phi=0.3$ of weights of every layer is held fixed with a fixed variance $\kappa=0.01$ at initialization. This method has been suggested in \citet{lim2024empirical} as a heuristic for removing symmetries and is found to work well when there is permutation symmetry. We see that both the vanilla training and the W-fix collapse to low-capacity states during training, whereas the proposed method stayed away from them throughout. The reason is that the proposed method is model-independent and symmetry-agnostic, working effectively for any type of possibly unknown symmetry in an arbitrary architecture.

\vspace{-3mm}
\paragraph{ResNet.} We also benchmark the performance of the proposed method for ResNet18 with different $\sigma_0$ on the CIFAR-10 datasets in Figure \ref{fig:resnet}. When $\sigma_0=0$, the \textit{syre} model is equivalent to the vanilla model. Figure \ref{fig:resnet} shows that the difference in performance between the vanilla Resnet and \textit{syre} Resnet is very small and becomes neglectable when $\sigma_0<0.2$. 

\vspace{-2mm}
\subsection{Benchmarking Symmetry Removal}
\vspace{-2mm}
\begin{figure}[t!]
\vspace{-1em}
    \centering
    {
        \includegraphics[width=0.38\linewidth]{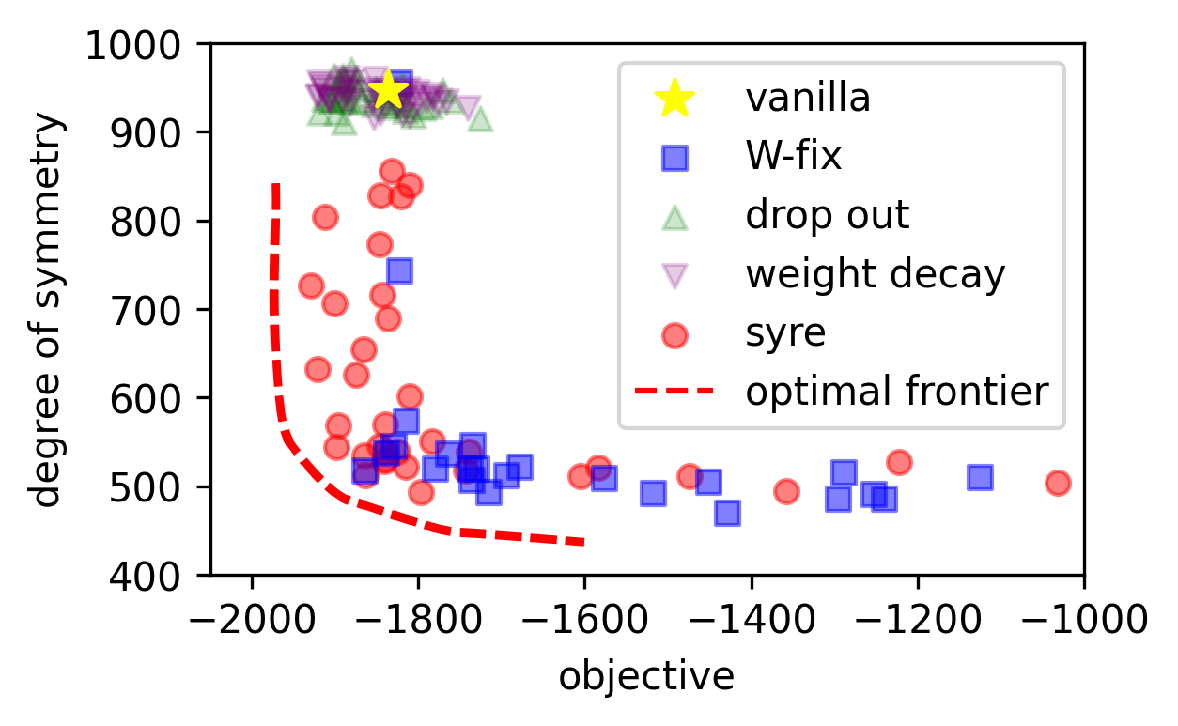}
        \includegraphics[width=0.36\linewidth]{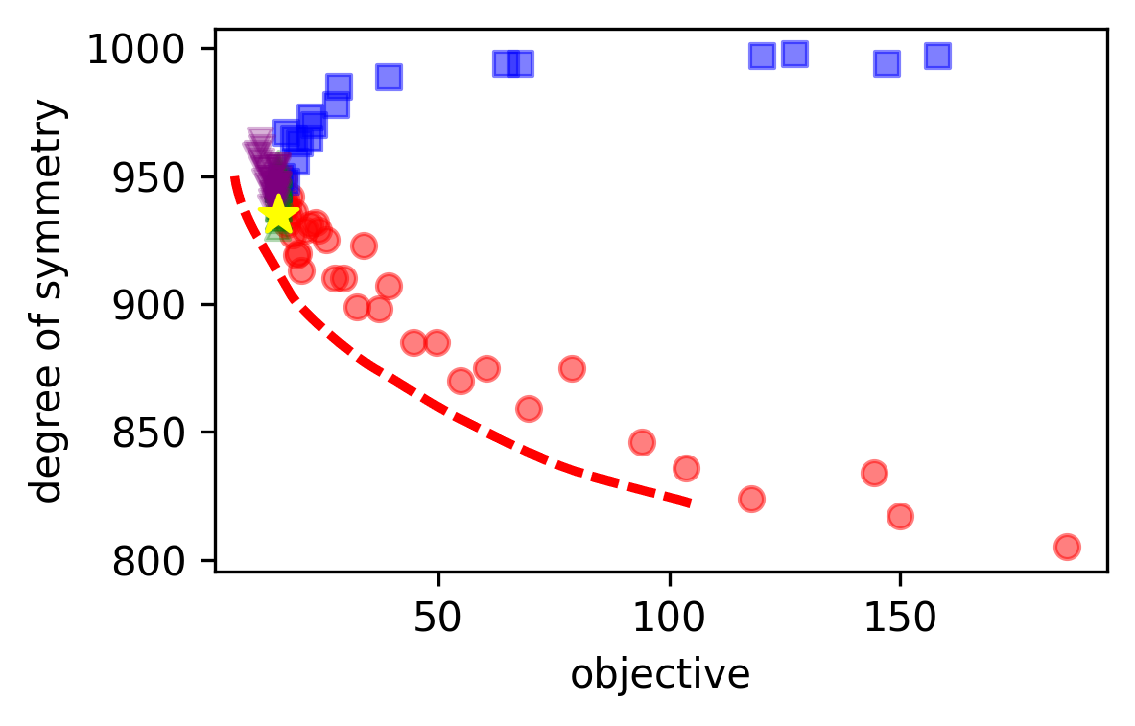}
    }
    \vspace{-1em}
    \caption{\small The degree of symmetry versus the objective value for two choices of $B$ and various training methods with different hyperparameters. The proposed method is the only method to smoothly interpolate between optimized solutions and solutions with low symmetry. \textit{syre} performs well in both cases. \textbf{Left}: objective with unstructured symmetry \textbf{Right}: structured symmetry.}
\label{fig:benchmark}
\vspace{-1em}
\end{figure}

In this section, we benchmark the effect of symmetry control of the proposed method for two controlled experiments.
To compare the influence of \textit{syre} and other training methods on the degree of symmetry, we consider minimizing the following objective function: $$\vspace{-2mm}
    (w^Tw)^2-w^TBw:= (w^Tw)^2-\sum_{i=1}^d\lambda_i(v_i^Tw)^2,$$
where $w\in\mathbb{R}^d$ is the optimization parameter and $B\in\mathbb{R}^{d\times d}$ is a given symmetric matrix with eigenvalues $\lambda_i$ and eigenvectors $v_i$ ($v_i^Tv_i=1$). The objective function has $n$ reflection symmetries $P_iw:=w-2(v_i^Tw)v_i$. Hence, we define the degree of symmetry as $\sum_{i=1}^d\mathbf{1}\{v_i^Tw<c_{\rm th}\}$, where $c_{\rm th}$ is a given threshold. Depending on the spectrum of $B$, the nature of the task is different. We thus consider two types of spectra: (1) an unstructured spectrum where $B = G + G^T$ for a Gaussian matrix $G$, and (2) a structured spectrum where $B = {\rm diag}(v)$ where $v$ is a random Gaussian vector. Conceptually, the first type is more similar to rotation and double rotation symmetries in neural networks where the basis can be arbitrary, while the second is a good model for common discrete symmetries where the basis is often diagonal or sparse. For the first case we choose $c_{\rm th}=10^{-3}$ and for the second case we choose $c_{\rm th}=10^{-1}$.

In Figure \ref{fig:benchmark}, we compare \textit{syre}, W-fix, drop out, weight decay, and the standard training methods in this setting for $d=1000$ and two choices of $B$. In both cases, we use Gaussian initialization and gradient descent with a learning rate of $10^{-4}$. For \textit{syre} and weight decay, we choose weight decay from $0.1$ to $10$. For W-fix, we choose $\phi$ from $0.001$ to $0.1$. For dropout, we choose a dropout rate from $0.01$ to $0.6$. Figure \ref{fig:benchmark} shows that for both cases, \textit{syre} is the only method that effectively and smoothly interpolates between solutions with low symmetry and best optimization. This is a strong piece of evidence that the proposed method can \textit{control} the degree of symmetries in the model.
\vspace{-2mm}
\subsection{Feature and Neuron Collapses in Supervised Learning}
\vspace{-1mm}
\label{sec:feature learning}

\begin{figure}[t!]
     \centering
     {
         \includegraphics[width=0.38\linewidth]{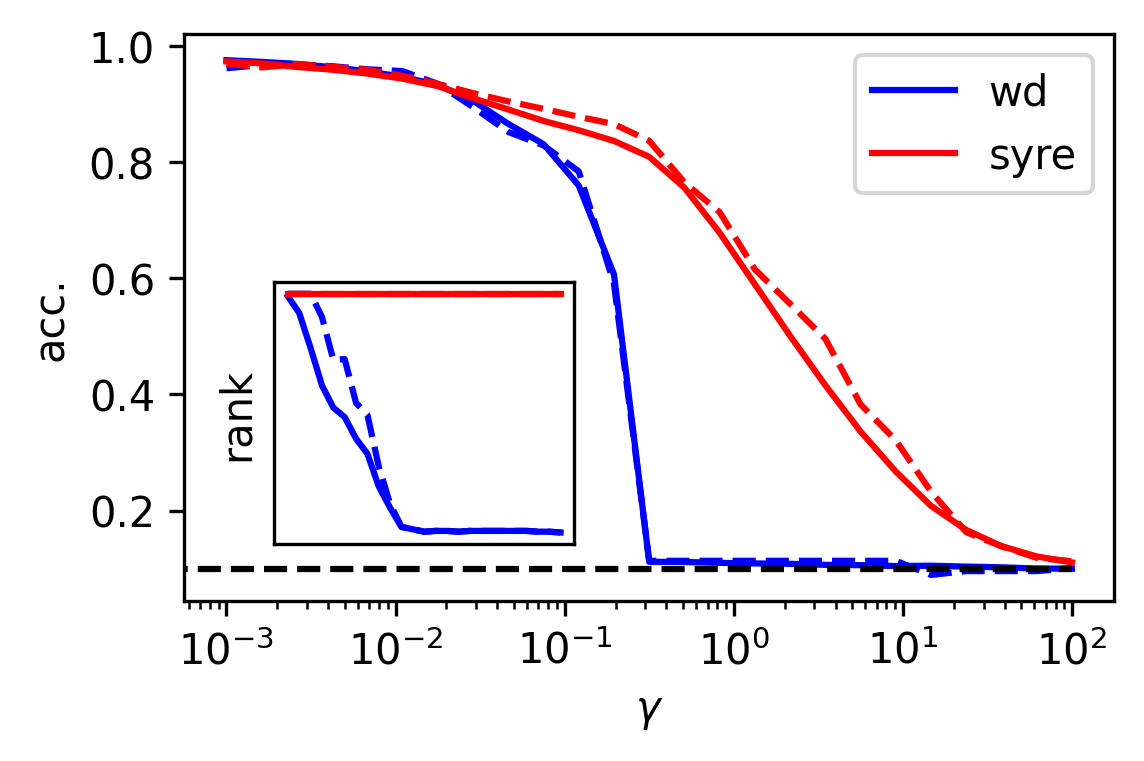}
         \includegraphics[width=0.38\linewidth]{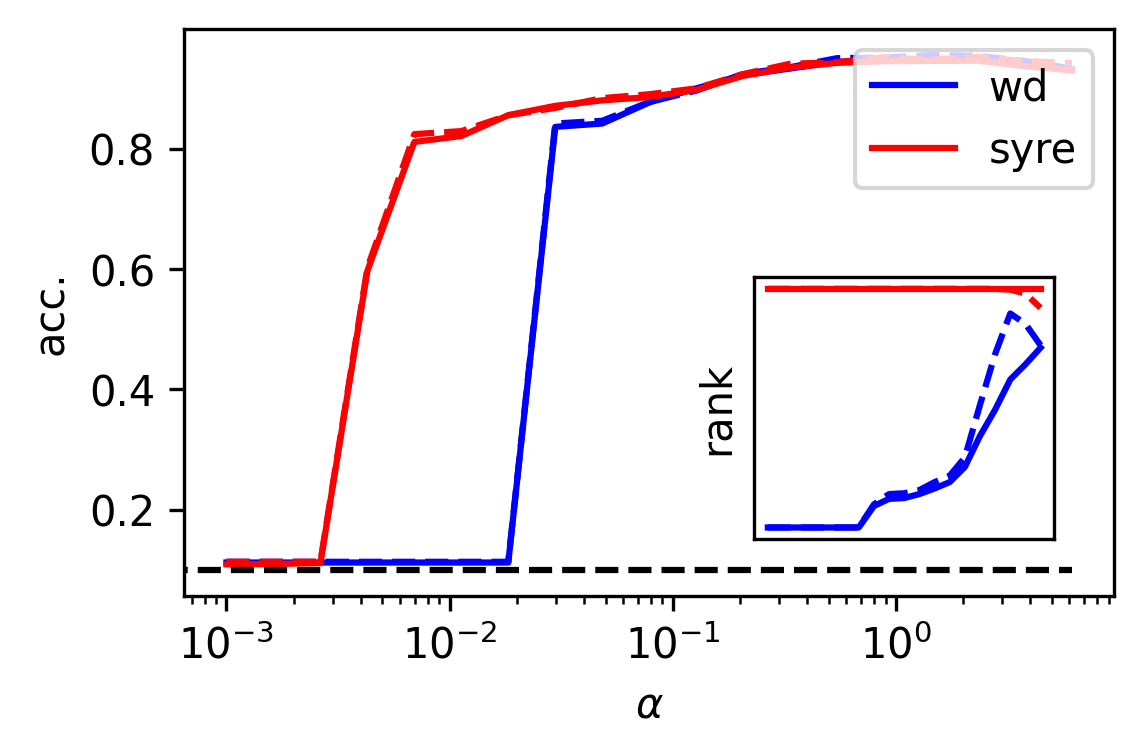}
     }
     \vspace{-1em}
     \caption{\small Performance of a 4-layer FCN for datasets with various weight decay $\gamma$ and data distributions with varyings strengths of linear correlation $\alpha$. As the theory predicts, the covariance of the model \update{with vanilla weight decay} has a low-rank structure and performs significantly worse. In the main figures, solid lines denote training accuracy and dashed lines denote test accuracy. The dashed black line corresponds to random guess. Subfigures show the rank of the covariance matrix of the first layer output before (solid lines) and after (dashed lines) activation with batch size $1000$. We set eigenvalues smaller than $10^{-4}$ to $0$. 
     \textbf{Left}: $\alpha=1$ and different $\gamma$. \textbf{Right}: $\gamma=0.01$ and different $\alpha$. A similar experiment for vision transformers is presented in Section~\ref{app sec: vit}.}
 \label{fig:4layer_sum}
 \vspace{-2mm}
\end{figure}

See Figure \ref{fig:4layer_sum}, where we train the vanilla and \textit{syre} four-layer networks with various levels of weight decay $\gamma$ and input-output covariance $\alpha$. The dataset is constructed by rescaling the input by a factor of $\alpha$ for the MNIST dataset. The theory predicts that the \textit{syre} model can remove the permutation symmetry in the hidden layer. This is supported by subfigures in Figure \ref{fig:4layer_sum}, where vanilla training results in a low-rank solution. Meanwhile, the accuracy of the low-rank solution is significantly lower for a large $\gamma$ or a small $\alpha$, which corresponds to the so-called neural collapses. Also, we observe that \textit{syre} shifts the eigenvalues of the representation by a magnitude proportional to $\sigma_0$, thus explaining the robustness of the method against collapses in the latent representation (See Figure~\ref{fig:spectrum}). 

\vspace{-2mm}
\subsection{Posterior Collapse in Bayesian learning}
\vspace{-1.5mm}
\citet{wang2022posterior} points out that a type of posterior collapse in Bayesian learning \citep{lucas2019dont,wang2021posterior} is caused by the low-rankness of the solutions. We show that training with \textit{syre} could overcome this kind of posterior collapse. In Figure \ref{fig:VAE}, we train a $\beta$-VAE \citep{kingma2013auto, higgins2016beta} on the Fashion MNIST dataset. Following \citet{wang2022posterior}, we use $\beta$ to weigh the KL loss, which can be regarded as the strength of prior matching. Both the encoder and the decoder are a two-layer network with SiLU activation. The hidden dimension and the latent dimension are $200$. Only the encoder has weight decay because the low-rank problem is caused by the encoder rather than the decoder. We also choose the prior variance of the latent variable to be $\eta_{enc}=0.01$. Other settings are the same as \citet{wang2022posterior}. Posterior collapse happens at $\beta=10$, signalized by a large reconstruction loss in the right side of Figure \ref{fig:VAE}. However, the reconstruction loss decreases, and the rank of the encoder output increases (according to the left side of Figure \ref{fig:VAE}) after we use weight decay and \textit{syre}. This is further verified by the generated image in Figure \ref{fig:VAE_image}. Therefore, we successfully remove the permutation symmetry of the encoder.

\begin{table}[t!]
    \centering
    \begin{tabular}{c|c c c |c}
    \hline
         & Hyperparameter& Low-rankness & Penult. Layer Acc.& Last Layer Acc. \\ 
       vanilla  &-& $70\%$&$46.8\%$ &$22.2\%$ \\ 
        \hline
        \textit{syre} & $\sigma_0=0.1$ & $0\%$  &$46.8\%$ &  $31.7\%$\\
         & $\sigma_0=0.01$ & $0\%$&$46.2\%$ &  $32.5\%$\\
         & $\sigma_0=0.1$, all layers & $0\%$&$44.6\%$ &  $30.7\%$\\
         & $\sigma_0=0.01$, all layers & $0\%$&$45.4\%$  &  $32.4\%$\\
         \hline
    \end{tabular}
    \caption{\small Performance of the linearly evaluated Resnet18 on CIFAR100 for the unsupervised self-constrastive learning task. Here, the low-rankness measures the proportion of eigenvalues smaller than $10^{-5}$. Our experiment indicates that symmetry-induced reduction in model capacity can explain about $50\%$ of the performance difference between the representation of the two layers.}
    \label{tab:ssl}
    \vspace{-3mm}
\end{table}
\vspace{-2mm}
\subsection{Low-Capacity Trap in Self-supervised Learning}
\label{sec:ssl}
\vspace{-1.5mm}
A common but bizarre practice in self-supervised learning (SSL) is to throw away the last layer of the trained model and use the penultimate learning representation, which is found to have much better expressiveness than the last layer representation. From the perspective of symmetry, this problem is caused by the rotation symmetry of the last weight matrix in the SimCLR loss. We train a Resnet-18 together with a two-layer projection head over the CIFAR-100 dataset according to the setting for training SimCLR in \citet{chen2020simple}. Then, a linear classifier is trained using the learned representations. Our implementation reproduces the typical accuracy of SimCLR over the CIFAR-100 dataset \citep{patacchiola2020self}. As in \citet{chen2020simple}, the hidden layer before the projection head is found to be a better representation than the layer after. Therefore, we apply our \textit{syre} method to the projection head or to all layers. According to Table \ref{tab:ssl}, \textit{syre} removes the low-rankness of the learned features and increases the accuracy trained with the features after projection while not changing the representation before projection. Thus, symmetry-induced reduction in model capacity can explain about $50\%$ of the performance difference between the representation of the two layers. Also, an interesting observation is that just improving the expressivity of the last layer is insufficient to close the gap between the performance of the last layer and the penultimate layer. This helps us gain a new insight: symmetry is not the only reason why the last layer representation is defective.

\begin{figure}[t!]
\vspace{-1em}
    \centering
    {
        \includegraphics[width=0.38\linewidth]{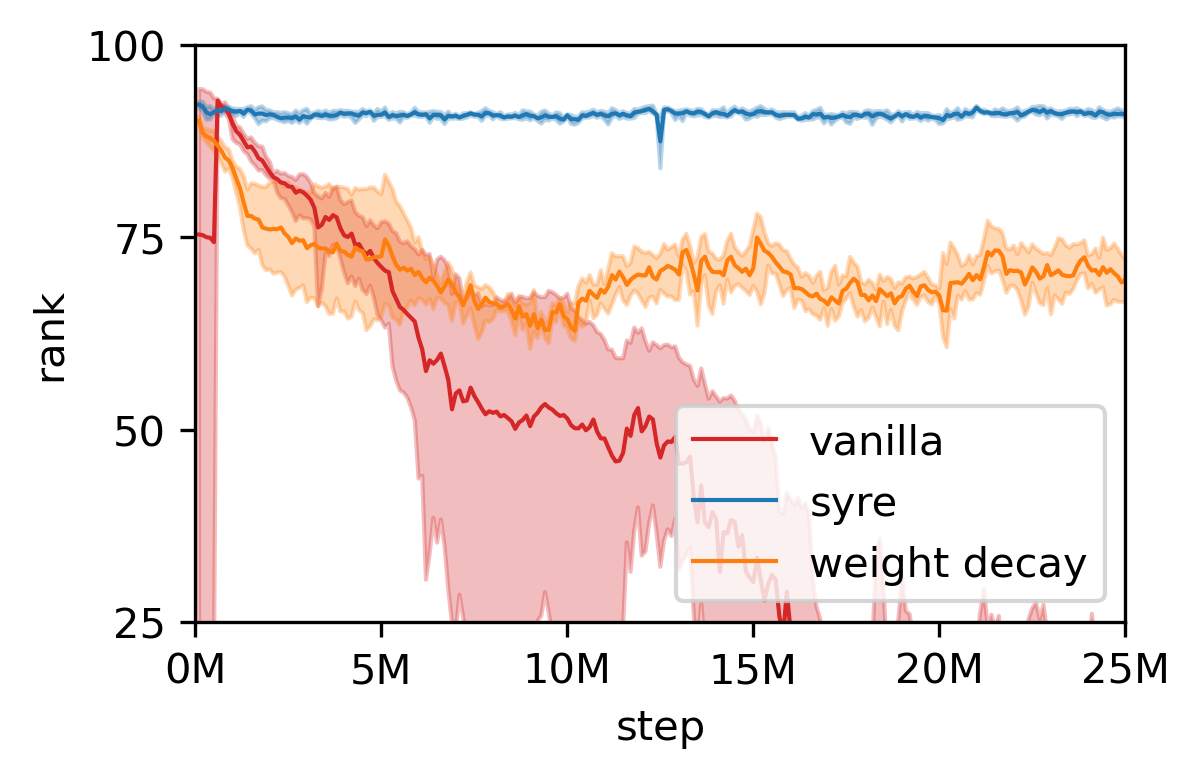}
        \includegraphics[width=0.38\linewidth]{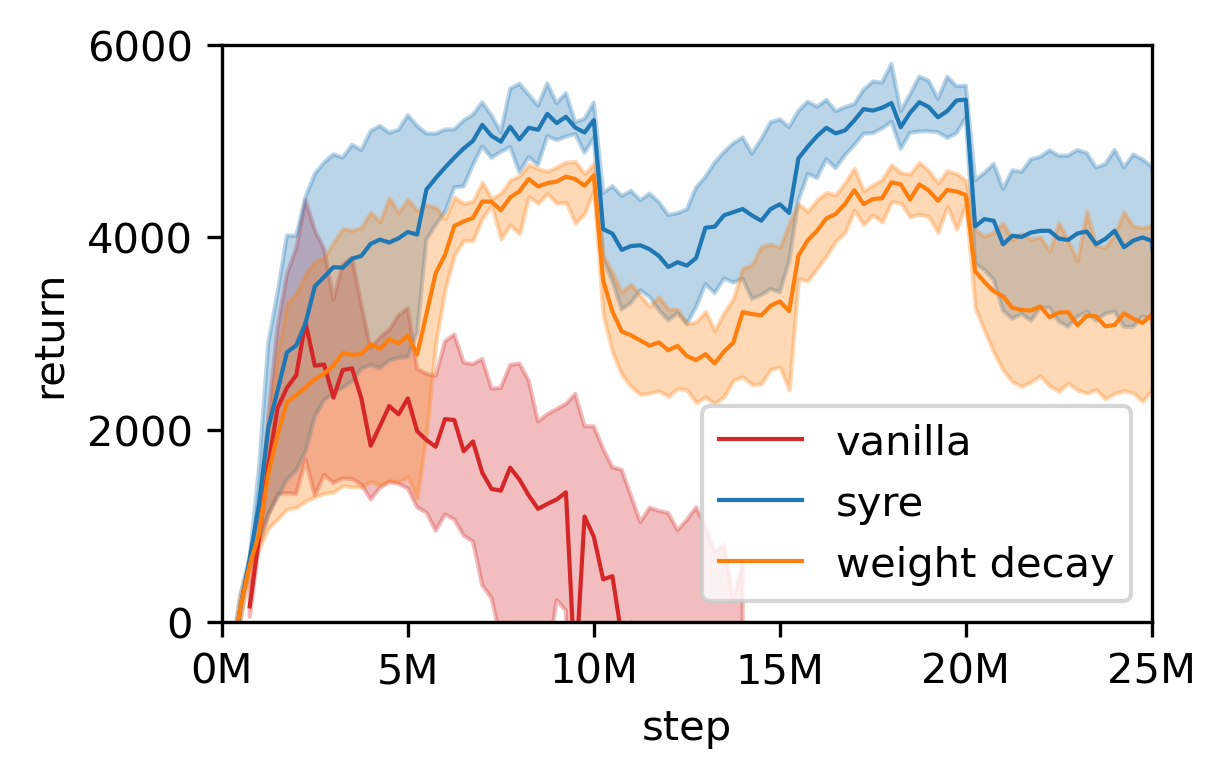}
    }
    \vspace{-1em}
    \caption{\small Loss of plasticity in continual learning in an RL setting. We use the PPO algorithm \citep{schulman2017proximal} to solve the Slippery-Ant problem \citep{dohare2023maintaining}. The rank and the performance of the vanilla PPO decrease quickly, while the rank and the performance of PPO with \textit{syre} remain the same, beyond that of PPO with weight decay. \textbf{Left}: the effective rank of the policy network as defined in \citet{dohare2023maintaining}. \textbf{Right}: returns. Each trajectory is averaged over $5$ different random seeds.}
\label{fig:RL}
\vspace{-6mm}
\end{figure}

\vspace{-2mm}
\subsection{Loss of Plasticity in Continual Learning}
\vspace{-1mm}
A form of low-capacity collapse also happens during continual learning, i.e., the plasticity of the network gradually decreases as the model is trained on more and more tasks. This problem is common in both supervised and reinforcement learning settings and may also be relevant to the finetuning of large language models \citep{mccloskey1989catastrophic,ash2020warm,dohare2021continual,abbas2023loss}.
\vspace{-2mm}
\paragraph{Supervised Learning.} In Figure \ref{fig:CNN}, we train a CNN with two convolution layers ($10$ channels and $20$ channels) and two fully connected layers ($320$ units and $50$ units) over the MNIST datasets. For the data, we randomly permute the pixels of the training and test sets for $9$ times, forming $10$ different tasks (including the original MNIST). We then train a vanilla CNN and a \textit{syre} CNN over the $10$ tasks continually with SGD and weight decay $0.01$. The inset of Figure \ref{fig:CNN} shows that the rank of the original model gradually decreases, but the \textit{syre} model remains close to full rank. Correspondingly, in the right side of Figure \ref{fig:CNN}, the accuracy over the test set drops while the rank of the original model collapses, but the accuracy of the \textit{syre} model remains similar.

\vspace{-2mm}
\paragraph{Reinforcement Learning.} In Figure \ref{fig:RL}, we use the PPO algorithm for the Slippery-Ant problem \citep{dohare2023maintaining}, a continual variant of the Pybullet's Ant problem \citep{coumans2016pybullet} with friction that changes every $5M$ steps. Hyperparameters for the PPO algorithm are borrowed from \citet{dohare2023maintaining}, and we use a weight decay of $0.002$ for both PPO with weight decay and with \textit{syre}. Figure \ref{fig:RL} suggests that \textit{syre} is effective in maintaining the rank of the model during continual training and obtains better performance than pure weight decay.

\vspace{-3mm}
\section{Conclusion}
\vspace{-2mm}
Symmetry-induced neural-network training problems exist extensively in machine learning. We have shown that the existence of symmetries in the model or loss function may severely limit the expressivity of the trained model. We then developed a theory that leverages the power of representation theory to show that adding random static biases to the model, along with weight decay, is sufficient to remove almost all symmetries, explicit or hidden. We have demonstrated the relevance of the method to a broad range of applications in deep learning, and a possible future direction is to deploy the method in large language models, which naturally contain many symmetries. 
Lastly, it is worth noting that on its own, symmetry is neither good nor bad. For example, practitioners may be interested in introducing symmetries to the model architecture in order to control the capacity or hierarchy of the model \citep{ziyin2025parameter}. However, with too much symmetry, the training of models becomes slow and likely to contain many low-capacity traps. Meanwhile, a model completely without symmetry may have undesirably high capacity and be more prone to overfitting. Having the right degree of symmetry might thus be crucial for achieving both smooth optimization and good generalization. With our proposed method, it becomes increasingly possible to deliberately fine-grain engineer symmetries in the loss function, introducing desired symmetries and removing undesirable ones.

\section*{Acknowledgements}

The authors acknowledge support from NTT Research, and from the Institute for Artificial Intelligence and Fundamental Interactions (IAIFI) through NSF Grant No. PHY-2019786.

\bibliography{iclr}

\begin{thebibliography}{44}
\providecommand{\natexlab}[1]{#1}
\providecommand{\url}[1]{\texttt{#1}}
\expandafter\ifx\csname urlstyle\endcsname\relax
  \providecommand{\doi}[1]{doi: #1}\else
  \providecommand{\doi}{doi: \begingroup \urlstyle{rm}\Url}\fi

\bibitem[Abbas et~al.(2023)Abbas, Zhao, Modayil, White, and
  Machado]{abbas2023loss}
Zaheer Abbas, Rosie Zhao, Joseph Modayil, Adam White, and Marlos~C Machado.
\newblock Loss of plasticity in continual deep reinforcement learning.
\newblock \emph{arXiv preprint arXiv:2303.07507}, 2023.

\bibitem[Ash \& Adams(2020)Ash and Adams]{ash2020warm}
Jordan Ash and Ryan~P Adams.
\newblock On warm-starting neural network training.
\newblock \emph{Advances in neural information processing systems},
  33:\penalty0 3884--3894, 2020.

\bibitem[Borovik et~al.(1996)Borovik, Pyber, and Shalev]{borovik1996maximal}
Alexandre Borovik, Laszlo Pyber, and Aner Shalev.
\newblock Maximal subgroups in finite and profinite groups.
\newblock \emph{Transactions of the American Mathematical Society},
  348\penalty0 (9):\penalty0 3745--3761, 1996.

\bibitem[Cariglia(2014)]{cariglia2014hidden}
Marco Cariglia.
\newblock Hidden symmetries of dynamics in classical and quantum physics.
\newblock \emph{Reviews of Modern Physics}, 86\penalty0 (4):\penalty0 1283,
  2014.

\bibitem[Chen et~al.(2023)Chen, Kunin, Yamamura, and
  Ganguli]{chen2023stochastic}
Feng Chen, Daniel Kunin, Atsushi Yamamura, and Surya Ganguli.
\newblock Stochastic collapse: How gradient noise attracts sgd dynamics towards
  simpler subnetworks.
\newblock \emph{arXiv preprint arXiv:2306.04251}, 2023.

\bibitem[Chen et~al.(2020)Chen, Kornblith, Norouzi, and Hinton]{chen2020simple}
Ting Chen, Simon Kornblith, Mohammad Norouzi, and Geoffrey Hinton.
\newblock A simple framework for contrastive learning of visual
  representations.
\newblock In \emph{International conference on machine learning}, pp.\
  1597--1607. PMLR, 2020.

\bibitem[Chizat et~al.(2018)Chizat, Oyallon, and Bach]{chizat2018lazy}
Lenaic Chizat, Edouard Oyallon, and Francis Bach.
\newblock On lazy training in differentiable programming.
\newblock \emph{arXiv preprint arXiv:1812.07956}, 2018.

\bibitem[Cohen \& Welling(2016)Cohen and Welling]{cohen2016group}
Taco Cohen and Max Welling.
\newblock Group equivariant convolutional networks.
\newblock In \emph{International conference on machine learning}, pp.\
  2990--2999. PMLR, 2016.

\bibitem[Coumans \& Bai(2016)Coumans and Bai]{coumans2016pybullet}
Erwin Coumans and Yunfei Bai.
\newblock Pybullet, a python module for physics simulation for games, robotics
  and machine learning, 2016.

\bibitem[{Dinh} et~al.(2017){Dinh}, {Pascanu}, {Bengio}, and
  {Bengio}]{Dinh_SharpMinima}
L.~{Dinh}, R.~{Pascanu}, S.~{Bengio}, and Y.~{Bengio}.
\newblock {Sharp Minima Can Generalize For Deep Nets}.
\newblock \emph{ArXiv e-prints}, March 2017.

\bibitem[Dohare et~al.(2021)Dohare, Sutton, and Mahmood]{dohare2021continual}
Shibhansh Dohare, Richard~S Sutton, and A~Rupam Mahmood.
\newblock Continual backprop: Stochastic gradient descent with persistent
  randomness.
\newblock \emph{arXiv preprint arXiv:2108.06325}, 2021.

\bibitem[Dohare et~al.(2023)Dohare, Hernandez-Garcia, Rahman, Sutton, and
  Mahmood]{dohare2023maintaining}
Shibhansh Dohare, J~Fernando Hernandez-Garcia, Parash Rahman, Richard~S Sutton,
  and A~Rupam Mahmood.
\newblock Maintaining plasticity in deep continual learning.
\newblock \emph{arXiv preprint arXiv:2306.13812}, 2023.

\bibitem[Entezari et~al.(2021)Entezari, Sedghi, Saukh, and
  Neyshabur]{entezari2021role}
Rahim Entezari, Hanie Sedghi, Olga Saukh, and Behnam Neyshabur.
\newblock The role of permutation invariance in linear mode connectivity of
  neural networks.
\newblock \emph{arXiv preprint arXiv:2110.06296}, 2021.

\bibitem[Frolov et~al.(2017)Frolov, Krtou{\v{s}}, and
  Kubiz{\v{n}}{\'a}k]{frolov2017black}
Valeri~P Frolov, Pavel Krtou{\v{s}}, and David Kubiz{\v{n}}{\'a}k.
\newblock Black holes, hidden symmetries, and complete integrability.
\newblock \emph{Living reviews in relativity}, 20:\penalty0 1--221, 2017.

\bibitem[Fukumizu(1996)]{fukumizu1996regularity}
Kenji Fukumizu.
\newblock A regularity condition of the information matrix of a multilayer
  perceptron network.
\newblock \emph{Neural networks}, 9\penalty0 (5):\penalty0 871--879, 1996.

\bibitem[Godfrey et~al.(2022)Godfrey, Brown, Emerson, and
  Kvinge]{godfrey2022symmetries}
Charles Godfrey, Davis Brown, Tegan Emerson, and Henry Kvinge.
\newblock On the symmetries of deep learning models and their internal
  representations.
\newblock \emph{Advances in Neural Information Processing Systems},
  35:\penalty0 11893--11905, 2022.

\bibitem[Goodfellow et~al.(2013)Goodfellow, Mirza, Xiao, Courville, and
  Bengio]{goodfellow2013empirical}
Ian~J Goodfellow, Mehdi Mirza, Da~Xiao, Aaron Courville, and Yoshua Bengio.
\newblock An empirical investigation of catastrophic forgetting in
  gradient-based neural networks.
\newblock \emph{arXiv preprint arXiv:1312.6211}, 2013.

\bibitem[Gorenstein(2007)]{gorenstein2007finite}
Daniel Gorenstein.
\newblock \emph{Finite groups}, volume 301.
\newblock American Mathematical Soc., 2007.

\bibitem[Hastie et~al.(2019)Hastie, Montanari, Rosset, and
  Tibshirani]{hastie2019surprises}
Trevor Hastie, Andrea Montanari, Saharon Rosset, and Ryan~J Tibshirani.
\newblock Surprises in high-dimensional ridgeless least squares interpolation.
\newblock \emph{arXiv preprint arXiv:1903.08560}, 2019.

\bibitem[Higgins et~al.(2016)Higgins, Matthey, Pal, Burgess, Glorot, Botvinick,
  Mohamed, and Lerchner]{higgins2016beta}
Irina Higgins, Loic Matthey, Arka Pal, Christopher Burgess, Xavier Glorot,
  Matthew Botvinick, Shakir Mohamed, and Alexander Lerchner.
\newblock beta-vae: Learning basic visual concepts with a constrained
  variational framework.
\newblock 2016.

\bibitem[Kingma \& Welling(2013)Kingma and Welling]{kingma2013auto}
Diederik~P Kingma and Max Welling.
\newblock Auto-encoding variational bayes.
\newblock \emph{arXiv preprint arXiv:1312.6114}, 2013.

\bibitem[Kirkpatrick et~al.(2017)Kirkpatrick, Pascanu, Rabinowitz, Veness,
  Desjardins, Rusu, Milan, Quan, Ramalho, Grabska-Barwinska,
  et~al.]{kirkpatrick2017overcoming}
James Kirkpatrick, Razvan Pascanu, Neil Rabinowitz, Joel Veness, Guillaume
  Desjardins, Andrei~A Rusu, Kieran Milan, John Quan, Tiago Ramalho, Agnieszka
  Grabska-Barwinska, et~al.
\newblock Overcoming catastrophic forgetting in neural networks.
\newblock \emph{Proceedings of the national academy of sciences}, 114\penalty0
  (13):\penalty0 3521--3526, 2017.

\bibitem[Kobayashi et~al.(2024)Kobayashi, Akram, and
  Von~Oswald]{kobayashi2024weight}
Seijin Kobayashi, Yassir Akram, and Johannes Von~Oswald.
\newblock Weight decay induces low-rank attention layers.
\newblock \emph{arXiv preprint arXiv:2410.23819}, 2024.

\bibitem[Krogh \& Hertz(1992)Krogh and Hertz]{krogh1992simple}
Anders Krogh and John~A Hertz.
\newblock A simple weight decay can improve generalization.
\newblock In \emph{Advances in neural information processing systems}, pp.\
  950--957, 1992.

\bibitem[Li et~al.(2019)Li, Lu, Arora, Haupt, Liu, Wang, and
  Zhao]{li2019symmetry}
Xingguo Li, Junwei Lu, Raman Arora, Jarvis Haupt, Han Liu, Zhaoran Wang, and
  Tuo Zhao.
\newblock Symmetry, saddle points, and global optimization landscape of
  nonconvex matrix factorization.
\newblock \emph{IEEE Transactions on Information Theory}, 65\penalty0
  (6):\penalty0 3489--3514, 2019.

\bibitem[Lim et~al.(2024)Lim, Putterman, Walters, Maron, and
  Jegelka]{lim2024empirical}
Derek Lim, Moe Putterman, Robin Walters, Haggai Maron, and Stefanie Jegelka.
\newblock The empirical impact of neural parameter symmetries, or lack thereof.
\newblock \emph{arXiv preprint arXiv:2405.20231}, 2024.

\bibitem[Lindner \& Rodger(2017)Lindner and Rodger]{lindner2017design}
Charles~C Lindner and Christopher~A Rodger.
\newblock \emph{Design theory}.
\newblock Chapman and Hall/CRC, 2017.

\bibitem[Lucas et~al.(2019)Lucas, Tucker, Grosse, and Norouzi]{lucas2019dont}
James Lucas, George Tucker, Roger Grosse, and Mohammad Norouzi.
\newblock Don't blame the elbo! a linear vae perspective on posterior collapse,
  2019.

\bibitem[McCloskey \& Cohen(1989)McCloskey and
  Cohen]{mccloskey1989catastrophic}
Michael McCloskey and Neal~J Cohen.
\newblock Catastrophic interference in connectionist networks: The sequential
  learning problem.
\newblock In \emph{Psychology of learning and motivation}, volume~24, pp.\
  109--165. Elsevier, 1989.

\bibitem[Neyshabur et~al.(2014)Neyshabur, Tomioka, and
  Srebro]{neyshabur2014search}
Behnam Neyshabur, Ryota Tomioka, and Nathan Srebro.
\newblock In search of the real inductive bias: On the role of implicit
  regularization in deep learning.
\newblock \emph{arXiv preprint arXiv:1412.6614}, 2014.

\bibitem[Patacchiola \& Storkey(2020)Patacchiola and
  Storkey]{patacchiola2020self}
Massimiliano Patacchiola and Amos~J Storkey.
\newblock Self-supervised relational reasoning for representation learning.
\newblock \emph{Advances in Neural Information Processing Systems},
  33:\penalty0 4003--4014, 2020.

\bibitem[Poon \& Peyr{\'e}(2021)Poon and Peyr{\'e}]{poon2021smooth}
Clarice Poon and Gabriel Peyr{\'e}.
\newblock Smooth bilevel programming for sparse regularization.
\newblock \emph{Advances in Neural Information Processing Systems},
  34:\penalty0 1543--1555, 2021.

\bibitem[Schulman et~al.(2017)Schulman, Wolski, Dhariwal, Radford, and
  Klimov]{schulman2017proximal}
John Schulman, Filip Wolski, Prafulla Dhariwal, Alec Radford, and Oleg Klimov.
\newblock Proximal policy optimization algorithms.
\newblock \emph{arXiv preprint arXiv:1707.06347}, 2017.

\bibitem[Simsek et~al.(2021)Simsek, Ged, Jacot, Spadaro, Hongler, Gerstner, and
  Brea]{simsek2021geometry}
Berfin Simsek, Fran{\c{c}}ois Ged, Arthur Jacot, Francesco Spadaro, Cl{\'e}ment
  Hongler, Wulfram Gerstner, and Johanni Brea.
\newblock Geometry of the loss landscape in overparameterized neural networks:
  Symmetries and invariances.
\newblock In \emph{International Conference on Machine Learning}, pp.\
  9722--9732. PMLR, 2021.

\bibitem[Tibshirani(2021)]{tibshirani2021equivalences}
Ryan~J Tibshirani.
\newblock Equivalences between sparse models and neural networks.
\newblock \emph{Working Notes. URL https://www. stat. cmu. edu/\~{}
  ryantibs/papers/sparsitynn. pdf}, 2021.

\bibitem[Wang et~al.(2021)Wang, Blei, and Cunningham]{wang2021posterior}
Yixin Wang, David Blei, and John~P Cunningham.
\newblock Posterior collapse and latent variable non-identifiability.
\newblock \emph{Advances in neural information processing systems},
  34:\penalty0 5443--5455, 2021.

\bibitem[Wang \& Ziyin(2022)Wang and Ziyin]{wang2022posterior}
Zihao Wang and Liu Ziyin.
\newblock Posterior collapse of a linear latent variable model.
\newblock \emph{Advances in Neural Information Processing Systems},
  35:\penalty0 37537--37548, 2022.

\bibitem[Yoshioka(2024)]{yoshioka2024visiontransformers}
Kentaro Yoshioka.
\newblock vision-transformers-cifar10: Training vision transformers (vit) and
  related models on cifar-10.
\newblock \url{https://github.com/kentaroy47/vision-transformers-cifar10},
  2024.

\bibitem[Zhao et~al.(2023)Zhao, Gower, Walters, and Yu]{zhao2023improving}
Bo~Zhao, Robert~M Gower, Robin Walters, and Rose Yu.
\newblock Improving convergence and generalization using parameter symmetries.
\newblock \emph{arXiv preprint arXiv:2305.13404}, 2023.

\bibitem[Ziyin(2024)]{ziyin2024symmetry}
Liu Ziyin.
\newblock Symmetry induces structure and constraint of learning.
\newblock In \emph{Forty-first International Conference on Machine Learning},
  2024.
\newblock URL \url{https://openreview.net/forum?id=7AF0AMI4AE}.

\bibitem[Ziyin \& Wang(2023)Ziyin and Wang]{ziyin2023sparsity}
Liu Ziyin and Zihao Wang.
\newblock {spred: Solving L1 Penalty with SGD}.
\newblock In \emph{International Conference on Machine Learning}, 2023.

\bibitem[Ziyin et~al.(2023)Ziyin, Lubana, Ueda, and Tanaka]{ziyin2023what}
Liu Ziyin, Ekdeep~Singh Lubana, Masahito Ueda, and Hidenori Tanaka.
\newblock What shapes the loss landscape of self supervised learning?
\newblock In \emph{The Eleventh International Conference on Learning
  Representations}, 2023.
\newblock URL \url{https://openreview.net/forum?id=3zSn48RUO8M}.

\bibitem[Ziyin et~al.(2024)Ziyin, Wang, Li, and Wu]{ziyin2024parameter}
Liu Ziyin, Mingze Wang, Hongchao Li, and Lei Wu.
\newblock Parameter symmetry and noise equilibrium of stochastic gradient
  descent.
\newblock In \emph{The Thirty-eighth Annual Conference on Neural Information
  Processing Systems}, 2024.

\bibitem[Ziyin et~al.(2025)Ziyin, Xu, Poggio, and Chuang]{ziyin2025parameter}
Liu Ziyin, Yizhou Xu, Tomaso Poggio, and Isaac Chuang.
\newblock Parameter symmetry breaking and restoration determines the
  hierarchical learning in ai systems.
\newblock \emph{arXiv preprint arXiv:2502.05300}, 2025.

\end{thebibliography}
\bibliographystyle{iclr2025_conference}

\clearpage
\appendix
\section{Theoretical Concerns}

\subsection{Proof of Proposition~\ref{prop: equivalent symmetries}}
\begin{proof}
    Part (1). Note that we have
    \begin{equation}
        \theta^\dagger = P\theta^\dagger + (I-P)\theta^\dagger = \theta' + (I-P)\theta^\dagger.
    \end{equation}
    Thus,
    \begin{equation}
        (I-2P)(I-P)\theta^\dagger = (I - P)\theta^\dagger. 
    \end{equation}
    Therefore, we have
    \begin{equation}
        f(\theta + \theta^\dagger)= f((\theta +  (I-P)\theta^\dagger) + \theta') =f((I-2P)(\theta + (I - P)\theta^\dagger) + \theta') =f((I-2P)\theta + \theta^\dagger).
    \end{equation}
    This proves part (1).

    Part (2). By definition,
    \begin{align}
        f'(\theta -\theta^\dagger + \theta') &= f(\theta + \theta') \\
        & = f((I-2P)\theta + \theta') \\
        &= f'((I-2P)\theta - \theta^\dagger + \theta').
    \end{align}
    This completes the proof.
\end{proof}

\subsection{Proof of Proposition~\ref{prop: symmetry removes feature}}
\begin{proof}
    By \eqref{eq:f(theta)-f(theta0)}, $g(x,\theta)$ simplifies to a kernel model
    \begin{equation}
        g(x,\theta)=\nabla_{\theta_0} f(x,\theta_0) (I- P)\theta.
    \end{equation}
    Let us consider the squared loss $\ell(\theta)=\sum_x||y(x)-g(x,\theta)||^2$ and denote $A:=\sum_x(I- P)\nabla_{\theta_0} f(x,\theta_0)^T\nabla_{\theta_0}f(x,\theta_0)(I- P)$, $b:=(I- P)\sum_x\nabla_{\theta_0} f(x,\theta_0)^Ty(x)$. Assuming the learning rate to be $\eta$, the GD reads
    \begin{equation}
        \theta^{t+1}=\theta^t-2\eta(A\theta^t-b),
    \end{equation}
    where $\theta^0=0$. If
    \begin{equation}
        \eta<\frac{1}{2\lambda_{max}(A)},
    \end{equation}
    GD converges to
    \begin{equation}
    \begin{aligned}
        \theta^*&=\lim_{t\to\infty}\sum_{k=0}^t(I-2\eta A)^k*2\eta b\\
        &=A^+b,
    \end{aligned}
    \end{equation}
    which is the well-known least square solution.
\end{proof}

\subsection{Proof of Proposition~\ref{prop: symmetry reduces dimension}}

\begin{proof}
    According to ~\citet[Theorem 4]{ziyin2024symmetry}, we have
    \begin{equation}
        P\nabla_\theta\ell(x,\theta_0)=0.
    \end{equation}
    Therefore, after one step of GD or SGD, we still have $P\theta_1=0$. By induction, we have $P\theta_t=0$.

    Finally, suppose that $\{a_i\}_{i=1}^{d-{\rm rank}(P)}$ forms a basis of $\text{ker}P$, and define $f'(x,\theta'):=f(x,\sum_{i=1}^{d-{\rm rank}(P)}\theta_i'a_i)$ for ${\rm dim}(\theta') = d - {\rm rank}(P)$. By choosing $\theta'_i=\theta^Ta_i$, we have $f'(x,\theta_t') = f(x, \theta_t)$.
\end{proof}

\subsection{Lemmas}
\begin{lemma}\label{lemma: 1}
    Let $x\in \mathbb{R}^d$ and $P$ be a projection matrix. Let $f(x)$ be a scalar function that satisfies
    \begin{equation}
        f(x + x') = f((I-2P)x + x') + c^T P x,
    \end{equation}
    where $c$ is a constant vector. Then, there exists a unique function $g(x)$ such that
    \begin{enumerate}
        \item $g(x)$ has the $x'$-centric $P$-symmetry,
        \item and $f(x) = g(x) + \frac{1}{2}c_0^T P x$.
    \end{enumerate}
\end{lemma}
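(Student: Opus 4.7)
The natural plan is to \emph{guess} the decomposition and then verify it works. Motivated by the fact that the violation of symmetry in the hypothesis is exactly the linear term $c^T P x$, I would try to absorb half of this into the ``symmetric'' piece $g$ by defining
\[
g(x) := f(x) - \tfrac{1}{2} c^T P x.
\]
This is the unique choice consistent with condition (2), so uniqueness will be free once I have existence.

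The main step is to check that this $g$ actually has the $x'$-centric $P$-symmetry. I would compute $g(x+x')$ and $g((I-2P)x + x')$ directly, and the single algebraic identity that drives everything is
\[
P(I-2P) = P - 2P^2 = P - 2P = -P,
\]
valid because $P$ is a projection. Using this, $P((I-2P)x + x') = -Px + Px'$, so the quadratic correction term in $g((I-2P)x+x')$ becomes $-\tfrac{1}{2}c^T(-Px + Px')$. Combining this with the hypothesis $f((I-2P)x + x') = f(x+x') - c^T P x$ collapses everything to $f(x+x') - \tfrac{1}{2} c^T P(x+x') = g(x+x')$, which is exactly the symmetry required. (Implicit here, and presumably assumed alongside the statement, is $Px' = x'$, matching the definition of $x'$-centric $P$-symmetry in the paper.)

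For uniqueness, condition (2) itself pins $g$ down: any $g'$ satisfying $f(x) = g'(x) + \tfrac{1}{2} c^T P x$ must equal $f(x) - \tfrac{1}{2} c^T P x = g(x)$ pointwise, so no further argument is needed. The only place where anything could go wrong is in the verification of the symmetry, and I expect no real obstacle there because the identity $P(I-2P) = -P$ does all the work; the rest is bookkeeping. The coefficient $\tfrac{1}{2}$ in the decomposition is forced precisely so that the $c^T P x'$ terms on the two sides cancel while the $c^T P x$ terms combine correctly with the hypothesis.
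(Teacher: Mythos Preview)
Your proposal is correct and takes essentially the same approach as the paper: both define $g(x)=f(x)-\tfrac12 c^TPx$ and use the identity $P(I-2P)=-P$ to check the $x'$-centric $P$-symmetry, with the paper merely phrasing the verification as a proof by contradiction rather than as a direct computation. Two minor remarks: your computation actually goes through without assuming $Px'=x'$ (the $c^TPx'$ terms appear symmetrically on both sides and cancel regardless), and your uniqueness argument from condition~(2) alone is cleaner than the paper's, which appeals to the even/odd decomposition idea.
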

\begin{proof}
    (a) Existence. $f(x) = g(x) + \frac{1}{2}c_0^T x$. Let us suppose $g(x)$ is not $x'$-centric $P$-symmetry. Then, there exists $x$ such that
    \begin{equation}
        g(x + x')  - g((I-2P)x + x') = \Delta \neq 0.
    \end{equation}
    Then, by definition, we have that 
    \begin{align}
        c_0^T Px &=f(x + x') + f((I-2P)x + x')  \\
        &=  g(x + x') - \frac{1}{2}c_0^T P (x + x')  -   g((I-2P)x + x') - c_0^TP(x + x')\\
        &= \Delta + \frac{1}{2}c_0^T P(x + x') - \frac{1}{2} c_0^TP((I-2P)x + x')\\
        &=\Delta  +  c_0^T Px.
    \end{align}
    This is a contradiction. Therefore, there must exist $g(x)$ that satisfies the lemma statement.

    (b) Uniqueness. Simply note that the expression of $g$ is uniquely given by\footnote{Alternatively, note that $c^Tx$ is odd and that $g(x)$ can be shifted by a constant to be an even function. The uniqueness follows directly from the fact that every function can be uniquely factorized into an odd function and an even function.}
    \begin{equation}
        g(x) = f(x + x')  - f((I-2P)x + x').
    \end{equation}
    
\end{proof}

\subsection{Proof of Theorem~\ref{theo: simple symmetry}}
\begin{proof}
    We prove by contradiction. Let us suppose there exists such pair, $(\theta', P)$.
    By definition, we have that 
    \begin{equation}
        \ell_{\rm r}(\theta + \theta') = \ell(\theta + \theta' + \theta_0) + \gamma \|\theta + \theta'\|^2.
    \end{equation}
    By assumption, we have that 
    \begin{equation}
        \ell((I-2P)\theta + \theta' + \theta_0) + \gamma \|(I-2P)\theta + \theta'\|^2 = \ell(\theta + \theta' + \theta_0) + \gamma \|\theta + \theta'\|^2,
    \end{equation}
    and, so, for all $\theta$,
    \begin{equation}
         \ell((I-2P)\theta + \theta' + \theta_0) = \ell(\theta + \theta' + \theta_0) + 4\gamma \theta^T P \theta'.
    \end{equation}
    There are two cases: (1) $P\theta' =0$ and (2) $P\theta' \neq 0$.
    
   For case (1), we have that $\ell((I-2P)\theta + \theta' + \theta_0) = \ell(\theta + \theta' + \theta_0)$, but this can only happen if the original loss $\ell$ has the $(\theta'+\theta_0)$-centric $P$-symmetry. By Property~\ref{property: finite set of symmetry}, this implies that
    \begin{equation}
        \theta'+\theta_0 = \theta^\dagger_i
    \end{equation}
    for some $i$. Applying $P$ on both sides, we obtain that
    \begin{equation}
        P\theta_0 = P \theta^\dagger_i.
    \end{equation}
    But, $\theta_0$ is a random variable with a full-rank covariance while the set $\{P \theta^\dagger_i\}$ has measure zero in the real space, and so this equality holds with probability zero.

    For case (2), it follows from Lemma~\ref{lemma: 1}  that for a fixed $x$ and $\theta_0$, $\ell(\theta)$ can be uniquely decomposed in the following form
    \begin{equation}
        \ell(\theta) = g(\theta) - 2\gamma \theta^T P \theta',
    \end{equation}
    where $g(\theta)$ has the $\theta'+\theta_0$-centric $P$-symmetry.

    Let $c_0=2\gamma P\theta'$ and $\tilde{\theta}=P(\theta'+\theta_0)$. Then $\ell(\theta)+c_0\theta$ has the $\tilde{\theta}$-centric $P$ symmetry. We also have $\tilde{\theta}-\frac{c_0}{2\gamma}=P\theta_0$. According to Assumption \ref{assumption}, there are only countable many such $\{c_0,\tilde{\theta}\}$ pairs. However, $P\theta_0$ is a standard Gaussian random variable, which leads to a contradiction with probability $1$.
\end{proof}

\begin{remark}
    It is easy to see that Assumption \ref{assumption} could be slightly relaxed. We only require $\{\tilde{\theta}-\frac{c_0}{2\gamma}\}$ to have a zero measure, which is also a necessary and sufficient condition.
\end{remark}

\subsection{Proof of Theorem~\ref{theo: infinite symmetry}}

\begin{proof}
 We prove by contradiction. Let us suppose there exists such pair, $(\theta', P)$.
    By definition, we have that 
    \begin{equation}
        \ell_{\rm ar}(\theta + \theta') = \ell(\theta + \theta' + \theta_0) + \gamma \|\theta + \theta'\|_D^2.
    \end{equation}
    By assumption, we have that 
    \begin{equation}
        \ell((I-2P)\theta + \theta' + \theta_0) + \gamma \|(I-2P)\theta + \theta'\|_D^2 = \ell(\theta + \theta' + \theta_0) + \gamma \|\theta + \theta'\|_D^2,
    \end{equation}
    and, so, for all $\theta$,
    \begin{equation}
         \ell((I-2P)\theta + \theta' + \theta_0) = \ell(\theta + \theta' + \theta_0) + 4\theta^T P D((I-P)\theta + \theta').
    \end{equation}
    There are two cases: (1) $P D((I-P)\theta + \theta') =0$ and (2) $P D((I-P)\theta + \theta') \neq 0$.

    Like before, there are two cases. For case (2), the proof is identical, and so we omit it. For case (1), it must be the case that for some $P$, and $\theta'$
    \begin{equation}
        \ell((I-2P)\theta + \theta' + \theta_0) = \ell(\theta + \theta' + \theta_0).
    \end{equation}
    This is possible if and only if $P(\theta' + \theta_0) = \theta^{\dagger}_i$ for some $i$ and $P=P_i$ for the corresponding projection matrix. However, because $P\theta' = 0$, this requires that 
    \begin{equation}
        P(\theta'+\theta_0) = \theta_i^\dagger.
    \end{equation}
    By the definition of the reflection symmetry, we have that 
    \begin{equation}
        P\theta' = \theta'. 
    \end{equation}
    This means that 
    \begin{equation}
        \theta' = \theta_i^\dagger - P \theta_0.
    \end{equation}
    At the same time, we have
    \begin{equation}
        P D((I-P)\theta + \theta') =0,
    \end{equation}
    which implies that 
    \begin{equation}
         P D(I-P)\theta  = - PD \theta'.
    \end{equation}
    Because the right hand side is a constant that only depends on $\theta$. This can only happen if both sides are zero, which is achieved if:
    \begin{equation}
        P D (I-P) = 0,
    \end{equation}
    and 
    \begin{equation}
        \theta_i^\dagger  =  P \theta_0.
    \end{equation}
    The first condition implies that 
    \begin{equation}
        PD = PDP = D^T P^T = DP,
    \end{equation}
    which implies that $P$ and $D$ must share the eigenvectors because they commute. Noting that 
    \begin{equation}
        P \theta_i^\dagger = \theta_i^\dagger,
    \end{equation}
    we obtain that $\theta_i^\dagger$ is an eigenvector of $P$ and so $\theta_i^\dagger$ is an eigenvector of $D$, but $D$ is diagonal and with nonidentical diagonal entries, $\theta_i^\dagger$ much then be a one-hot vector, and $P$ must also be diagonal and consists of values of $1$ and $0$ in the diagonal entries.

\end{proof}

\subsection{Proof of Theorem \ref{theo: sensitivity}}
\begin{proof}
    First of all, note that shifting parameters of $\ell$ is the same as shifting the parameters of the weight decay. Therefore, for any local minimum $\theta'$,
    \begin{equation}
        \ell_{\rm ar}(\theta') = \ell(\theta) + \gamma \|\theta - \theta_0\|^2,
    \end{equation}
    where $\theta = \theta' +\theta_0$.
    
    Therefore, 
    \begin{align}
        &{\ell_{\rm ar}(\theta' + \theta^*) - \ell_{\rm ar}((I-2P)\theta' +\theta^*)} \\
        &= \ell(\theta + \theta^*) + \gamma \|\theta  + \theta^* - \theta_0\|_D^2 - \ell((I-2P)\theta + \theta^*) - \gamma \|(I-2P)\theta  + \theta^* - \theta_0\|_D^2 \\
        & =\gamma \|\theta  + \theta^* - \theta_0\|_D^2 -  \gamma \|(I-2P)\theta  + \theta^* - \theta_0\|_D^2\\
        &= \gamma (z_\perp^T (D-I) z_\parallel +  z_\perp^T D (\theta^* -\theta_0) ),
    \end{align}
    where we have used the definition of reflection symmetry in the third line. In the fourth line, we have defined $z_\perp = P \theta$ and $z_\parallel = (I-P) \theta$. Thus,
    \begin{equation}
        \|\theta \|^2 = \|z_\perp\|^2 + \|z_\parallel\|^2 = \Theta(1).
    \end{equation}

    Thus, we have that 
    \begin{align}
        \Theta( {\ell_{\rm ar}(\theta' + \theta^*) - \ell_{\rm ar}((I-2P)\theta' +\theta^*)}) &= \gamma \Theta ( z_\perp^T (D-I) z_\parallel) + \Theta(z_\perp^T D (\theta^* -\theta_0))\\
        &= \gamma \Theta (\sigma_D \|z_{\perp}\|) + \Omega(\sigma_0\|z_{\perp}\|),
    \end{align}
    where we have used the fact that each element of $\theta^* -\theta_0$ is $\Omega (\sigma_0)$ because $\theta^*$ is an arbitrary constant 
    and $\theta_0\sim\mathcal{N}(0,\sigma^2)$. By the assumption that $\sigma_D = o(\sigma_0)$, we obtain the desired relation 
    \begin{equation}
        {\ell_{\rm ar}(\theta' + \theta^*) - \ell_{\rm ar}((I-2P)\theta' +\theta^*)} =  \Omega(\gamma\sigma_0) \|z_{\perp}\|.
    \end{equation}
    This finishes the proof.
\end{proof}

\subsection{Proof of Theorem \ref{theo:N symmetries}}
\begin{proof}
    First of all, 
    \begin{align}
        \Pr(\min_i |\Delta_i| > \epsilon) &= \Pr( |\Delta_1| > \epsilon \land  ... \land |\Delta_N| > \epsilon) \\
        & \geq \max(0,  \sum_i^N \Pr( |\Delta_i| > \epsilon) - N + 1 ),
    \end{align}
    where we have applied the Frechet inequality in the second line. 

    The sum $\sum_i^N \Pr( |\Delta_i| > \epsilon)$ can also be lower bounded if each $\Delta_i$ is a Gaussian variable with variance $\sigma_i$:
    \begin{align}
        \sum_i^N \Pr( |\Delta_i| > \epsilon) &\geq \sum_i^N (1 -\frac{2\epsilon}{\sqrt{2\pi \sigma_i^2}}) \\
        &\geq N - \frac{2\epsilon N}{ \min_i\sqrt{2\pi \sigma_i^2}}.
    \end{align}
    Thus, for $\Pr(\min_i |\Delta_i| > \epsilon)$ to be larger than $1-\delta$, we must have
    \begin{equation}
        \min_i\sigma_i \geq  \frac{2\epsilon N }{\sqrt{2 \pi}(1-\delta)}.
    \end{equation}
    Now, we show that $\Delta_i$-s are indeed Gaussian variables. From the previous proof, it is easy to see that for a unit vector $n_i$,
    \begin{equation}
         \Delta_i = \gamma n_i^T (\theta^* - \theta_0) + o(\gamma  \sigma_0).
    \end{equation}
    Therefore, $\Delta_i$ is a Gaussian variable with standard deviation $\gamma \sigma_0$. Thus, $\min_i \sigma_i = \gamma \sigma_0$. Thus, we have obtained the desired result
    \begin{equation}
        \gamma \sigma_0 \geq  \frac{2\epsilon N }{\sqrt{2 \pi}(1-\delta)}.
    \end{equation}
\end{proof}

\subsection{Proof of Theorem \ref{theo: general group}}\label{app sec: general group}

Before we start proving Theorem~\ref{theo: general group}, we introduce a definition that facilitates the proof.
\begin{definition}
    (Symmetry reduction.) We say that removing a symmetry from group $G_1$ reduces to removing the symmetry due to group $G_2$ if for any vector $n$,
    \begin{equation}
        \|(I- \overline{G_2}) n \| \leq \|(I- \overline{G_1}) n \|.
    \end{equation}
\end{definition}

Now, we are ready to prove the main theorem.

\begin{proof}
We first show $(I-\overline{V})^T\nabla_\theta\ell(\theta)=0$. For any $g\in V$ and $z\in\mathbb{R}$, we have 
\begin{equation}
    \ell(\theta+zn)=\ell(g(\theta+zn)),
\end{equation}
where $n$ is an arbitrary unit vector.
Taking the derivative with respect to $z$, and recalling that
$g\theta=\theta$, we have
\begin{equation}
    (gn)^T\nabla_\theta\ell(\theta)=n^T\nabla_\theta\ell(\theta).
\end{equation}
Accordingly, we have
\begin{equation}
    (Vn)^T\nabla_\theta\ell(\theta):=\frac{1}{|V|}\sum_{g\in V}(gn)^T\nabla_\theta\ell(\theta)=n^T\nabla_\theta\ell(\theta).
\end{equation}
Due to the arbitrary choice of $n$, we have $(I-\overline{V})^T\nabla_\theta\ell(\theta)=0$.

Therefore,
\begin{align}
    (I-\overline{V})^T\nabla_\theta\ell_{\rm ar}(\theta)
    &= \gamma(I-\overline{V})^T\nabla_\theta \|\theta- \theta_0\|_D^2 \\
    & =2\gamma (I-\overline{V})^TD(\theta  - \theta_0)\\
    &= 2\gamma(I-\overline{V})^T\theta_0 +o(\gamma\sigma_0(1+\sigma_D)).\label{eq:delta_Vi}
\end{align}
As $\theta_0$ is a Gaussian vector with mean $0$ and variance $\sigma_0^2$, $||(I-\overline{V})^T\theta_0 ||$ is a Gaussian variable with mean $0$ and variance $||I-\overline{V}||^2\sigma_0^2=\Omega({\rm rank}(I-\overline{V})\sigma_0^2)$, which gives
$\|(I-\overline{V})\nabla_{\theta} \ell \|= \Omega(\gamma \sigma_0 \sqrt{{\rm rank}(I-\overline{V})}).$

Now, we prove part (2) of the theorem. Note that if $V\lhd U$ and if $\theta_0 \in \ker \overline{V}$, then $\theta_0 \in \ker \overline{U}$. This means that for any group $U$ such that $V\lhd U$ and any vector $\theta_0$

\begin{equation}
    \| (I -\overline{V}) \theta_0\| <  \| (I- \overline{U}) \theta_0 \|.
\end{equation}
This means that to remove the symmetry from a large group $U$, it suffices to remove the symmetry from one of its minimal subgroups. Thus, let $M_G$ denote the set of minimal subgroups of the group $G$, we have 
\begin{equation}
    \min_{V\lhd G} \| ( I - \overline{V})\theta_0\| \geq \min_{V\lhd M_G} \| ( I - \overline{V})\theta_0\|.
\end{equation}
The number of minimal subgroups is strictly upper bounded by the number of elements of the group because all minimal subgroups are only trivially intersect each other. This follows from the fact that the intersection of groups must be a subgroup, which can only be the identity for two different minimal subgroups. Therefore, the number of minimal subgroups cannot exceed the number of elements of the group. This finishes the second part of the theorem.

For the third part, we show that the symmetry broken subspace of any subgroup contains the symmetry broken subspace of a group generated by one of the generators and so it suffices to only remove the symmetries due to the subgroups generated by each generator. 
Let us introduce the following notation for a matrix representation $z$ of a group element:
\begin{equation}
    \overline{z} = \frac{1}{\rm ord(z)}\sum_{i=1}^{\rm ord(z)} z^i,
\end{equation}
where ${\rm ord}(z)$ denotes the order of $z$. This is equivalent to the symmetry projection matrix of the subgroup generated by $z$.

Now, let $G$ be abelian. Then, both $U$ and $V$ are abelian. Let us denote by $\Gamma(U) =\{z_i\}$ the mininal generating set of $U$. Suppose that for all $n\neq 0$ such that $n \in {\rm im}(I-\overline{V})$, we must have $n \notin {\rm im}(I-\overline{z_j})$ for all $j$. This means that 
\begin{equation}
    n \notin  \bigcup_{j} {\rm im}(I-\overline{z_j}).
\end{equation}
Or, equivalently,
\begin{equation}
    n \in \bigcap_{j} {\rm im}(\overline{z_j}).
\end{equation}
However, the space $\bigcap_{j} {\rm im}(\overline{z_j})\subseteq   {\rm im}(\overline{V}) $ because $V$ is a subgroup of $U$, which is generated by $z_1,\cdots,z_m$. To see this, let $n \in {\rm im}(\overline{z_j})$ for all $j$, then, 
\begin{equation}
    z_j n = n
\end{equation}
for all $j$. Now, let $v = \prod_i z_i^{d_i(v)} \in V$, we have
\begin{equation}
    ( I - \overline{V})n  = (I - \sum_v \prod_i z_i^{d_i(v)} ) n = 0
\end{equation}
This means that $n$ is in both ${\rm im}(\overline{V})$ and ${\rm im}(I - \overline{V})$, which is possible only if $n = 0$ -- a contradiction. Therefore, as long as $I-\bar{V}$ is not rank $0$, it must share a common subspace with one of the $I-\overline{z_j}$, and so removing the symmetry from any subgroup $V$ of $U$ can be reduced to removing the symmetry from the cyclic group generated by one of its generators from the minimal generating set.\footnote{This holds true even if $V$ is a subset of $\langle z_j\rangle$.}

Therefore, we have proved that removing symmetries due to any subgroup of $U$ can be reduced to removing the symmetry from a (proper or trivial) subgroup of each of the cyclic decompositions of the group $U$, each of which is generated by a minimal generator of $U$. By the fundamental theorem of finite abelian groups, each of these groups is of order $p^k$ for some prime number $p$. Because each of these groups is cyclic, it contains exactly $k$ nontrivial subgroups. Taken together, this means that if $|U| = p_1^{k_1} ... p_m^{k_m}$, we only have to remove symmetries from at most 
\begin{equation}
    \sum_{i}^m k_i = \log|U|
\end{equation}
many subgroups. This completes part (3).

For part (4), we denote $\Delta_i:=(I-\overline{V_i})^T\nabla_\theta\ell_{\rm ar}(\theta)$ for $i=1,\cdots,|\Gamma(U)|$. According to \eqref{eq:delta_Vi}, $\Delta_i$ is approximately a Gaussian variable  with zero mean and variance $\gamma^2{\rm rank}(I-\overline{V})\sigma_0^2$. Therefore,
\begin{align}
\sum_i^{|\Gamma(U)|} \Pr( |\Delta_i| > \epsilon)
&\geq |\Gamma(U)| - \frac{2\epsilon |\Gamma(U)|}{ \min_i\sqrt{2\pi \gamma^2{\rm rank}(I-\overline{V})\sigma_0^2}}.
\end{align}
For $\Pr(\min_i |\Delta_i| > \epsilon)$ to be larger than $1-\delta$, we must have
\begin{equation}
\gamma\sigma_0 \geq  \frac{2\epsilon |\Gamma(U)| }{\sqrt{2 \pi\ {\rm rank}(I-\overline{V})}(1-\delta)}.
\end{equation}
\end{proof}

\clearpage

\section{Additional Experiments and Experimental Detail}
\label{app:exp}
\subsection{Teacher-student Scenario}\label{app:teacher}
This section gives some additional details and additional experiments in the teacher-student scenario in Figure \ref{fig:input_dim}. Specifically, we implement a two-layer network with tanh activation, $300$ hidden units, and different input units. The network outputs a ten-dimensional vector corresponding to ten different classes. We then randomly generate such a network as the teacher model, $10000$ standard Gaussian samples as the training set, and $1000$ standard Gaussian samples. For both the \textit{syre} and the vanilla model, we choose the Adam optimizer, learning rate $0.01$, and weight decay $0.01$.

As additional experiments, we also measure the influence of noisy labels and noisy input on the rank of the model in Figure \ref{fig:input_label_noise}. For the label noise, we randomly change $0\%$ to $80\%$ of the labels, and for the input noise, we add a Gaussian noise to the input with standard deviation $0$ to $1.6$. Figure \ref{fig:input_label_noise} suggests that the rank of the vanilla model decreases in the face of noisy labels and increases in the face of noisy input, perhaps because the latter can be regarded as data augmentation. The \textit{syre} model, however, is not affected.

\begin{figure}[t!]
    \centering
    {
        \includegraphics[width=0.38\linewidth]{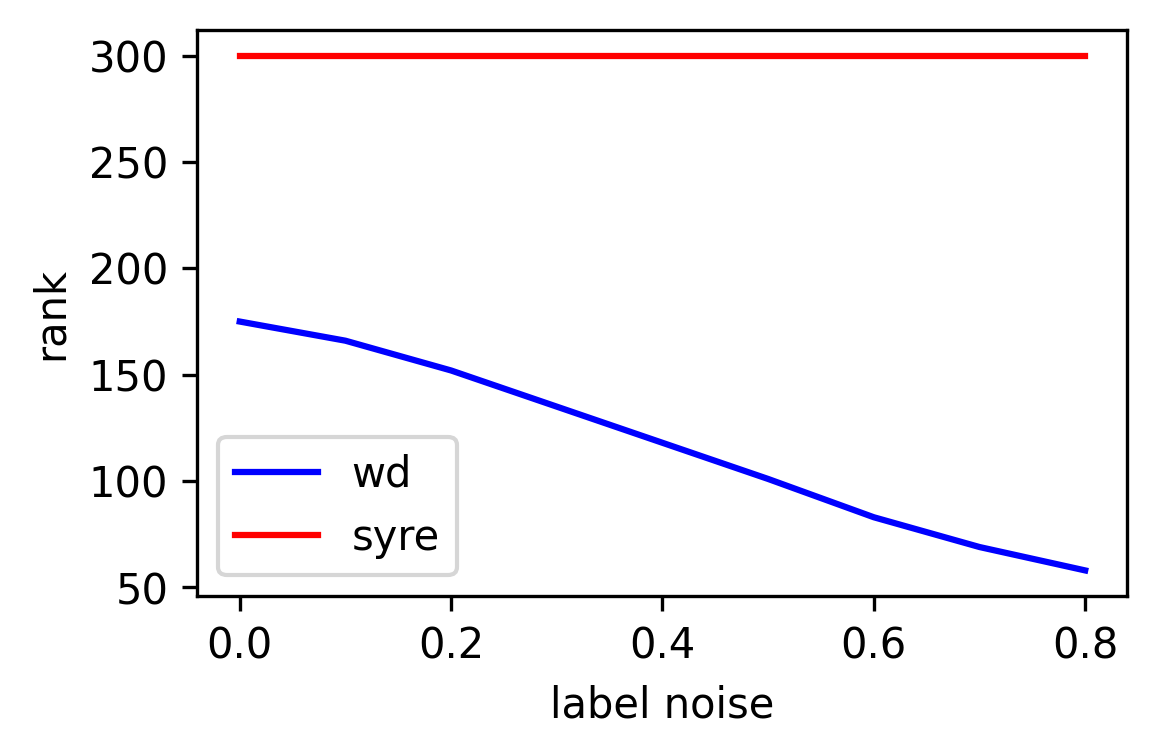}
        \includegraphics[width=0.38\linewidth]{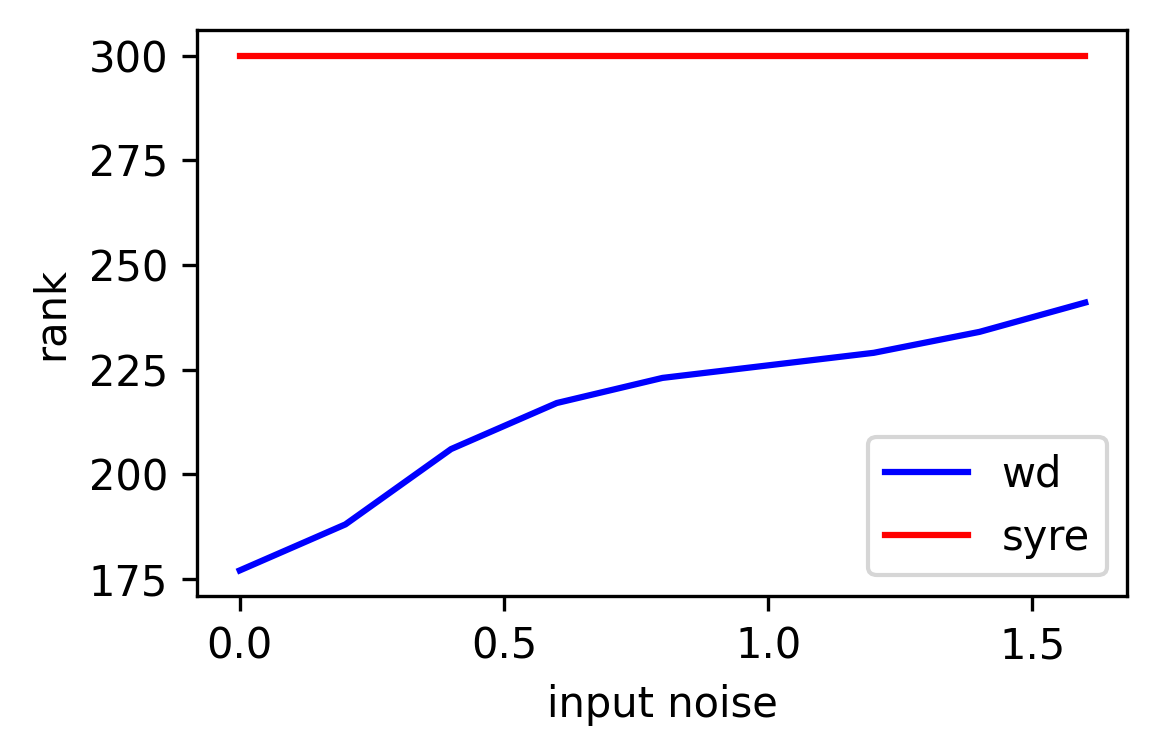}
    }
    \caption{\small For the same setting as Figure \ref{fig:input_dim}, the rank of the covariance matrix decreases with the label noise and increases with the input noise. We set eigenvalues smaller than $10^{-3}$ to zero.}
\label{fig:input_label_noise}
\end{figure}

\subsection{Supervised Learning}
This section presents some additional experiments for Section \ref{sec:feature learning}. Figure \ref{fig:spectrum} gives the eigenvalue distribution of the networks in Fig.\ref{fig:4layer_sum}, which further supports the claim that the vanilla network leads to a low-rank solution. 
In all the experiments above and in Section \ref{sec:feature learning}, we use a four-layer FCN with $300$ neurons in each layer trained on the MNIST dataset with batch size $64$. 

\begin{figure}[t!]
    \centering
    {
        \includegraphics[width=0.38\linewidth]{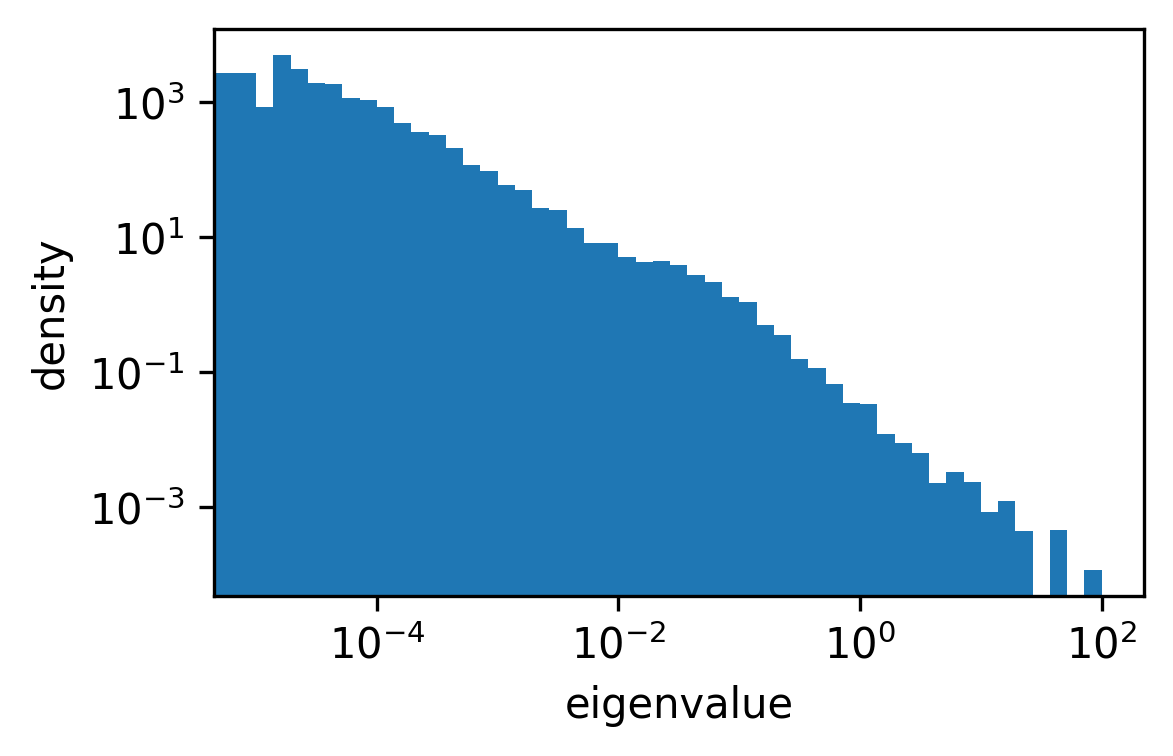}
        \includegraphics[width=0.38\linewidth]{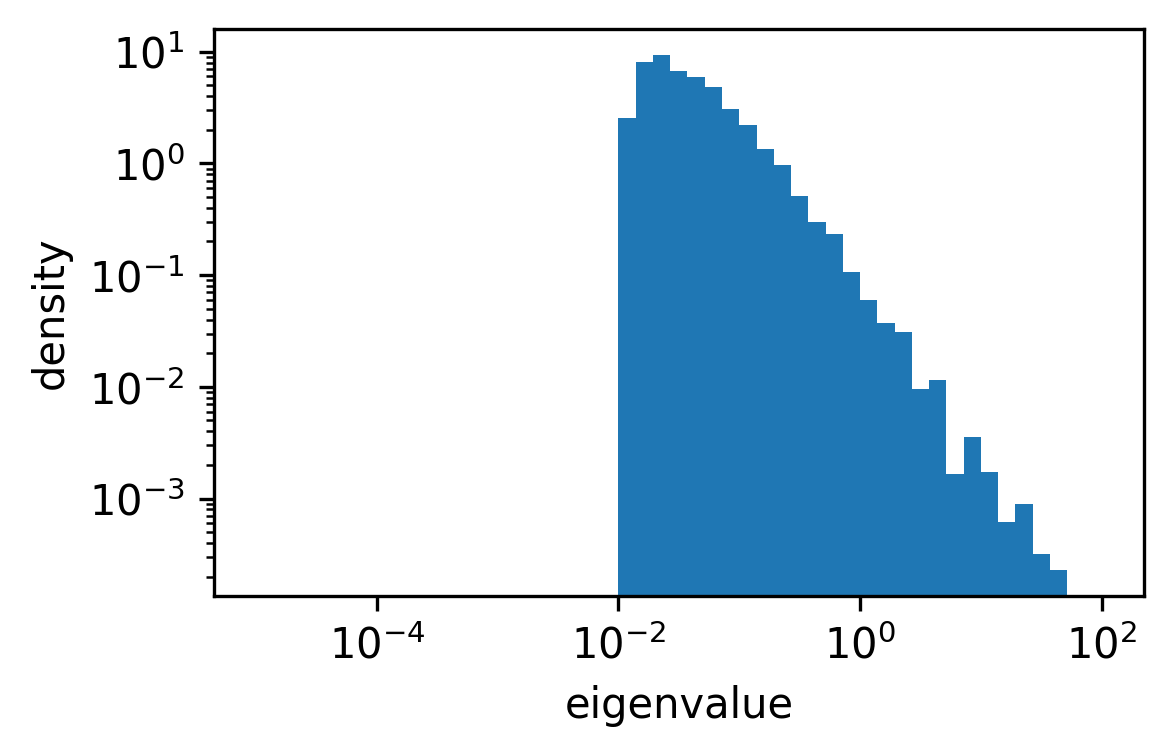}
    }
    \caption{\small The spectrum of the covariance matrix of the model \update{with vanilla weight decay} (\textbf{Left}) and the \textit{syre} model (\textbf{Right}) for $\gamma=0.01$ and $\alpha=1$. Clearly, the vanilla model learns a low-rank solution.}
\label{fig:spectrum}
\end{figure}

\clearpage
\subsection{Posterior Collapse}
See Figures~\ref{fig:VAE} and \ref{fig:VAE_image}.

\begin{figure}[t!]
    \centering
    {
        \includegraphics[width=0.55\linewidth]{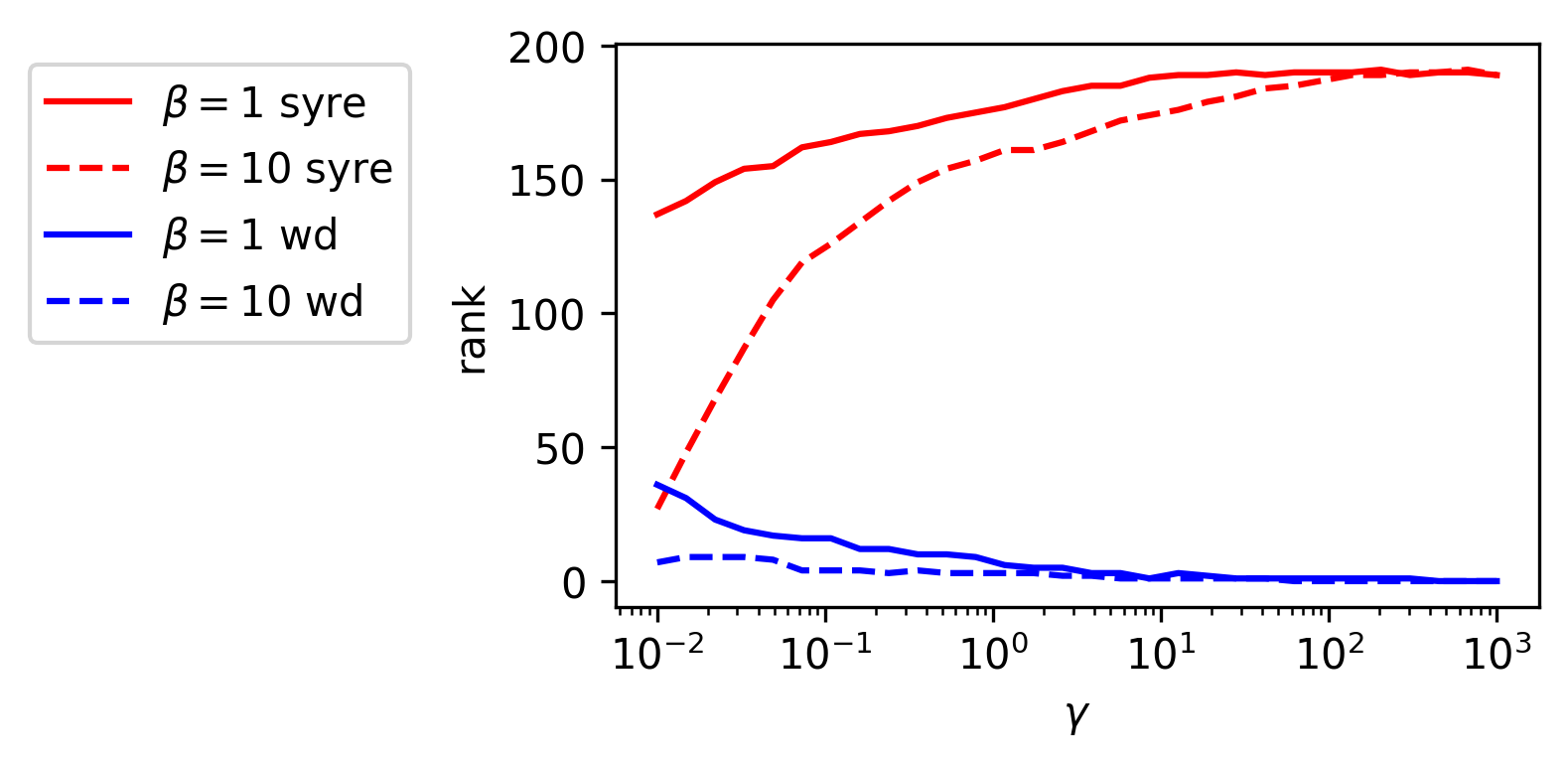}
        \includegraphics[width=0.38\linewidth]{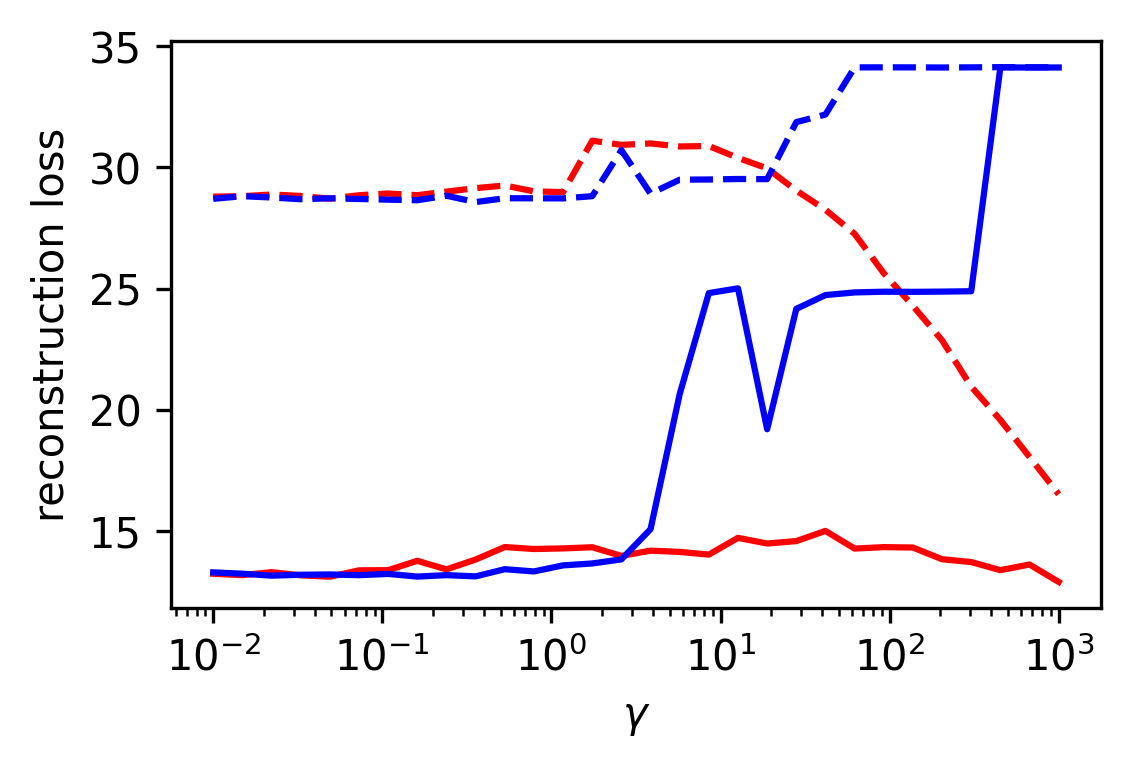}
    }
    \vspace{-1em}
    \caption{\small Rank and reconstruction loss for a VAE is trained on the Fashion MNIST dataset. The covariance of the model \update{with vanilla weight decay} has a low-rank structure and larger reconstruction loss. More importantly, posterior collapse happens at $\beta=5$ but is mitigated with weight decay. \textbf{Left}: the rank of the encoder output with batch size $1000$. We set eigenvalues smaller than $10^{-6}$ to $0$. \textbf{Right}: reconstruction loss of vanilla and \textit{syre} models.}
\label{fig:VAE}
\end{figure}

\begin{figure}[t!]
    \centering
    {
        \includegraphics[width=0.4\linewidth]{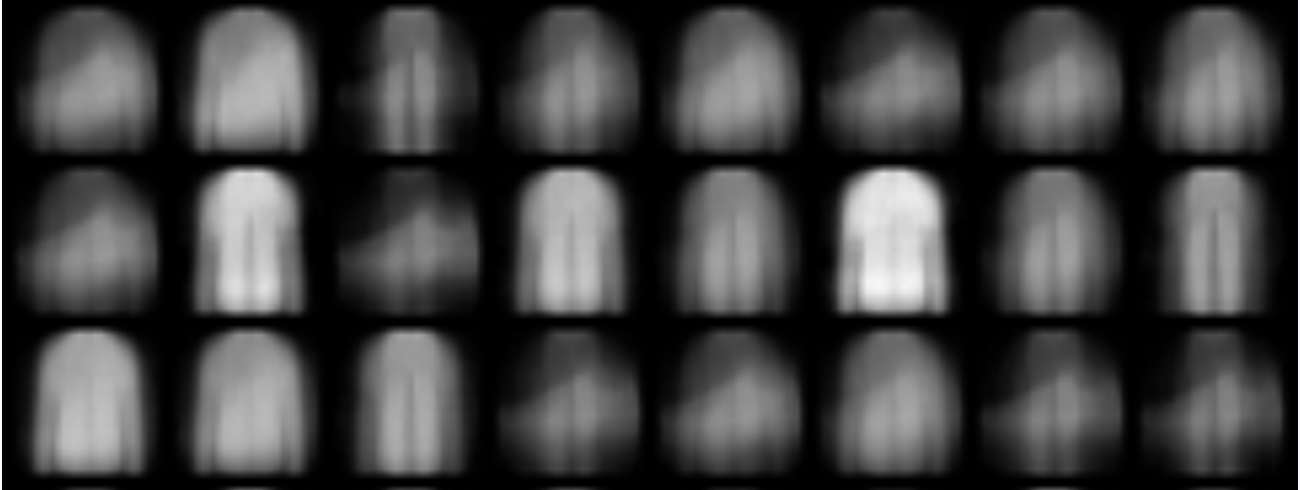}
        \includegraphics[width=0.408\linewidth]{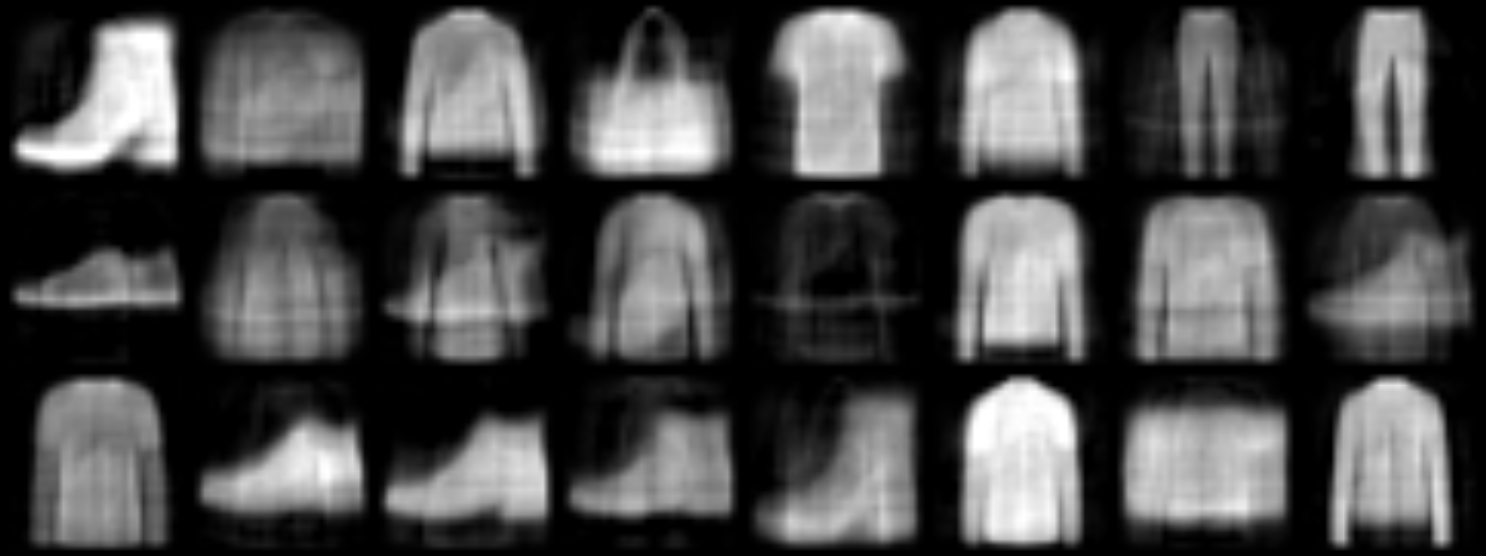}
    }
    \caption{\small Examples of Fashion MNIST reconstruction with \textit{syre} and $\beta=10$. \textbf{Left}: No weight decay. \textbf{Right}: $\gamma=1000$. Clearly, the posterior collapse is mitigated by imposing \textit{syre} with weight decay.}
\label{fig:VAE_image}
\vspace{-2mm}
\end{figure}

\subsection{Continual Learning}
See Figure~\ref{fig:CNN}.
\begin{figure}[t!]
    \centering
    \includegraphics[width=0.37\linewidth]{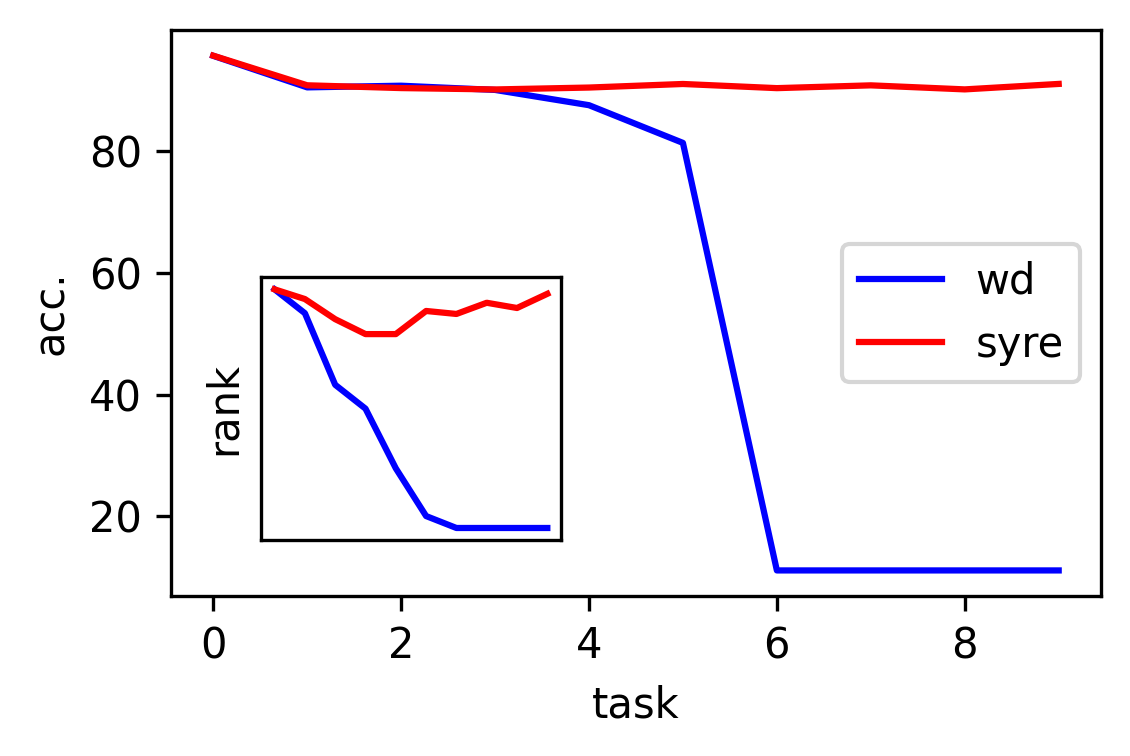}
    \vspace{-1em}
    \caption{\small Performance and accuracy of a CNN trained on a continual learning task (permuted MNIST \citep{goodfellow2013empirical,kirkpatrick2017overcoming}). The main figure shows the test accuracy, and the subfigure shows the rank of the convolution layers output with batch size $1000$, where we set eigenvalues smaller than $10^{-4}$ to $0$. The results suggest that
    the rank of the model \update{with vanilla weight decay} gradually decreases, and the model completely collapses after the sixth task, while the \textit{syre} model remains unaffected.}
    \label{fig:CNN}
\end{figure}

\clearpage
\begin{figure}[t!]
     \centering
     {
         \includegraphics[width=0.38\linewidth]{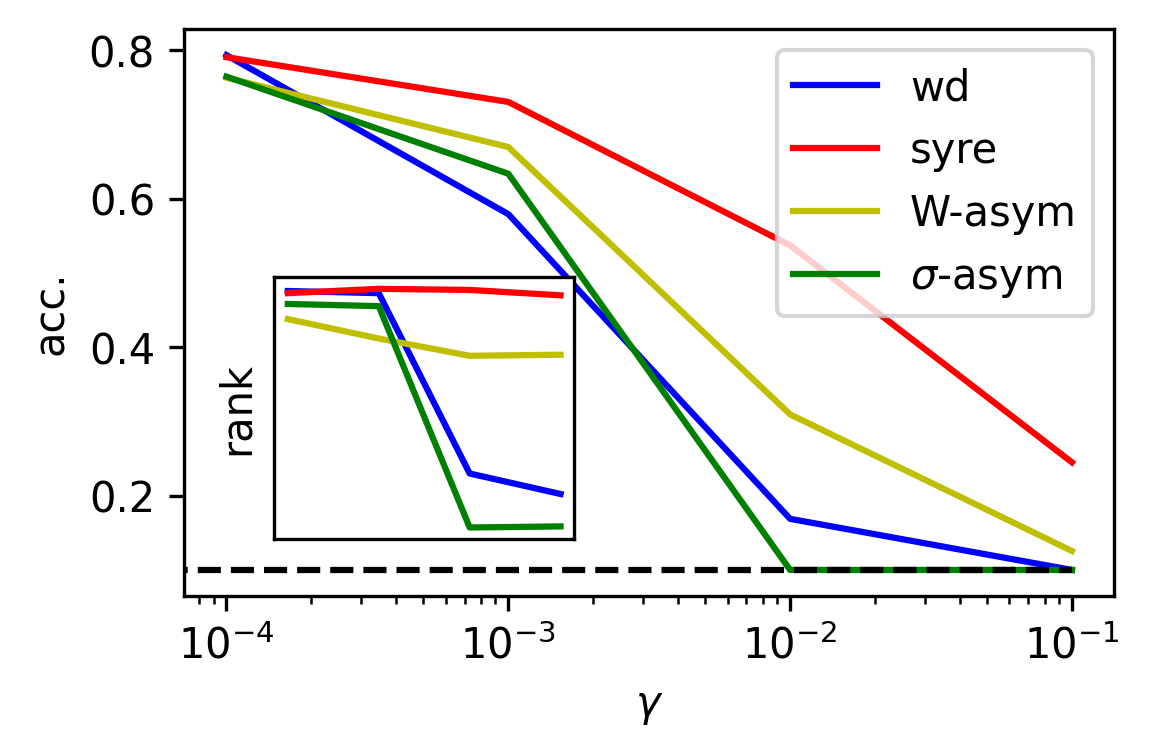}
         \includegraphics[width=0.38\linewidth]{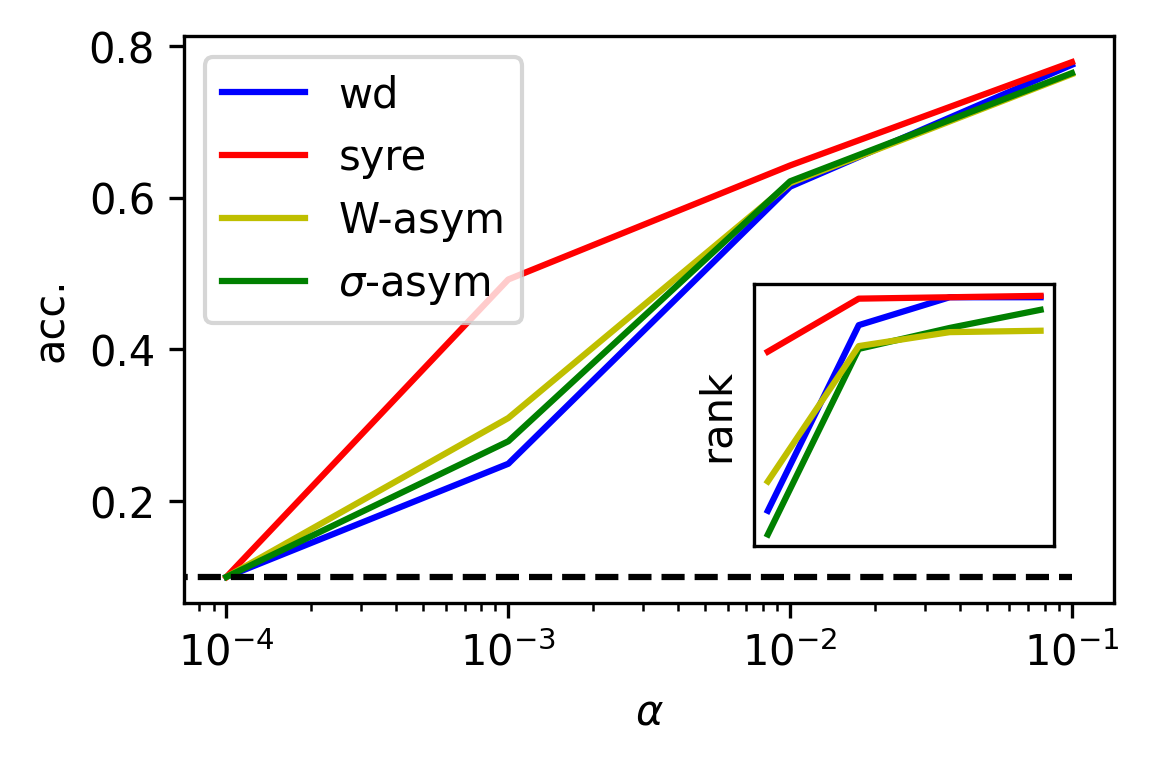}
     }
     \vspace{-1em}
     \caption{\update{Test accuracy and rank of ViT for CIFAR10 (adapted from \cite{yoshioka2024visiontransformers}) with various weight decay $\gamma$ and data distributions with varyings strengths of linear correlation $\alpha$. As the theory predicts, the covariance of features of the model \update{with vanilla weight decay} has a low-rank structure and performs significantly worse. Dashed black lines correspond to random guesses. Subfigures show the rank of the covariance matrix of the final layer input with batch size $512$. We set eigenvalues smaller than $10^{-4}$ to $0$. 
     \textbf{Left}: $\alpha=1$ and different $\gamma$. \textbf{Right}: $\gamma=0.0001$ and different $\alpha$.}}
 \label{fig:vit}
 \vspace{-2mm}
\end{figure}

\update{
\subsection{Vision Transformer (ViT)}\label{app sec: vit}
\label{sec:vit}
Similar to Section \ref{sec:feature learning}, we train the vanilla and \textit{syre} ViT with various levels of weight decay $\gamma$ and various levels of input-output covariance $\alpha$. The dataset is constructed by rescaling the input by a factor of $\alpha$ for the CIFAR10 dataset. As the theoretical prediction, we can see that \textit{syre} removes the rotation symmetries symmetry of transformers \citep{kobayashi2024weight} and leads to a full-rank solution with higher accuracy, as shown in Figure \ref{fig:vit}. Moreover, \textit{syre} also performs better than the W-asymmetric and $\sigma$-asymmetric networks proposed in \cite{lim2024empirical}. We implement $\sigma$-asymmetric networks using the recommended hyperparameters in \cite{lim2024empirical}, and implement W-asymmetric networks by setting the same hyperparameters for each layer and doing a grid search.

\subsection{Time and Memory Consumption}

The time and memory consumption of various models used in previous sections is given in Tables \ref{tab:time} and \ref{tab:memory}. In all experiments, we train the models on the CIFAR10 dataset with a signal A6000 GPU. It is clear that \textit{syre} has a neglectable influence on time and memory consumption. This is as expected because \textit{syre} only adds a static bias to training.


\begin{table}[t!]
    \centering
    \begin{tabular}{c|cc|c c |cc}
    \hline
          batchsize&MLP&\textit{syre}-MLP&ResNet&\textit{syre}-ResNet&ViT& \textit{syre}-ViT \\ 
        \hline
       512 &$95\pm0.1$&$97\pm1$&$18.4\pm0.7$&$19.2\pm0.9$&$80\pm1$&$84\pm2$ \\ 
       256 &$48\pm1$&$49\pm1$&$18.6\pm0.2$&$19.6\pm0.4$&$48\pm5$&$57\pm3$\\
       128 &$26\pm1$&$26\pm1$&$19.9\pm0.4$&$20.9\pm0.3$&$39\pm1$&$47\pm2$\\
         \hline
    \end{tabular}
    \caption{\update{The per-batch time (ms) of various models.}}
    \label{tab:time}
\end{table}

\begin{table}[t!]
    \centering
    \begin{tabular}{c|cc|c c |cc}
    \hline
          batchsize&MLP&\textit{syre}-MLP&ResNet&\textit{syre}-ResNet&ViT& \textit{syre}-ViT \\ 
        \hline
       512 &382&384&7020&7049&4216&4218\\
       256 &360&382&3936&4254&2482&2484\\
       128 &360&382&2096&2190&1574&1574\\
         \hline
    \end{tabular}
    \caption{\update{The GPU memory usage (M) of various models.}}
    \label{tab:memory}
\end{table}

\subsection{Linear Model Connectivity}
\begin{figure}[t!]
     \centering
     {
         \includegraphics[width=0.38\linewidth]{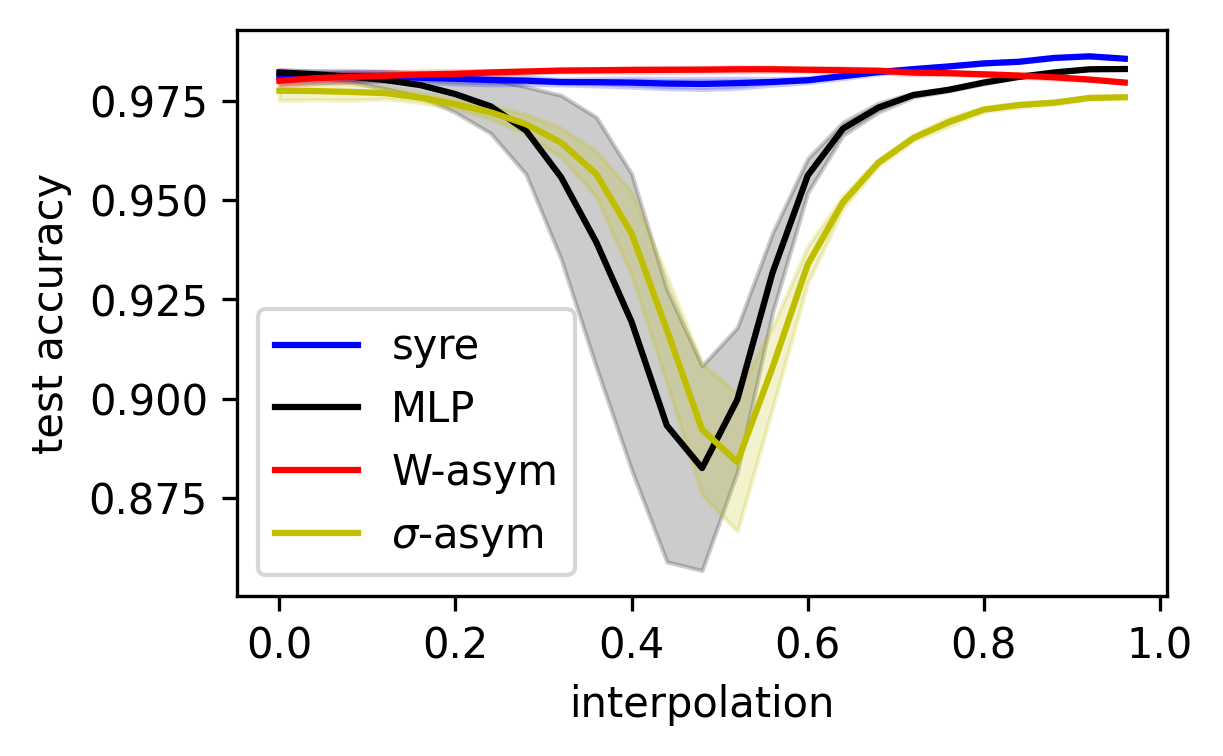}
         \includegraphics[width=0.38\linewidth]{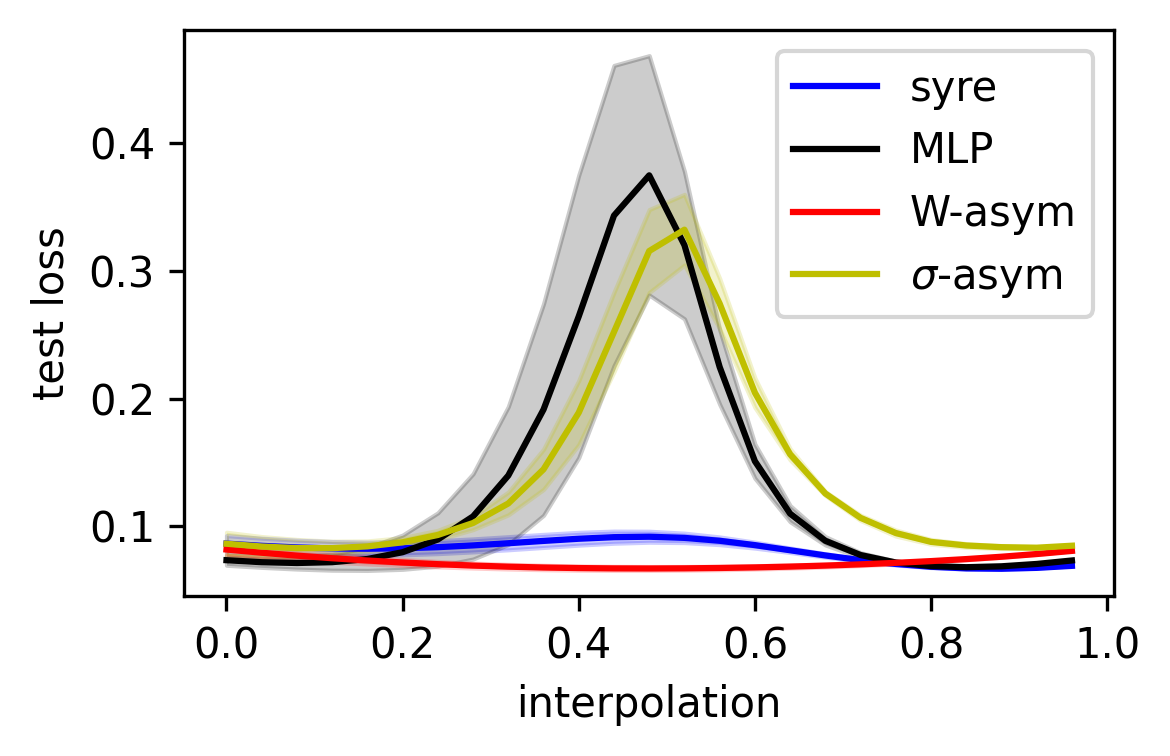}
     }
     \vspace{-1em}
     \caption{\update{Linear mode connectivity: test accuracy and loss curves along linear interpolations between trained networks. We train MLPs over the MNIST dataset.}}
 \label{fig:lmc}
 \vspace{-2mm}
\end{figure}
We also test the influence of \textit{syre} on the linear model connectivity with the same setting as \cite{lim2024empirical}. Specifically, we obtain two well-behaved MLP with parameters $\theta_1$ and $\theta_2$, and test the performance of another MLP with parameters $\lambda\theta_1+(1-\lambda)\theta_2$ for $0<\lambda<1$. By removing the permutation symmetry of MLP, we expect that the MLP with parameters $\lambda\theta_1+(1-\lambda)\theta_2$ also performs well. Figure \ref{fig:lmc} suggests that \textit{syre} performs similarly to W-asymmetric networks proposed in \cite{lim2024empirical}, while our methods have much fewer hyperparameters.

\section{Symmetry and Rank of the NTK}\label{app sec: ntk rank}

In this section, we show that the rank of the NTK and gradient reduces as more and more of the neurons are in the symmetric state (namely, the fixed point induced by the group averaging projector).

Due to the difficulty in computing the spectrum of the NTK, we restrict to a two-layer subnetwork within a large network trained on MNIST with the number of neurons $10 \to 64 \to 10$, with $1280$ many parameters. This means that the gradient second moment has dimension $1280 \times 1280$, and the model is trained on $1500$ data points. Thus, the empirical NTK can be seen as a matrix having dimension $1500 \times 1500$. Here, the symmetry projection we consider is due to the permutation symmetry, whose projection has a rank of $20$ (because every neuron has ten outgoing weights and incoming weights). Thus, according to the Propositions~\ref{prop: symmetry reduces dimension} and \ref{prop: symmetry removes feature}, the maximum possible rank for the NTK and the gradient is $1280 - 20 \times n$, where $n$ is the number of neurons projected to the symmetry fixed point. See Figure~\ref{fig:ntk}.

\begin{figure}
    \centering
    \includegraphics[width=0.3\linewidth]{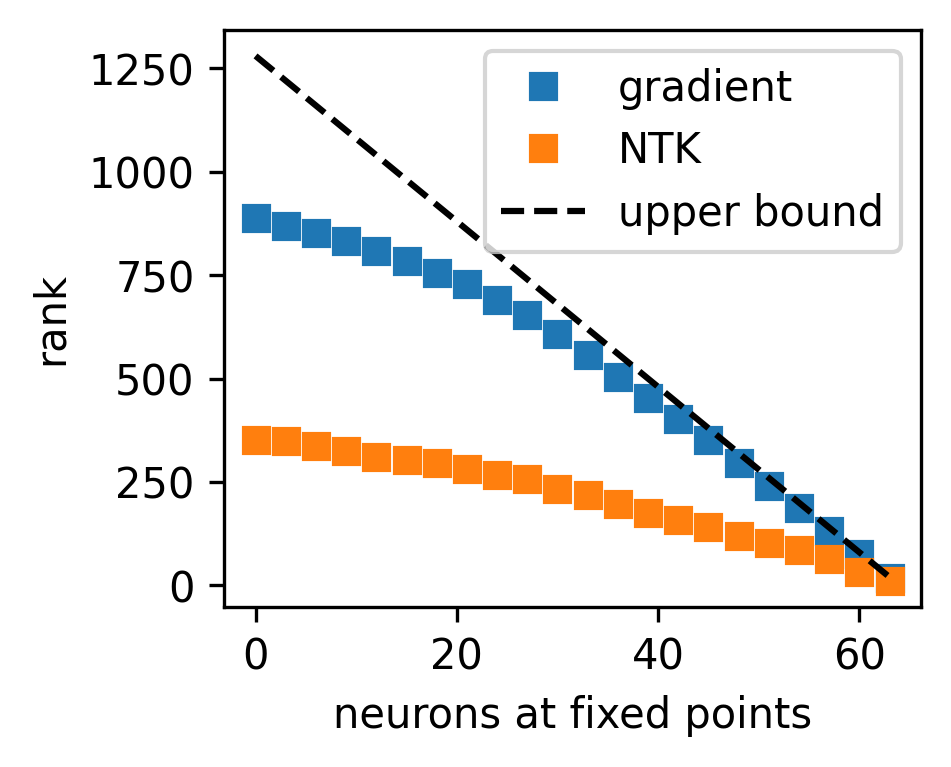}
    \includegraphics[width=0.3\linewidth]{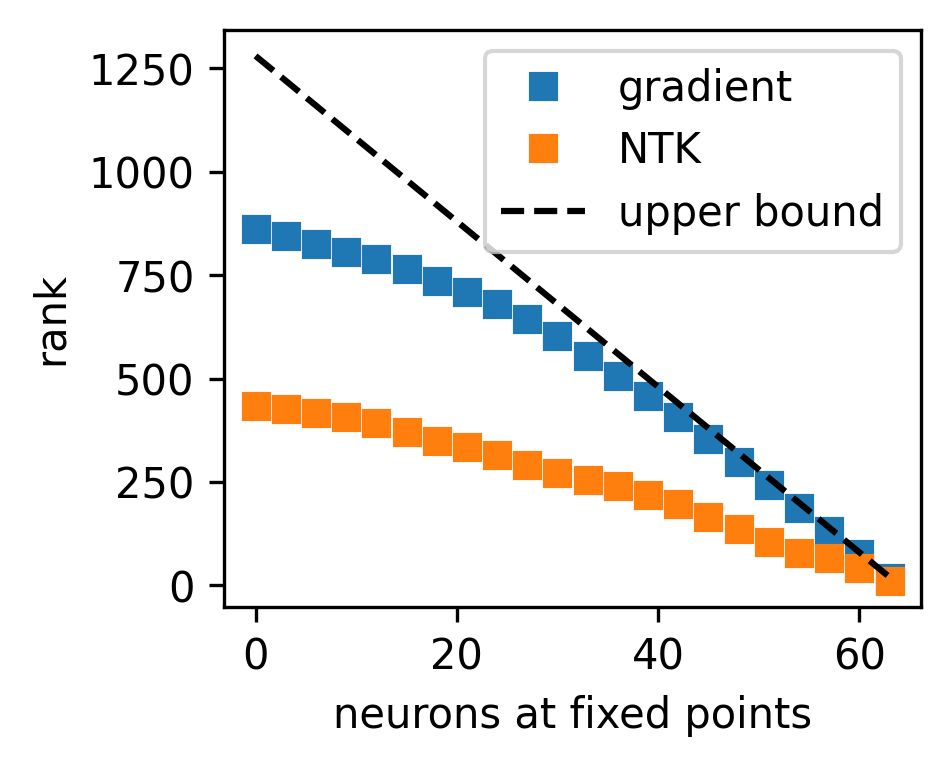}
    \caption{\update{Ranks of the gradient covariance and NTK (\textbf{Left}: beginning of training. \textbf{Right}: end of training.) against the theoretical upper bound implied by Propositions~\ref{prop: symmetry reduces dimension} and \ref{prop: symmetry removes feature}. Two observations agree with our expectation: (1) the ranks of both matrices decrease as more and more neurons are stuck at the entrapped subspace, and (2) the ranks are always upper bound by the theoretical upper bound.}}
    \label{fig:ntk}
\end{figure}
}

\end{document}